\documentclass[12pt]{article}
\RequirePackage[l2tabu, orthodox]{nag}
\usepackage{titlesec}
\titleformat*{\section}{\Large\bfseries}

\usepackage[english]{babel}
\usepackage[parfill]{parskip}
\usepackage{afterpage}
\usepackage{framed}
\usepackage{nicefrac}
\usepackage[utf8]{inputenc} 
\usepackage[T1]{fontenc}    
\usepackage{amsthm}
\usepackage{subfig}
\usepackage{thmtools}
\usepackage{algorithm, algpseudocode}%
\floatname{algorithm}{Algorithm}

\usepackage{amsthm}

\algtext*{EndFunction}
\usepackage{thm-restate}
\usepackage{multicol}
\usepackage{caption}
\usepackage{minitoc}
\usepackage{subcaption}
\usepackage[authoryear, round]{natbib}
\usepackage{graphicx}
\usepackage{hyperref}       
\usepackage{url}            
\usepackage{booktabs}       
\usepackage{amsfonts}       
\usepackage{bm}
\usepackage[title]{appendix}
\usepackage{thm-restate}
\newtheorem{defn}{Definition}
\newtheorem{assumption}{Assumption}
\newtheorem{assumptiontrick}{Assumption}
\newtheorem{proposition}[defn]{Proposition}
\usepackage[margin=1in]{geometry}
\usepackage{tcolorbox}
\usepackage{adjustbox}

\usepackage[defaultlines=3,all]{nowidow}
\newtheorem{theorem}[defn]{Theorem}
\usepackage{nicefrac}       
\usepackage{microtype}      
\usepackage{xcolor}         

\usepackage{tabularx}
\usepackage[labelfont=bf,format=plain,justification=raggedright,singlelinecheck=false]{caption}
\usepackage{enumitem}
\usepackage{subcaption}
\usepackage{stackrel}
\usepackage{mathtools}
\usepackage{authblk}

\usepackage{cleveref}

\allowdisplaybreaks

\def\bflambda{\bm \lambda}
\def\bfeta{\bm \eta}

\def\bfP{\bm P}
\def\bfS{\bm S}

\def\bfZ{{\bf Z}}


\def\NN{{\mathbb N}}    
\def\RR{{\mathbb R}}    
\def\PP{{\mathbb P}}     
\def\EE{{\mathbb E}}    

\def\11{{\mathbf 1}}    

  \def\cG{{\mathcal G}}     \def\cH{{\mathcal H}}   \def\cC{{\mathcal C}}        \def\cP{{\mathcal P}}       \def\cF{{\mathcal F}}  \def\cL{{\mathcal L}}  \def\cX{{\mathcal X}}   \def\cZ{{\mathcal Z}}


 
  \def\bfz{{\bf z}} 
  
 \def\bfQ{{\bf Q}}

\def\bfeta{\boldsymbol{\eta}}

\def\bfX{{\bf X}} \def\bfY{{\bf Y}} 

\def\bfM{{\bf M}}










\newcommand{\nocontentsline}[3]{}
\newcommand{\tocless}[2]{\bgroup\let\addcontentsline=\nocontentsline#1{#2}\egroup}
\usepackage{amsmath}

\title{Credal Two-Sample Tests of Epistemic Uncertainty}


\author[1]{\textbf{Siu Lun Chau}}
\author[2,3]{\textbf{Antonin Schrab}}
\author[3]{\textbf{Arthur Gretton}}
\author[4]{\textbf{Dino Sejdinovic}}
\author[1]{\textbf{Krikamol Muandet}}

\affil[1]{\small{Rational Intelligence Lab, CISPA Helmholtz Center for Information Security, Germany}}
\affil[2]{\small{Centre for Artificial Intelligence, University College London \& Inria London, United Kingdom}}
\affil[3]{\small{Gatsby Computational Neuroscience Unit, University College London, United Kingdom}}
\affil[4]{\small{School of Computer and Mathematical Sciences \& AIML, University of Adelaide, Australia}}



\begin{document}

\maketitle

\begin{abstract}

We introduce credal two-sample testing, a new hypothesis testing framework for comparing credal sets---convex sets of probability measures where each element captures aleatoric uncertainty and the set itself represents epistemic uncertainty that arises from the modeller's partial ignorance. 
Compared to classical two-sample tests, which focus on comparing precise distributions, the proposed framework provides a broader and more versatile set of hypotheses. This approach enables the direct integration of epistemic uncertainty, effectively addressing the challenges arising from partial ignorance in hypothesis testing. By generalising two-sample test to compare credal sets, our framework enables reasoning for equality, inclusion, intersection, and mutual exclusivity, each offering unique insights into the modeller's epistemic beliefs. As the first work on nonparametric hypothesis testing for comparing credal sets, we focus on finitely generated credal sets derived from i.i.d. samples from multiple distributions---referred to as \emph{credal samples}.
We formalise these tests as two-sample tests with nuisance parameters and introduce the first permutation-based solution for this class of problems, significantly improving existing methods. Our approach properly incorporates the modeller's epistemic uncertainty into hypothesis testing, leading to more robust and credible conclusions, with kernel-based implementations for real-world applications.
\end{abstract}

Keywords: Imprecise probabilities, hypothesis testing, credal sets, kernel methods

\section{Introduction}
\label{sec: intro}

Science is inherently inductive and thus involves uncertainties. They are commonly categorised as \emph{aleatoric uncertainty} (AU), which refers to inherent variability, and \emph{epistemic uncertainty} (EU), arising from limited information such as finite data or model assumptions \citep{hora1996aleatory}. These uncertainties often overlap and intertwine, as scientists may be epistemically uncertain about the aleatoric variation in their inquiry. Distinguishing and acknowledging them is crucial for the safe and trustworthy deployment of intelligent systems~\citep{kendall2017uncertainties,hullermeier_aleatoric_2021}, as they lead to different down-stream decisions. For example, experimental design aims to reduce EU \citep{nguyen2019epistemic, chau2021bayesimp}, while risk management uses hedging strategy to address AU \citep{mashrur2020machine}



\begin{figure}
    \centering
    \includegraphics[width=0.8\linewidth]{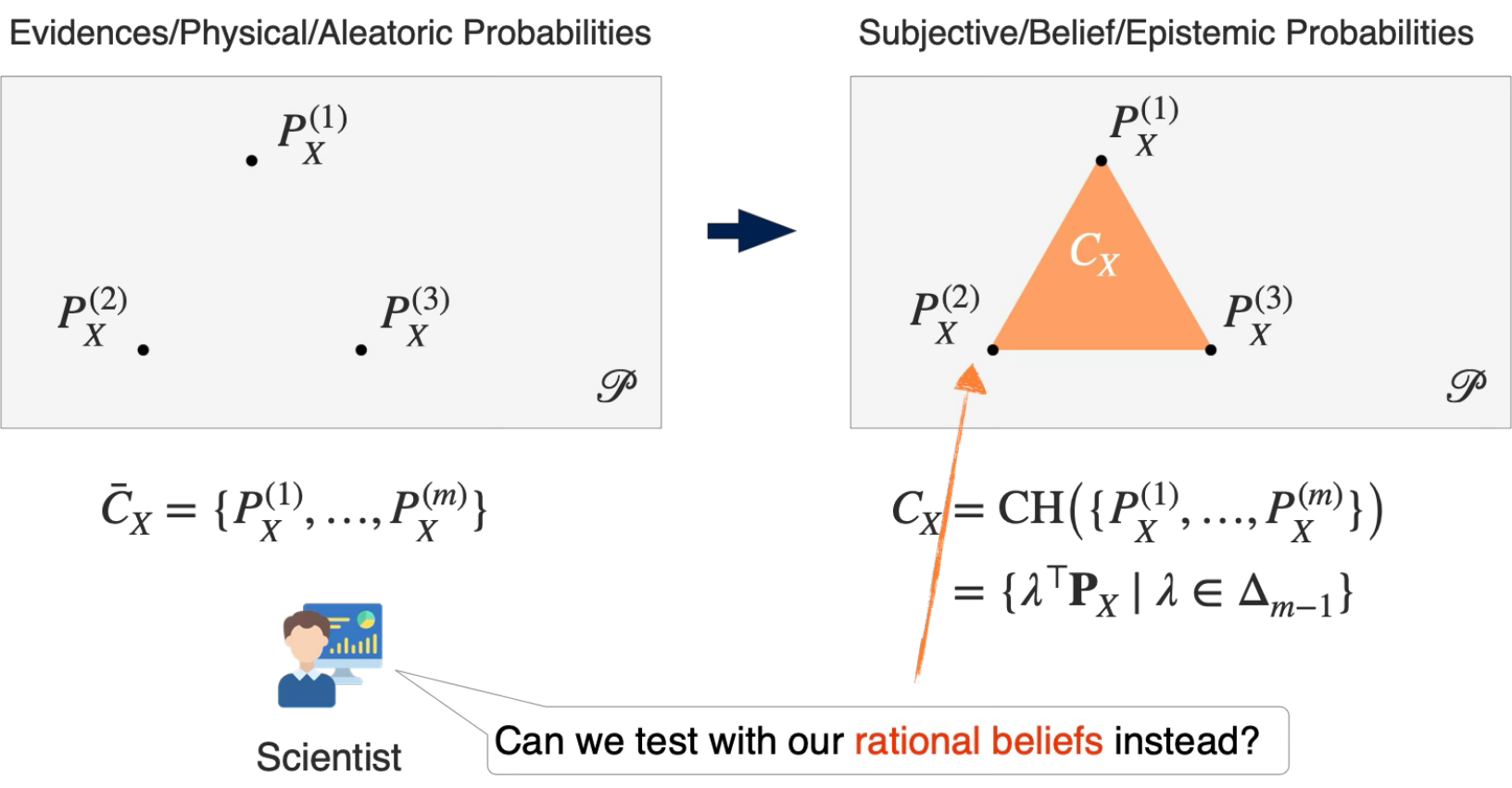}
    \caption{Motivation: From comparing precise distributions to comparing rational epistemic beliefs.}
    \label{fig:enter-label}
\end{figure}

While AU is often modelled using probability distributions, modelling EU---particularly in states of epistemic ignorance, also known as partial ignorance or incomplete knowledge~\citep{dubois1996representing}---poses greater challenges. For instance, a scientist analysing insulin levels in Germany may have data from multiple hospitals, each representing aleatoric variation as a probability distribution. However, these distributions are merely proxies for the population-level insulin distribution, which is difficult to infer due to data collection limitations. At this point, the scientist is facing what is known as a \emph{dataset-level uncertainty}. A Bayesian approach could aggregate the data based on a prior if the representativeness of each source is known, but in many cases, scientists operate under partial ignorance, lacking such prior information~\citep{bromberger1971science}. Assigning a uniform prior by following the \emph{principle of indifference}~\citep{bernoulli1713,laplace1812,keynes1921treatise} 
only reflects indifference, not epistemic ignorance. Epistemologists~\citep{elkin2017imprecise} term this challenge of calibrating belief objectively under multiple evidence as \emph{Chance Calibration}~\citep{williamson_defence_2010}. They argue rational agents ought to represent ambiguity through the convex hull of the available distributions, capturing all plausible ways to aggregate the evidence. This convex set, called a \emph{credal set}, has a robust Bayesian sensitivity analysis interpretation~\citep{berger1994overview}, as it incorporates all possible priors to represent partial ignorance.


But how can we conduct a statistical hypothesis test under such epistemic uncertainty? Suppose now that insulin data are collected from hospitals in China and Germany, serving as proxies for their populations. The World Health Organization might use a two-sample test~\citep{student1908probable} to determine if there's a significant difference between the countries. However, standard tests require comparing precise distributions, forcing the analyst to overlook EU arising from partial ignorance and relying on subjective judgements for evidence aggregation. The test’s outcome then heavily depends on their subjective choices. Alternatively, using credal sets to represent partial ignorance and comparing them would directly incorporate EU into the analysis, leading to more objective and credible conclusions. 
However, there has been no valid method for comparing sample-based credal sets under a hypothesis-testing framework.


\textbf{Our contributions.} To address this gap, we propose \emph{credal two-sample testing}, a new testing framework that introduces four null hypotheses for comparing epistemic ignorance represented as credal sets. Our null hypotheses generalise the standard two-sample null hypothesis since comparing two precise distributions is equivalent to comparing singleton credal sets. Our credal tests, however, allow for reasoning not only about equality but also inclusion, intersection, and mutual exclusivity of credal sets, offering deeper insights into imprecise beliefs~(see Figure~\ref{fig:hypotheses}). For example, the \emph{credal specification test} checks if a distribution belongs to a credal set, assessing the representativeness of EU or whether the distribution fits the evidence. The \emph{credal equality test} evaluates the consistency of belief states across evidence, while the \emph{credal inclusion test} compares ambiguity between nested credal sets, indicating which set has less uncertainty. This offers an alternative approach for uncertainty comparison given the lack of consensus on how to quantify EU for credal sets~\citep{sale2023volume}. Lastly, the \emph{credal plausibility test} checks whether two credal sets overlap, the null hypothesis which indicates some agreement, prompting further investigation to resolve ambiguity. Rejection of plausibility, on the other hand, implies that aleatoric variations exhibit a statistically significant irreconcilable difference even having taken all EU into account.

We provide valid testing procedures for each null hypothesis with minimal distributional assumptions. First, we show that all four credal tests can be formalised as precise two-sample tests involving nuisance parameters, in line with recent advances in two-sample testing~\citep{bruck2023distribution}. Next, we develop kernel-based non-parametric tests that asymptotically control Type I error (false positives) under the null hypotheses and are consistent, achieving zero Type II error (false negatives) under any fixed alternative hypothesis. Our approach extends beyond credal testing, offering a versatile framework applicable to a wider range of emerging testing problems involving nuisance parameters. Our permutation-based method empirically outperforms existing methods that rely on asymptotic normality of the studentised statistic~\citep{bruck2023distribution}.

The paper is organized as follows: Section~\ref{sec: preliminary} reviews credal sets and kernel two-sample tests. Section~\ref{sec: main_method} introduces credal two-sample tests and proves their validity. Section~\ref{sec: related_work_new} discusses related work, followed by experimental results in Section~\ref{sec: experiments}. Finally, Section~\ref{sec: discussion} explores potential applications and future directions. We also included further discussions on the philosophy and interpretation of our testing procedure in Section~\ref{appendix_Sec: further_remarks}.

Our JAX-based~\citep{jax2018github} implementation of credal tests, along with the code to reproduce the experiments, is also available\footnote{\url{https://github.com/muandet-lab/Credal2STests}}.

\begin{figure}
    \centering
    \includegraphics[width=0.95\linewidth]{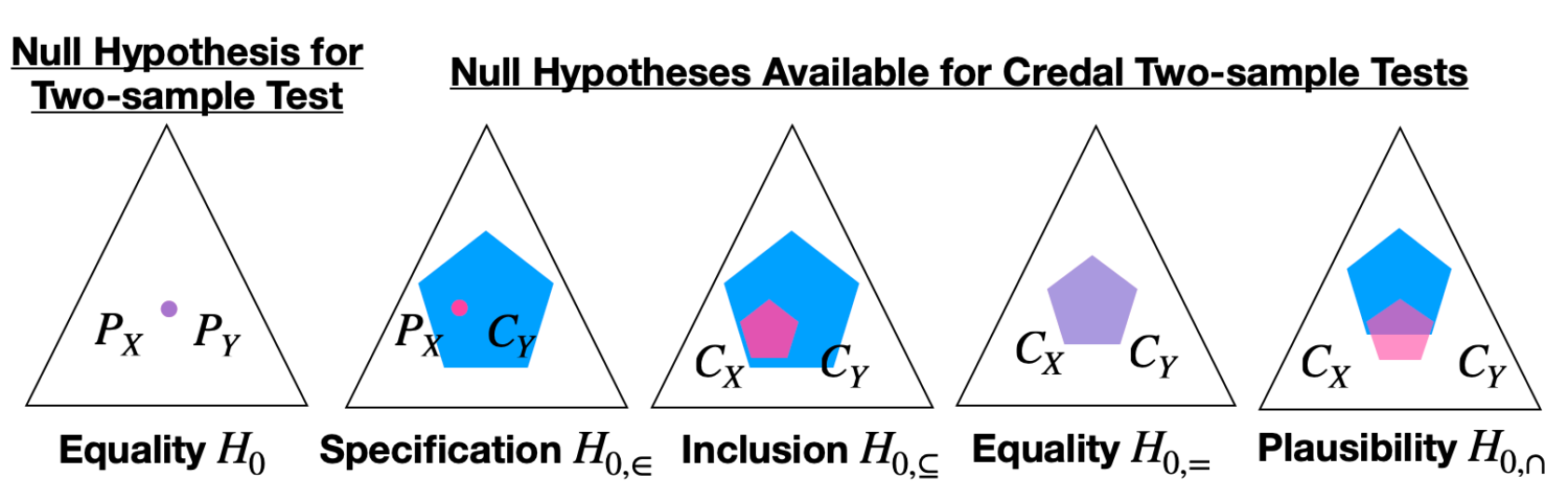}
    \caption{\small{Different comparisons between credal sets within a probability simplex with $2$ degrees of freedom.}}
    \label{fig:hypotheses}
\end{figure}



\section{Preliminaries}
\label{sec: preliminary}

Let $X$ and $Y$ be random variables defined on a topological space $\cX$, with respective probability measures $P_X, P_Y \in \cP(\cX)$, where $\cP(\cX)$ denotes the set of all probability measures on $\cX$. We denote $S_X = \{x_i\}_{i=1}^n$ and $S_Y=\{y_i\}_{i=1}^m$, each as independent and identically distributed (i.i.d.) samples from $P_X$ and $P_Y$, respectively. For multiple observations, superscripts like $X^{(j)}$ denote the random variable is from the $j^{th}$ dataset, with corresponding samples $S^{(j)}_X$ and distribution $P^{(j)}_{X}$, for $j=1,\dots,\ell$. Boldface notation represents the concatenation, e.g., $\bfX = \{X^{(j)}\}_{j=1}^\ell$, $\bfS_X = \{S^{(j)}_X\}_{j=1}^\ell$, and $\bfP_X = \{P_X^{(j)}\}_{j=1}^\ell$. The same notation applies to $Y^{(j)}$, $S_Y^{(j)}$, $P^{(j)}_{Y}$, for $j=1\dots, r$, with corresponding boldface notations $\bfY$, $\bfS_Y$, and $\bfP_Y$. We also refer to $\bfS_X, \bfS_Y$ as \textbf{credal samples}, as they are the samples we later use to construct credal sets. The probability simplex with $\ell - 1$ degree of freedom is denoted as $\Delta_\ell:=\{\bflambda \in \RR_{\geq 0}^{\ell}\mid {\bf1}^\top \bflambda = 1 \}$, $\Delta_r$ is defined analogously.


\subsection{Epistemic Uncertainty and Credal Sets}
\label{subsec: credal set}


Epistemic uncertainty (EU) is typically modelled in two ways. The first involves defining a second-order distribution in $\cP(\cP(\cX))$ to capture uncertainty about the primary distribution $P_X$. This approach is common in supervised learning, where query-label data inform the second-order distribution, reflecting model uncertainty \citep{gelman1995bayesian,kendall2017uncertainties,ulmer2021survey}. However, second-order distributions have limitations in representing partial ignorance. For example, a ``uniform'' second-order distribution fails to distinguish true ignorance from certainty with uniformly distributed beliefs \citep[Sec 5.10]{walley1991statistical}. 

In contrast, credal sets $\mathcal{C}\subseteq \cP(\cX)$, rooted in \emph{imprecise probability}~\citep{walley1991statistical}, have gained popularity for modelling EU. Credal sets can be constructed in various ways, such as through probability bounds or contamination sets~\citep{huber2011robust}, but we focus on those formed as the convex hull of a discrete set of probability distributions representing different information sources, also known as the finitely generated credal sets~\citep{augustin_introduction_2014}. Given $\bfP_X$, a credal set is modeled as $\cC_X = \{\bflambda^\top \bfP_X \mid \bflambda \in \Delta_\ell \}$, with an analogous construction for $\cC_Y$. Here, $\bfP_X$ and $\bfP_Y$ are extreme points, fully describing closed and convex credal sets via the Krein-Milman theorem \citep[Theorem 3.23]{Rudin91:FA}. Credal sets have been applied across learning algorithms to model EU, including classification \citep{zaffalon2002naive}, Bayesian networks \citep{cozman2000credal}, decision trees \citep{abellan2010ensemble,abellan2017random}, and deep learning \citep{caprio2023credal}. While less explicitly stated, credal sets are also used in domain generalisation and distributionally robust optimisation to represent epistemic ignorance about the deployment distributions \citep{mansour_multiple_2012, sagawa_distributionally_2020, foll_gated_2023}. See, also, \citet{singh2024domain} and \citet{caprio_credal_2024}.

\begin{table}[t]
    \centering
    \caption{\small{Different hypotheses to compare credal sets.}}
    \resizebox{\columnwidth}{!}{
    \begin{tabular}{cccc}
    \toprule
    \textbf{Specification} & \textbf{Inclusion} & \textbf{Equality} & \textbf{Plausibility} \\
    \midrule
     $H_{0,\in}: P_X \in \cC_Y $& $H_{0, \subseteq}: \cC_X \subseteq \cC_Y$ & $H_{0, =}: \cC_X = \cC_Y$  &${ H_{0, \cap}: \cC_X \cap \cC_Y \neq \emptyset }$\\
     $H_{A,\in}: P_X\not \in \cC_Y$ & $H_{A, \subseteq}: \cC_X \not \subseteq \cC_Y$ & $H_{A, =}: \cC_X \neq \cC_Y$ &${\small H_{A, \cap}: \cC_X \cap \cC_Y = \emptyset }$ \\
    \bottomrule
\end{tabular}}
    \label{table: hypotheses}
\end{table}

\textbf{Credal discrepancy.} Comparison of credal sets has been explored in \citet{abellan_measures_2006}, \citet{destercke_handling_2012}, and \citet{bronevich_characteristics_2017}, where they proposed various discrepancy measures between credal sets, however not under a hypothesis testing setting.
These approaches can be unified as follows: Given a statistical divergence \( d:\cP(\cX) \times \cP(\cX) \to \RR_{\geq 0} \), the \textit{degree of inclusion} of \(\cC_X \subseteq \cC_Y\) is measured as $$\operatorname{Inc}(\cC_X, \cC_Y) = \sup_{P_X\in\cC_X}\inf_{P_Y\in\cC_Y}d(P_X, P_Y).$$ The \textit{degree of equality} is the Hausdorff distance $$\operatorname{Eq}(\cC_X,\cC_Y) = \max(\operatorname{Inc}(\cC_X, \cC_Y), \operatorname{Inc}(\cC_Y, \cC_X)),$$ and the \textit{degree of intersection} is $$\operatorname{Int}(\cC_X, \cC_Y) = \inf_{P_X\in\cC_X}\inf_{P_Y\in\cC_Y}d(P_X, P_Y).$$ These measures are valid as shown by Proposition~\ref{prop: credal discrepancy}.
\begin{restatable}[]{proposition}{CredalDiscrepancy}
\label{prop: credal discrepancy}    
$\operatorname{Inc}(\cC_X,\cC_Y) = 0$ if and only if $\cC_X\subseteq \cC_Y$, $\operatorname{Eq}(\cC_X, \cC_Y) = 0$ if and only if $\cC_X = \cC_Y$, and $\operatorname{Int}(\cC_X, \cC_Y) = 0 $ if and only if $\cC_X \cap \cC_Y \neq \emptyset$.
\end{restatable}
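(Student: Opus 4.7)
The plan is to prove each of the three equivalences separately, exploiting two key structural facts: (i) $d$ is a statistical divergence, hence nonnegative and satisfies $d(P,Q)=0 \iff P=Q$; and (ii) each finitely generated credal set is the continuous image of the compact simplex $\Delta_\ell$ (respectively $\Delta_r$) under the affine map $\bflambda \mapsto \bflambda^\top \bfP$, and is therefore compact (hence closed) in any topology making mixtures continuous. Since standard choices of $d$ (e.g., Wasserstein, MMD, total variation) are lower semicontinuous in that same topology, infima over credal sets are attained.

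For the inclusion statement, the $(\Leftarrow)$ direction is immediate: if $\cC_X\subseteq\cC_Y$, then for every $P_X\in\cC_X$ we may take $P_Y=P_X\in\cC_Y$, giving $\inf_{P_Y\in\cC_Y}d(P_X,P_Y)=0$, and taking the sup over $P_X$ preserves this. For $(\Rightarrow)$, suppose $\operatorname{Inc}(\cC_X,\cC_Y)=0$. Then for every $P_X\in\cC_X$ the inner infimum vanishes; by compactness of $\cC_Y$ and (lower semi-)continuity of $d(P_X,\cdot)$ it is attained at some $P_Y^{\star}\in\cC_Y$ with $d(P_X,P_Y^\star)=0$, so the divergence property forces $P_X=P_Y^\star\in\cC_Y$, yielding $\cC_X\subseteq\cC_Y$.

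The equality statement follows at once from the inclusion statement applied in both directions, because $\operatorname{Eq}(\cC_X,\cC_Y)=0$ iff both $\operatorname{Inc}(\cC_X,\cC_Y)=0$ and $\operatorname{Inc}(\cC_Y,\cC_X)=0$ (as $\operatorname{Eq}$ is the max of two nonnegative quantities), which in turn is equivalent to $\cC_X\subseteq\cC_Y$ and $\cC_Y\subseteq\cC_X$, i.e.\ $\cC_X=\cC_Y$. For the intersection statement, $(\Leftarrow)$ is again trivial: any $P\in\cC_X\cap\cC_Y$ witnesses $d(P,P)=0$, so $\operatorname{Int}(\cC_X,\cC_Y)=0$. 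For $(\Rightarrow)$, joint compactness of $\cC_X\times\cC_Y$ and lower semicontinuity of $d$ imply the joint infimum is attained at some $(P_X^\star, P_Y^\star)$; then $d(P_X^\star, P_Y^\star)=0$ gives $P_X^\star=P_Y^\star\in\cC_X\cap\cC_Y$.

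The only real obstacle is topological: guaranteeing that the infima are attained (or, equivalently, that $\cC_Y$ is closed under limits in the topology induced by $d$). I would handle this once, up front, by noting that finitely generated credal sets are compact in the weak topology on $\cP(\cX)$, and that the divergences considered in the paper are lower semicontinuous with respect to this topology, so $\inf d = 0$ implies the infimum is attained on the compact set. Everything else is a direct consequence of the defining property of a divergence.
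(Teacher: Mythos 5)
Your proof is correct and follows essentially the same route as the paper's: the backward directions are immediate from the defining property of a divergence, the equality statement reduces to the two inclusions, and the forward directions rest on the infima over the credal sets being attained. You are in fact slightly more careful than the paper on that last point --- the paper invokes closedness of the credal set only in the intersection case and silently assumes attainment of the inner infimum in the inclusion case, whereas you justify attainment once up front via compactness of the finitely generated credal sets (continuous images of the simplex) together with lower semicontinuity of $d$, which cleanly closes that small gap.
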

All proofs in this paper are provided in Appendix~\ref{appendix_Sec: proofs}. These discrepancies inspired our testing procedures. Unlike previous work, which focused on discrete distributions or cases where the parametric form of distributions are known explicitly, we leverage the \emph{Maximum Mean Discrepancy} (cf.  Section~\ref{subsec: two-sample testing}) as the divergence $d$ to derive a kernel credal discrepancy (KCD)~(cf. Proposition~\ref{prop: credal_mmd}). KCD is nonparametric, sample-based, and applicable to a broad range of data types, including continuous data, graphs, sets, and images, making it more versatile~\citep{gartner2008kernels}. 

\subsection{Classical Kernel Two-sample Testing }
\label{subsec: two-sample testing}

A two-sample test~\citep{student1908probable} determines whether two distributions, $P_X$ and $P_Y$, differ statistically based on their respective i.i.d. samples, $S_X$ and $S_Y$. Specifically, we test for the null hypothesis $H_0: P_X = P_Y$ against the alternative $H_A: P_X \neq P_Y$. Modern approaches require minimal distributional assumptions, with notable examples such as energy-distance~\citep{SzeRiz05,baringhaus2004new,sejdinovic2013equivalence}, and kernel-based tests~\citep{gretton2006kernel,gretton2012kernel}, which form the foundation of our methods due to their simplicity and flexibility to handle various data types, including both structured and unstructured data such as graphs, strings, and images.

\textbf{Maximum mean discrepancy (MMD) \citep{gretton2006kernel,gretton2012kernel}.} 
Let $k:\cX\times\cX\to\RR$ be a real-valued positive definite kernel on $\cX$, with $\cH_k$ the corresponding reproducing kernel Hilbert space~(RKHS)~\citep{aronszajn1950theory}. Denote $\cF_k=\{f\in\cH_k\mid \|f\|_{\cH_k} \leq 1\}$ as the unit ball of $\cH_k$. The MMD between $P_X$ and $P_Y$, which serves as the test statistic for a kernel two-sample test, is defined as the integral probability metric~\citep{muller1997integral} over $\cF_k$:
\begin{align}\label{eq:MMD-org}
\MoveEqLeft\operatorname{MMD}(P_X, P_Y) \nonumber \\ 
&:= \sup_{f\in\cF_k}\left|\EE_{X
\sim P_X}[f(X)] - \EE_{Y\sim P_Y}[f(Y)]\right| \nonumber \\
&=\sup_{f\in\cF_k}\left|\langle f,\EE_{X
\sim P_X}[k(X,\cdot)] - \EE_{Y\sim P_Y}[k(Y,\cdot)]\rangle_{\cH_k}\right| \nonumber \\
&= \|\mu_{P_X} - \mu_{P_Y}\|_{\cH_k},
\end{align}
where the second equality follows from the reproducing property of $f$, i.e., $f(x) = \langle f, k(x,\cdot)\rangle_{\cH_k}$. The function $k(x,\cdot)$ can be thought of as a canonical feature map of $x$ in $\cH_k$. By taking expectation over this canonical feature map, the function $\mu_P := \EE_{X
\sim P}[k(X,\cdot)] \in \cH_k$ used in \eqref{eq:MMD-org} is known as the kernel mean embedding (KME)~\citep{smola2007hilbert,muandet2017kernel} of $P$. 
In other words, MMD measures the discrepancy between $P_X$ and $P_Y$ as the RKHS norm of the difference between their corresponding KMEs $\mu_{P_X}$ and $\mu_{P_Y}$.

For a certain class of kernel functions, known as \emph{characteristic} kernels, $\cF_k$ becomes rich enough to differentiate any two probability distributions. In this case, the MMD becomes a statistical divergence such that  $P_X = P_Y$  if and only if \( \operatorname{MMD}(P_X, P_Y) = 0 \).
\begin{defn}[\citealt{SriGreFukLanetal10,sriperumbudur2011universality}]
    A kernel $k$ is characteristic iff $P\mapsto \mu_{P}$ is injective. 
\end{defn}
The Gaussian kernel  $k(x, x^{\prime}) = \exp\left(-\|x - x^{\prime}\|^2/2\sigma^2\right)$ with  bandwidth $\sigma$ is characteristic, for example. The KME provides a flexible yet powerful representation of distributions since such representation can be estimated purely based on samples $S_X$, i.e., $$\hat{\mu}_{P_X} = \frac{1}{n}\sum_{i=1}^n k(x_i,\cdot),$$ without needing any distributional assumptions on $P_X$. The estimation is also quite statistically efficient, with $\|\frac{1}{n}\sum_{i=1}^n k(\cdot, x_i) - \mu_{P_X}\|_{\cH_k}$ converges to 0 at rate $\frac{1}{\sqrt{n}}$ under some regularity conditions on $k$, see \citet[Proposition A.1]{tolstikhin2017minimax}.

It follows from \eqref{eq:MMD-org} that the squared MMD can be expressed solely in terms of kernel evaluations, i.e., $$\operatorname{MMD}^2(P_X, P_Y) = \EE_{X,X'}[k(X,X')] - 2\EE_{X,Y}[k(X,Y)] + \EE_{Y,Y'}[k(Y,Y')],$$ with $X,X'$ distributed as $P_X$ and $Y,Y'$ as $P_Y$. 
This expression leads to an unbiased estimator, $\operatorname{MMD}^2(S_X, S_Y)$, now expressed as a function of samples instead of distributions, when $n=m$, can be compactly expressed as $$\frac{1}{n(n-1)}\sum_{i\neq j} h(x_i,y_i,x_j,y_j)$$ where $h$ is the core of the U-statistic, given by $h(x_i, y_i, x_j, y_j) = k(x_i, x_j) + k(y_i,y_j) - k(x_i, y_j) - k(x_j, y_i)$.

In practice, the test rejects the null hypothesis when the test statistic deviates significantly from zero. This is determined by comparing the test statistic to a critical value. In order to control the Type I error by $\alpha$ as desired, the critical value should be set to the $(1-\alpha)$ quantile of the distribution of the MMD statistic under the null. This quantile can be estimated using permutation to simulate this distribution under the null due to sample exchangeability, resulting in a permutation test of exact level $\alpha$~\citep[Chapter 10]{lehmann1986testing}. See \Cref{appendix subsubsec: permutation} for further discussion on permutation test.

\Cref{appendix: kernel stuff} provides additional materials for readers who are interested in kernel methods~(\ref{appendix subsec: kernel methods}), kernel mean embedding~(\ref{appendix subsec: kme}), and kernel-based testing~(\ref{appendix subsec: kernel two-sample test}). 


\section{Credal Two-sample Tests}
\label{sec: main_method}



The goal of credal two-sample tests is to compare the population-level credal sets \(\cC_X\) and \(\cC_Y\) which are the convex hulls formed by the population-level extreme points $\bfP_X$ and $\bfP_Y$, which one has access to only via the credal samples $\bfS_X$ and $\bfS_Y$. For simplicity, we assume all datasets in $\bfS_X$ and $\bfS_Y$ have the same sample size $n$, but our theory extends to the general case of different sample sizes. We begin by introducing several key foundational concepts used in the framework.

\subsection{Fundamental Concepts in Credal Tests}



\paragraph{Credal hypotheses.} The credal two-sample tests enable the comparison of credal sets under different null hypotheses, aligning with specific scientific objectives as overviewed in Section~\ref{sec: intro}. Table~\ref{table: hypotheses} outlines the null hypotheses, as visualised in Figure~\ref{fig:hypotheses}. Although these hypotheses follow a natural hierarchy (i.e., $\mathcal{C}_X = \mathcal{C}_Y \Rightarrow \mathcal{C}_X \subseteq \mathcal{C}_Y \Rightarrow \mathcal{C}_X \cap \mathcal{C}_Y \neq \emptyset$), we focus on tackling each credal hypothesis separately and on proving the validity of each individual credal test. While not the primary focus of this work, our proposed tests can be combined via multiple testing to tackle the nested hypotheses problem~\citep{bauer1987multiple}.



\paragraph{Precise tests with nuisance parameters.} Although credal discrepancies (Section~\ref{subsec: credal set}) may seem appropriate as test statistics, determining their limiting distributions, and subsequently their critical values, is challenging due to the loss of sample exchangeability under the null credal hypotheses. To address this, we formalise credal testing as a precise two-sample test involving nuisance parameters. For instance, under the specification hypothesis $H_{0,\in}$, $P_X\in\cC_Y$ holds if, and only if, there exists a plausible epistemic belief $\bfeta_0\in\Delta_r$ such that the aggregated evidence $\bfeta_0^\top\bfP_Y$ aligns with $P_X$, i.e., $\bfeta_0^\top \bfP_Y = P_X$. Here $\bfeta_0$ is the nuisance parameter, which is unknown \emph{a priori} under partial ignorance. However, credal discrepancies enable the estimation of these plausible beliefs from the available samples, leading to the following two-stage approach:

\begin{enumerate}[leftmargin=*]
    \item \textbf{Epistemic alignment (EA):} Observations from each $S_X^{(j)}, S_Y^{(j)}$ in $\bfS_X$ and $\bfS_Y$ are divided into $n_e$ samples for estimation and $n_t$ samples for testing. An optimisation process uses the $(\ell + r)n_e$ samples to identify convex weights $\bfeta^e$ and/or $\bflambda^e$, which represent plausible epistemic attitudes that align the aggregated distributions in each credal set.
    \item \textbf{Hypothesis testing (HT):} After alignment, $n_t$ samples $\tilde{S}_{Y,\bfeta^e}$ and/or $\tilde{S}_{X,\bflambda^e}$ are simulated from the aggregated distributions ${\bfeta^e}^\top\bfP_Y$ and/or ${\bflambda^e}^\top\bfP_X$, through resampling the unused samples in $\bfS_X,\bfS_Y$ from the previous step. A precise two-sample test is then performed based on these samples.
\end{enumerate}


Algorithm~\ref{algo: redraw_samples} details our resampling approach. A similar resampling-based test was studied in \citet{thams2023statistical} but they assume the weights are known while ours require estimation. \citet{key_composite_2024} uses a similar two-stage approach for composite goodness-of-fit tests, but they allow unlimited redrawing from actual distributions, whereas we are restricted to resampling from observations. These differences lead to distinct theoretical analyses and contributions. Although some frameworks~\citep{davies1987hypothesis,chen2024biased} address how estimation impacts test validity, it remains underexplored in nonparametric two-sample testing. \citet{bruck_distribution_2023} first demonstrated as long as estimation error converges at order \(\nicefrac{1}{\sqrt{n_e}}\), asymptotic normality of their proposed statistic is maintained. In contrast, our tests, using the standard kernel two-sample statistic, achieve the same asymptotic Type I error control through a permutation procedure and demonstrate higher power. Crucially, we show that adaptively splitting samples to manage the estimation error’s decay rate relative to the increase of test power is necessary to preserve Type I control, as fixed splits (e.g., 50:50) can lead to inflated Type I errors (see Figure~\ref{fig: simulation}).

\paragraph{Sample splitting.} To prepare the datasets for the EA and HT steps, we apply a standard sample-splitting procedure with a split ratio $$\rho = \frac{n_e}{n},$$ setting $n_t = n - n_e$. In Appendix~\ref{appendix_subsubsec: comparing_with_double_dipping}, we also explore an alternative approach of sample splitting considered in \citet{key_composite_2024}, referred to as ``double-dipping'', where samples used for estimation are reused for testing, and examine its effect on test validity. Other alternative approaches to sample-splitting have been studied in \citet{Kubler20:LKT-WDS,Kubler22:WTS,Kubler22:ATS}. 





\paragraph{Kernel credal discrepancy.} Our optimisation objectives are based on the MMD between credal elements, referred to as the kernel credal discrepancy (KCD).


\begin{restatable}[]{proposition}{KernelCredalDiscrepancy}
\label{prop: credal_mmd}    
    Let $k$ be a bounded kernel. For any $P_X\in\cC_X$, $P_Y\in\cC_Y$, there exists $\bflambda\in\Delta_\ell$, $\bfeta \in \Delta_r$ such that $\operatorname{MMD}^2(P_X, P_Y) = L(\bflambda, \bfeta)$ where
    $$L(\bflambda, \bfeta) = \bflambda^\top \bfM_{XX}\bflambda -2 \bflambda^\top \bfM_{XY}\bfeta + \bfeta^\top\bfM_{YY}\bfeta,$$ $\bfM_{XY} \in \RR^{\ell \times r}$ with $[\bfM_{XY}]_{ij} = \EE_i\EE_j[k(X^{(i)},Y^{(j)})]$, and $\bfM_{XX}, \bfM_{YY}$ defined analogously.    
    \vspace{-0.5em}
\end{restatable}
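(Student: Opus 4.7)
The statement is essentially an algebraic expansion of the squared MMD when both arguments are convex combinations of the extreme points of the credal sets, so the plan is mostly organisational rather than technically deep.

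First, I would unpack the definitions. Since $P_X\in\cC_X=\{\bflambda^\top \bfP_X : \bflambda\in\Delta_\ell\}$ and $P_Y\in\cC_Y=\{\bfeta^\top \bfP_Y:\bfeta\in\Delta_r\}$, there exist $\bflambda\in\Delta_\ell$ and $\bfeta\in\Delta_r$ such that $P_X=\sum_{i=1}^\ell \lambda_i P_X^{(i)}$ and $P_Y=\sum_{j=1}^r \eta_j P_Y^{(j)}$ as probability measures on $\cX$. This reduces the problem to substituting these representations into the standard squared MMD decomposition and pushing the finite sums through the integrals.

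Next, I would invoke the expression from \eqref{eq:MMD-org},
\begin{equation*}
\operatorname{MMD}^2(P_X,P_Y) = \EE_{X,X'\sim P_X}[k(X,X')] - 2\EE_{X\sim P_X,\, Y\sim P_Y}[k(X,Y)] + \EE_{Y,Y'\sim P_Y}[k(Y,Y')],
\end{equation*}
and expand each expectation term-by-term. For the cross term, for example,
\begin{equation*}
\EE_{X\sim P_X, Y\sim P_Y}[k(X,Y)] = \int\!\!\int k(x,y)\,dP_X(x)\,dP_Y(y) = \sum_{i=1}^\ell\sum_{j=1}^r \lambda_i\eta_j\, \EE[k(X^{(i)},Y^{(j)})],
\end{equation*}
which is precisely $\bflambda^\top \bfM_{XY}\bfeta$. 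The same expansion yields $\bflambda^\top\bfM_{XX}\bflambda$ and $\bfeta^\top\bfM_{YY}\bfeta$ for the first and third terms, giving $\operatorname{MMD}^2(P_X,P_Y)=L(\bflambda,\bfeta)$.

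The only technical point, and therefore the "main obstacle" (really a mild one), is justifying the interchange of the finite sums with the double integrals. Boundedness of $k$ (say $|k|\le \kappa$) ensures that $k\in L^1(P_X^{(i)}\otimes P_Y^{(j)})$ for all $i,j$, so Fubini–Tonelli applies and the linearity manipulations above are valid. I would state this explicitly once at the start of the expansion and then carry out the three computations in parallel. No optimisation, duality, or RKHS machinery beyond the definition of MMD is needed, so the proof should fit comfortably in under half a page.
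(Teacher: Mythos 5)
Your proof is correct and is essentially the same argument as the paper's: both reduce to expanding a quadratic form in the mixture weights, using boundedness of $k$ to justify exchanging the finite sums with the integrals. The only presentational difference is that the paper routes through kernel mean embeddings (first showing $\mu_{\bflambda^\top\bfP_X}=\sum_j\lambda_j\mu_{P_X^{(j)}}$ and then expanding the RKHS norm into inner products $\langle\mu_{P_X^{(i)}},\mu_{P_X^{(j)}}\rangle=\EE[k(X^{(i)},X^{(j)})]$), whereas you expand the kernel-evaluation form of $\operatorname{MMD}^2$ directly via Fubini; these are the same computation.
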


We denote $L$ as the population KCD. The matrices $\bfM_{XY}, \bfM_{XX}, \bfM_{XY}$ are the Gram matrices between kernel mean embeddings (KMEs) of the extreme points. Substituting them with their empirical counterparts $\widehat{\bfM}_{XY}, \widehat{\bfM}_{XX}$, and $\widehat{\bfM}_{YY}$, using the empirical KMEs $\hat{\mu}_{P_X^{(i)}}$ and $\hat{\mu}_{P_Y^{(j)}}$ constructed based on estimation samples, gives us the empirical KCD objective $$L_{n_e}(\bflambda,\bfeta) = \bflambda^\top\widehat{\bfM}_{XX}\bflambda - 2\bflambda^\top \widehat{\bfM}_{XY}\bfeta + \bfeta^\top \widehat{\bfM}_{YY}\bfeta,$$ where $[\widehat{\bfM}_{XY}]_{ij} = n_e^{-2}\sum_{k=1}^{n_e}\sum_{l=1}^{n_e} k(x^{(i)}_k,y^{(j)}_l)$ and $\widehat{\bfM}_{XX}, \widehat{\bfM}_{YY}$ defined analogously. \citet{briol_statistical_2019} and \citet{cherief2022finite} also utilise the MMD as a minimum distance estimator. 

Our analyses rely on the following assumptions:
\begin{assumption}
    The extreme points of the credal set are linearly independent.
\end{assumption}
\begin{assumption}
    The kernel $k$ is continuous, bounded, positive definite, and characteristic.
\end{assumption}
\begin{assumption}
    There exists some $n_0\in\NN$, such that for $n_t > n_0$, the function $\cL_{n_t}: \bflambda,\bfeta \mapsto \operatorname{MMD}^2(\tilde{S}_{X,\bflambda}, \tilde{S}_{Y,\bfeta})$ is continuous over $\Delta_\ell \times \Delta_r$ and differentiable over its interior. Furthermore, the gradient $\nabla \cL_{n_t}$ satisfies Lipschitz continuity and a technical condition $\|\nabla \cL_{n_t} - \nabla L\|_{\infty} \leq C' \|\cL_{n_t} - L\|_{\infty}$ for some constant $C'$.
\end{assumption}
\begin{assumption}
    The Schur complement $\bfM_{XX} - \bfM_{XY}\bfM_{YY}^{-1}\bfM_{YX}$ is positive definite.
\end{assumption}

Both the test statistic $\cL_{n_t}(\bflambda, \bfeta)$ and the empirical KCD $L_{n_e}(\bflambda, \bfeta)$, are estimators of $L(\bflambda, \bfeta)$. They differ as follows: $\cL_{n_t}(\bflambda, \bfeta)$ estimates KMEs of the mixture distributions directly from samples of the mixtures, while $L_{n_e}(\bflambda, \bfeta)$ estimates KMEs of the mixture distributions as mixtures of KMEs of each extreme point distribution. Assumption~\ref{assumption 0} facilitates our theoretical analysis, but even if this assumption is violated, our credal tests remain valid (c.f. Appendix~\ref{appendix subsub linearly dependent}). Assumption~\ref{assumption: 1} imposes regularity conditions on the RKHS and ensures the MMD serves as a valid divergence measure. These conditions are satisfied by commonly used kernels, such as the Gaussian kernel. Assumption~\ref{assumption: 2}, the smoothness condition, allows us to explicitly analyse the relationship between the estimation error and the test statistic. The smoothness assumption may be less reliable for small sample sizes, but it holds as sample size increases since $\cL_{n_t}$ (and $L_{n_e}$) converge uniformly to the population KCD, which is itself continuous over $\Delta_\ell \times \Delta_r$ and differentiable in its interior. Formally,


\begin{restatable}[]{proposition}{KCDUniformConvergence}
\label{prop: uniform_convergence}
Under Assumption~\ref{assumption: 1}, $L_{n_e}$ and $\cL_{n_t}$ converges uniformly to $L$ at $O(\nicefrac{1}{\sqrt{n_e}})$ and $O (\nicefrac{1}{\sqrt{n_t}}).$
\end{restatable}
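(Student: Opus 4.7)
The plan is to exploit the bilinear structure that $L$ and both of its estimators share in $(\bflambda, \bfeta)$, and to reduce uniform convergence over the compact set $\Delta_\ell \times \Delta_r$ to the standard $O_P(1/\sqrt{n})$ concentration of empirical kernel mean embeddings under a bounded kernel.

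For $L_{n_e}$, I would begin from the bilinear decomposition
\begin{align*}
L_{n_e}(\bflambda, \bfeta) - L(\bflambda, \bfeta)
&= \bflambda^\top (\widehat{\bfM}_{XX} - \bfM_{XX}) \bflambda - 2\bflambda^\top (\widehat{\bfM}_{XY} - \bfM_{XY}) \bfeta \\
&\quad + \bfeta^\top (\widehat{\bfM}_{YY} - \bfM_{YY}) \bfeta,
\end{align*}
and use $\|\bflambda\|_2 \leq \|\bflambda\|_1 = 1$ (and similarly for $\bfeta$) on the simplex to bound the supremum by the sum of operator norms of the three Gram-matrix differences. Each entry $[\widehat{\bfM}_{XY}]_{ij} - [\bfM_{XY}]_{ij}$ is a difference of RKHS inner products of empirical and population KMEs; adding and subtracting a cross term and applying Cauchy--Schwarz, together with the bound $\|\mu_P\|_{\cH_k} \leq \sqrt{\sup k}$ from boundedness of the kernel, reduces the entry-wise error to a sum of two KME estimation errors. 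The standard KME concentration bound then gives $O_P(1/\sqrt{n_e})$ per entry, and since the Gram matrices have fixed size, this lifts to operator-norm convergence at the same rate.

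For $\cL_{n_t}$ the extra difficulty is that the simulated sample $\tilde S_{X,\bflambda}$ itself depends on $\bflambda$, so the bilinear decomposition is not immediate. I would expand $\cL_{n_t}(\bflambda,\bfeta) = \|\hat\mu_{\tilde S_{X,\bflambda}} - \hat\mu_{\tilde S_{Y,\bfeta}}\|_{\cH_k}^2$ and reduce to proving
$$\sup_{\bflambda \in \Delta_\ell} \bigl\| \hat{\mu}_{\tilde{S}_{X,\bflambda}} - \mu_{\bflambda^\top \bfP_X} \bigr\|_{\cH_k} = O_P(1/\sqrt{n_t}),$$
and analogously for $Y$. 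Conditioning on the test-split samples $\{S_X^{(i),t}\}_i$, Algorithm~\ref{algo: redraw_samples} produces $\tilde S_{X,\bflambda}$ whose conditional mean embedding is $\sum_i \lambda_i \hat\mu_{S_X^{(i),t}}$, so I decompose
$$\hat\mu_{\tilde S_{X,\bflambda}} - \mu_{\bflambda^\top \bfP_X} = \Bigl( \hat\mu_{\tilde S_{X,\bflambda}} - \sum_{i} \lambda_i \hat\mu_{S_X^{(i),t}} \Bigr) + \sum_{i} \lambda_i \bigl( \hat\mu_{S_X^{(i),t}} - \mu_{P_X^{(i)}} \bigr).$$
The second (bias-type) term is bounded uniformly in $\bflambda$ by $\max_i \|\hat\mu_{S_X^{(i),t}} - \mu_{P_X^{(i)}}\|_{\cH_k} = O_P(1/\sqrt{n_t})$, since there are only finitely many components. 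The first (resampling) term has conditional mean zero and, by boundedness of $k$, uniformly bounded conditional variance, so a Hilbert-space Hoeffding/Bernstein bound on sums of bounded random elements in $\cH_k$ yields $O_P(1/\sqrt{n_t})$ at each $\bflambda$. Once this uniform KME bound is in hand, expanding $\cL_{n_t} - L$ into three inner products and reusing the boundedness argument from Part 1 closes the proof.

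The main obstacle I expect is preserving uniformity in $\bflambda$ for the resampling deviation, since a purely pointwise concentration bound does not suffice. The cleanest route is to observe that the conditional mean embedding $\sum_i \lambda_i \hat\mu_{S_X^{(i),t}}$ is linear in $\bflambda$, so the entire deviation reorganises as a finite sum of per-component centred means, each controllable uniformly via $\|\bflambda\|_1 = 1$. If the specific resampling scheme introduces $\bflambda$-dependent sub-sample sizes that break linearity, a fallback is a covering argument on the compact simplex combined with Lipschitzness of $\bflambda \mapsto \hat\mu_{\tilde S_{X,\bflambda}}$ in the RKHS norm, which still yields a $1/\sqrt{n_t}$ rate since $\Delta_\ell$ is of fixed, finite dimension.
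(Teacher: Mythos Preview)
Your treatment of $L_{n_e}$ matches the paper's proof essentially step for step: bilinear decomposition, triangle inequality, entrywise control via add-and-subtract plus Cauchy--Schwarz, boundedness of the kernel, and the standard $O(1/\sqrt{n_e})$ KME concentration, with the simplex constraint $\|\bflambda\|_1=1$ rendering the bound independent of $(\bflambda,\bfeta)$.

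For $\cL_{n_t}$ you take a more careful route than the paper. The paper writes $\cL_{n_t}(\bflambda,\bfeta)=\|\hat\mu_{\bflambda^\top\bfP_X}-\hat\mu_{\bfeta^\top\bfP_Y}\|_{\cH_k}^2$, adds and subtracts the cross term $\|\mu_{\bflambda^\top\bfP_X}-\hat\mu_{\bfeta^\top\bfP_Y}\|_{\cH_k}^2$, and bounds each resulting piece via the standard KME concentration applied to the mixture distribution at that fixed $(\bflambda,\bfeta)$; it then asserts uniformity on the grounds that ``the convergence is not affected by the arguments.'' Your approach instead conditions on the test-split samples, splits the error into a centred resampling term and a finite convex combination of per-component KME errors, and handles uniformity in $\bflambda$ explicitly via $\|\bflambda\|_1=1$ (or, as a fallback, a covering argument on the finite-dimensional simplex). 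This buys you a genuinely uniform-in-$\bflambda$ bound rather than a pointwise one with uniform constants, which is a tighter treatment of the fact that the resampled set $\tilde S_{X,\bflambda}$ itself varies with $\bflambda$; the paper's argument is shorter but leaves the passage from pointwise to uniform implicit.
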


The gradient conditions on $\nabla \cL_{n_t}$ are challenging to verify for the sample-based estimator $\cL_{n_t}$. However, these conditions can be confirmed for $L_{n_e}$ (see Proposition~\ref{prop: gradient_uniform_convergence} in Appendix~\ref{appendix_Sec: proofs}), as we have the analytical form of $L_{n_e}$. Given that both $L_{n_e}$ and $\cL_{n_t}$ are uniformly consistent estimators of $L$, it is reasonable to extend this assumption to $\cL_{n_t}$ as well. Assumption~\ref{assumption: 3} is a technical assumption to analyse the convergence rate of KCD optimisers in the plausibility test. 

\subsection{Credal Specification Hypothesis}

To lay the groundwork for testing other null hypotheses, we first introduce the specification test, which checks whether a precise distribution $P_X$ belongs to a credal set $\cC_Y$. Specifically, we test if there exists a convex weight $\bfeta_0\in\Delta_r$ such that $\bfeta_0^\top \bfP_Y = P_X$. We estimate $\bfeta_0$ by minimising the empirical KCD: $$L_{n_e}(1, \bfeta) = \bfeta^\top \widehat{\bfM}_{YY}\bfeta - 2\widehat{\bfM}_{XY}\bfeta + c, $$ where $c$ is a constant, using samples from $S_X$ and $\bfS_Y$. The argument for $\bflambda$ is set to $1$ because a single distribution corresponds to a singleton credal set. The minimiser, $\bfeta^e$, is found via quadratic cone programming~\citep{andersen2013cvxopt} given that $L_{n_e}(1,\bfeta)$ is convex in $\bfeta$ (since kernel is positive definite). We then simulate two sets of i.i.d. samples $\tilde{S}_X$ and $\tilde{S}_{Y, \bfeta^e}$, each of size $n_t$, from $P_X$ and ${\bfeta^e}^\top\bfP_Y$, and conduct the standard kernel two-sample test. Algorithm~\ref{algo: p_inclusion_test} outlines the full procedure, with supplementary algorithms provided in Appendix~\ref{appendix_sec: credal_algorithms} due to space constraints. Since our test uses an estimated parameter $\bfeta^e$ instead of $\bfeta_0$, this raises validity concerns. Nonetheless, Theorem~\ref{thm: main_theorem_h0} addresses this by showing that a carefully selected adaptive sample splitting scheme ensures the null distribution of the statistic based on $\bfeta^e$ converges to the same null distribution as the statistic based on the oracle parameter \( \boldsymbol{\eta}_0 \).

\begin{restatable}[]{theorem}{MainTheoremOne}
\label{thm: main_theorem_h0}
    Under $H_{0,\in}$ and Assumptions \ref{assumption 0}, \ref{assumption: 1} and \ref{assumption: 2}, there exists some $n_0$, such that for $n_t>n_0$, $$|n_t\cL_{n_t}(1,\bfeta^e) - n_t\cL_{n_t}(1,\bfeta_0)| = O( \sqrt{\nicefrac{n_t}{n_e}}).$$ Furthermore, if splitting ratio $\rho$ is chosen adaptively such that $n_t/n_e \to 0$ as $n\to \infty$, then  $$n_t\cL_{n_t}(1, \bfeta^e) \xrightarrow{D} \sum_{i=1}^\infty \zeta_i Z_i^2,$$
    where $\{Z_i\}_{i\geq 1} \overset{i.i.d.}{\sim} N(0,1)$ and $\{\zeta_i\}_{i\geq 1}$ are certain eigenvalues depending on the choice of kernel and $P_X$, with $\sum_{i=1}^\infty \zeta_i < \infty$. Furthermore, under $H_{A,\in}$, $$n_t\cL_{n_t}(1,\bfeta^e) \to \infty$$ as $n\to\infty$.
\end{restatable}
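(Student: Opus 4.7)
The plan proceeds in three stages: identify $\bfeta_0$ as the unique minimizer of the population objective and bound $\|\bfeta^e - \bfeta_0\|$; Taylor-expand the test statistic around $\bfeta_0$ to obtain the key $O(\sqrt{n_t/n_e})$ bound; and combine with classical MMD asymptotics via Slutsky's lemma for the limit law and the alternative-side divergence. Under $H_{0,\in}$ the oracle weight satisfies $\bfeta_0^\top\bfP_Y = P_X$, so $L(1,\bfeta_0) = \operatorname{MMD}^2(P_X,P_X) = 0$. Since $L(1,\cdot)$ is quadratic in $\bfeta$ with Hessian $2\bfM_{YY}$, which is strictly positive definite because Assumption~\ref{assumption 0} together with the characteristic property in Assumption~\ref{assumption: 1} renders the KMEs of the extreme points linearly independent, $L(1,\cdot)$ is strictly convex on $\Delta_r$ and $\bfeta_0$ is its unique minimizer. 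Because $L_{n_e}(1,\cdot)$ shares the same quadratic structure with sample Gram matrices differing from their population counterparts by $O(n_e^{-1/2})$ in operator norm (Proposition~\ref{prop: uniform_convergence}), a direct perturbation argument for the constrained quadratic programme yields $\|\bfeta^e - \bfeta_0\| = O(n_e^{-1/2})$.

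For the main bound, the smoothness of $\cL_{n_t}(1,\cdot)$ provided by Assumption~\ref{assumption: 2} (for $n_t>n_0$) licenses the Taylor expansion
\begin{equation*}
\cL_{n_t}(1,\bfeta^e) - \cL_{n_t}(1,\bfeta_0) = \nabla\cL_{n_t}(1,\bfeta_0)^\top(\bfeta^e - \bfeta_0) + O\!\bigl(\|\bfeta^e - \bfeta_0\|^2\bigr).
\end{equation*}
The population first-order condition gives $\nabla L(1,\bfeta_0)=0$, and the nonstandard Lipschitz-style condition $\|\nabla\cL_{n_t}-\nabla L\|_\infty \leq C'\|\cL_{n_t}-L\|_\infty$ in Assumption~\ref{assumption: 2} combined with Proposition~\ref{prop: uniform_convergence} forces $\|\nabla\cL_{n_t}(1,\bfeta_0)\| = O(n_t^{-1/2})$. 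Substituting the parameter rate yields $|\cL_{n_t}(1,\bfeta^e) - \cL_{n_t}(1,\bfeta_0)| = O((n_t n_e)^{-1/2}) + O(n_e^{-1})$; multiplication by $n_t$ gives $O(\sqrt{n_t/n_e}) + O(n_t/n_e)$, whose leading term is the claimed $O(\sqrt{n_t/n_e})$ since $n_t/n_e \leq \sqrt{n_t/n_e}$ once $n_t/n_e \to 0$.

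Under the adaptive splitting $n_t/n_e\to 0$, this bound is $o(1)$, so by Slutsky's lemma $n_t\cL_{n_t}(1,\bfeta^e)$ inherits the limit of $n_t\cL_{n_t}(1,\bfeta_0)$. The latter is the standard MMD$^2$ U-statistic applied to two independent samples drawn from the common distribution $P_X = \bfeta_0^\top\bfP_Y$, whose null weighted-chi-square asymptotics $\sum_{i\geq 1}\zeta_i Z_i^2$ with $\sum_i\zeta_i<\infty$ are classical \citep{gretton2012kernel}. Under $H_{A,\in}$, compactness of $\Delta_r$, continuity of $L(1,\cdot)$, and the characteristic property of $k$ force $\delta := \inf_{\bfeta\in\Delta_r}L(1,\bfeta) > 0$; the uniform convergence of both $L_{n_e}$ and $\cL_{n_t}$ then yields $\cL_{n_t}(1,\bfeta^e) \to L(1,\bfeta^*) \geq \delta$ in probability, and multiplication by $n_t$ drives the statistic to infinity.

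The main obstacle is controlling the linear term in the Taylor expansion: it needs both that $\bfeta_0$ is a stationary point of the \emph{population} loss---so that $\nabla\cL_{n_t}(1,\bfeta_0)$ inherits an $O(n_t^{-1/2})$ sampling fluctuation rather than being $O(1)$---and that the gradient converges at the same rate as the function itself, which is precisely what the unusual Lipschitz condition in Assumption~\ref{assumption: 2} is designed to supply. A secondary subtlety is boundary behaviour on $\Delta_r$: if $\bfeta_0$ lies on a proper face of the simplex, the unconstrained stationarity $\nabla L(1,\bfeta_0)=0$ must be replaced by KKT conditions and the expansion restricted to the active face, which is correctly identified once $n_e$ is sufficiently large by the consistency of $\bfeta^e$.
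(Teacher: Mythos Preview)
Your proposal is correct and follows essentially the same route as the paper: establish $\|\bfeta^e-\bfeta_0\|=O(n_e^{-1/2})$ via the strongly convex quadratic structure, linearise $\cL_{n_t}$ around $\bfeta_0$, use $\nabla L(1,\bfeta_0)=0$ together with the gradient condition in Assumption~\ref{assumption: 2} to make the linear term $O((n_tn_e)^{-1/2})$, and finish with Slutsky and a compactness argument under the alternative. The only cosmetic differences are that the paper uses the mean value theorem (gradient at an intermediate $\tilde\bfeta$) rather than a second-order Taylor expansion at $\bfeta_0$, and proves the alternative via a subsequence contradiction rather than your direct $\inf_{\bfeta}L(1,\bfeta)>0$ argument; your observation about boundary $\bfeta_0$ and KKT conditions is a subtlety the paper does not address.
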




The key intuition behind the theorem is that, under the null, although the estimation error $\|\bfeta^e - \bfeta_0\|$ decreases as the sample size increases, the test statistic also converges to $0$ as the sample size increases. The effect on the difference between the test statistics based on the estimated parameter and the oracle parameter, i.e. $|n_t\cL_{n_t}(1,\bfeta^e) - n_t\cL_{n_t}(1,\bfeta_0)|$, does not converge for any fixed splitting ratio $\rho$ under the null hypothesis.
To mitigate this, we split our sample adaptively such that $\nicefrac{n_t}{n_e} \to 0$ as $n\to\infty$, making the estimation error converge faster than the test statistic's decay (see also \citep{pogodin24conditional} for case of conditional independence testing, where regression is required in computing the test statistic). Consequently, using Slutsky's theorem, we can show the (scaled) test statistic converges in distribution to the same limit~\citep[Theorem 12]{gretton2012kernel} as if the true parameter $\bfeta_0$ were known (see Appendix~\ref{appendix_subsubsec: null_dist_specification} for the effect of estimation demonstrated using empirical null statistic distributions). Consequently, we can still deploy the permutation procedure to determine the critical values, maintaining correct Type I error control, asymptotically. In short, the choice of adaptive sample splitting balances the trade-off between minimising estimation error and preserving test power, ensuring the test is not overly sensitive to minor estimation inaccuracies while still capable of detecting true differences. Furthermore, the consistency of the test under the alternative $H_{A,\in}$ shows that our test can always reject the null hypothesis which does not hold given enough data.

\subsection{Credal Inclusion and Equality Hypotheses }

To verify the inclusion hypothesis $H_{0,\subseteq}: \cC_X \subseteq \cC_Y$, it suffices to check whether all extreme points $P_X$ of $\cC_X$ lie within $\cC_Y$. 
This insight forms the basis for the testing procedure outlined in Algorithm~\ref{algo: s_inclusion_test}, which relies on conducting multiple specification tests. For simplicity, Algorithm~\ref{algo: s_inclusion_test} employs Bonferroni correction~\citep{weisstein2004bonferroni}, but more advanced multiple testing correction techniques~\citep{schrab2023mmd} can be directly employed. Similarly, for testing equality between credal sets $\cC_X = \cC_Y$, we only need to verify whether both $\cC_X \subseteq \cC_X$ and $\cC_Y\subseteq \cC_X$ hold, leading to Algorithm~\ref{algo: equality_test}. As each specification test is proven to be asymptotically valid, Bonferroni correction yields a valid asymptotic Type I control for both inclusion and equality tests.

\begin{algorithm}[t]
\caption{\texttt{specification\_test} for $H_{0,\in}$}
\label{algo: p_inclusion_test}
\begin{algorithmic}[1]
    \Require Sample $S_X$, credal sample $\bfS_Y$, kernel $k$, split ratio $\rho$, level $\alpha$, number of simulated statistics $B$.
    \State $S_X^e, \mathbf{S}^e_Y, S^t_X, \mathbf{S}^t_Y \leftarrow \texttt{split\_data}(\{S_X\}, \bfS_Y, \rho)$.
    \State $\bfeta^e = \arg\min_{\bfeta\in\Delta_{r}} L_{n_e}(1, \bfeta)$, where the objective is constructed using the estimation samples $S_X^e, \mathbf{S}^e_Y$.
    \State $\tilde{S}_X, \tilde{S}_{Y, \eta^e} \leftarrow \texttt{redraw\_samples}(\{S^t_X\}, \bfS^t_Y, 1, \bfeta^e)$.
    \State \Return $\texttt{kernel\_2S\_test}(\tilde{S}_X, \tilde{S}_{Y, \eta^e}, k, B, \alpha).$
\end{algorithmic}
\end{algorithm}

\begin{algorithm}[t]
\caption{\texttt{inclusion\_test} for $H_{0,\subseteq}$}
\label{algo: s_inclusion_test}
\begin{algorithmic}[1]
    \Require Credal samples $\bfS_X$,$\bfS_Y$, kernel $k$, split ratio $\rho$, test level $\alpha$, number of simulated statistics $B$.
    \For{$S_X \in \bfS_X$}
    \State r $\leftarrow$ \texttt{specification\_test}$(S_X, \bfS_Y,  k,\rho, \frac{\alpha}{\ell}, B)$
    \State if r = ``reject'', \Return ``reject''
    \EndFor \ and \Return ``fail to reject''
\end{algorithmic}
\end{algorithm}

\begin{algorithm}[t]
\caption{\texttt{equality\_test} for $H_{0,=}$}
\label{algo: equality_test}
\begin{algorithmic}[1]
    \Require Credal samples $\bfS_X$,$\bfS_Y$, kernel $k$, split ratio $\rho$, test level $\alpha$, number of simulated statistics $B$.
    \State result$_1$ $\leftarrow$ \texttt{inclusion\_test}$(\bfS_X,\bfS_Y,  k, \rho,\frac{\alpha}{2}, B)$
    \State result$_2$ $\leftarrow$ \texttt{inclusion\_test}$(\bfS_Y,\bfS_X, k, \rho,\frac{\alpha}{2}, B)$
    \State \Return ``reject'' if either result rejects, else \Return ``fail to reject''.
\end{algorithmic}
\end{algorithm}

\begin{algorithm}[t]
\caption{\texttt{plausibility\_test} for $H_{0,\cap}$}
\label{algo: intersection_test}
\begin{algorithmic}[1]
    \Require Credal samples $\bfS_X$, $\bfS_Y$, kernel $k$, split ratio $\rho$, test level $\alpha$, number of simulated statistics $B$.
    \State $\bfS_X^e, \mathbf{S}^e_Y, \bfS^t_X, \mathbf{S}^t_Y \leftarrow \texttt{split\_data}(\bfS_X, \bfS_Y, \rho)$.
    \State Estimate $\bflambda^e, \bfeta^e = \arg\min_{\bflambda\in\Delta_\ell, \bfeta\in\Delta_{r}} L_{n_e}(\bflambda, \bfeta)$ using the estimation samples $\bfS_X^e, \mathbf{S}^e_Y$.
    \State $\tilde{S}_{X, \bflambda^e}, \tilde{S}_{Y, \bfeta^e} \leftarrow \texttt{redraw\_samples}o(\bfS^t_X, \bfS^t_Y, \bflambda^e, \bfeta^e)$.
    \State $\text{r} \leftarrow \texttt{kernel\_2S\_test}(\tilde{S}_{X,\bflambda^e}, \tilde{S}_{Y, \bfeta^e}, k, B, \alpha)$
    \State \Return $\text{r}$
\end{algorithmic}
\end{algorithm}

\subsection{Credal Plausibility Hypothesis}

The plausibility hypothesis \(H_{0,\cap}: \cC_X\cap\cC_Y \neq \emptyset\) holds if there exist convex weights \(\bflambda_0\in\Delta_r\) and \(\bfeta_0\in\Delta_\ell\) such that the aggregated distributions epistemically align, i.e., \(\bflambda_0^\top\bfP_X = \bfeta_0^\top\bfP_Y\). To estimate \(\bflambda\) and \(\bfeta\), we minimise the empirical KCD: $$\bflambda^e, \bfeta^e = \arg\min_{\bflambda\in\Delta_\ell,\bfeta\in\Delta_r} L_{n_e}(\bflambda,\bfeta).$$ Unlike in the specification test, this optimisation is \emph{biconvex}: convex in \( \boldsymbol{\lambda} \) when \( \boldsymbol{\eta} \) is fixed and vice versa, but not jointly convex. Therefore, we solve it using iterative coordinate descent, alternately minimising with respect to \(\bflambda\) and \(\bfeta\). Since the problem is convex in each variable when the other is fixed, convergence to a local minimum is guaranteed~\citep{boyd2004convex}. Once the parameters are estimated, we simulate samples \(\tilde{S}_{X,\bflambda^e}\) and \(\tilde{S}_{Y,\bfeta^e}\) from \({\bflambda^e}^\top \bfP_X\) and \({\bfeta^e}^\top \bfP_Y\), respectively, and perform a two-sample test. The full procedure is detailed in Algorithm~\ref{algo: intersection_test}.

To prove the validity of our plausibility test, in addition to the estimation error, another challenge is the non-convexity of the objective, which can lead to multiple minimisers and no unique solution. For example, when testing the plausibility between identical credal sets, any point in the joint simplex $\Delta_\ell \times \Delta_r$ minimises the population KCD, meaning the sequence of estimators may not converge as the sample size increases. Moreover, iterative coordinate descent only guarantees local minima, which may prevent identifying the correct weights \(\bflambda_0\) and \(\bfeta_0\) needed for the null hypothesis to hold, even with access to the population KCD. Despite these difficulties, the test achieves asymptotic Type I error control, as proven in Theorem~\ref{thm: main_theorem_2}. This is because the objective $L_{n_e}$ uniformly converges to $L$ over a compact domain, any minimiser of the limiting objective is also a minimiser of the population objective, i.e., $\lim_{n_e \to \infty} \arg\min L_{n_e} \subseteq \arg\min L$~\citep[Theorem 2]{kall1986approximation}. Furthermore, under Assumption~\ref{assumption: 1}, it can be proven that any local minimiser of $L$ is a global minimiser (see Proposition~\ref{prop: local_is_global}), enabling our procedure to identify a pair of convex weights that satisfy the null hypothesis. As a result, as the sample size increases, the uniform convergence of the KCD objective ensures that the sequence of estimators, while alternating, increasingly approach their corresponding accumulation points in the solution set of $L$. As a result, by adjusting the splitting ratio adaptively, as in the other tests, the impact of estimation error on the scaled test statistic diminishes, and again we can use the permutation procedure to estimate a valid critical value as if we had access to a certain pair of oracle parameters. We now state the theorem formally.

\begin{restatable}[]{theorem}{MainTheoremTwo}
    \label{thm: main_theorem_2}
    Let \(\Theta =\arg\min_{\bflambda\in\Delta_\ell, \bfeta\in\Delta_r} L(\bflambda,\bfeta)\) and \(\cZ = \{\sum_{i=1}^\infty \zeta_{i, \bflambda,\bfeta}Z_i^2 \mid (\bflambda,\bfeta) \in \Theta\}\) with $Z_i\overset{i.i.d.}{\sim}N(0,1)$ and constants \(\{\zeta_{i,\bflambda,\bfeta}\}_{i\geq 1}\) depends on the kernel and weights. Under $H_{0,\cap}$ and Assumptions~\ref{assumption 0},~\ref{assumption: 1},~\ref{assumption: 2}, and~\ref{assumption: 3}, there exists some $n_0$, such that for $n_t > n_0$, there exists $\bflambda_0,\bfeta_0 \in \Theta$, such that $$|n_t \cL_{n_t}(\bflambda^e,\bflambda^e) - n_t \cL_{n_t}(\bflambda_0,\bfeta_0)| = O(\sqrt{\nicefrac{n_t}{n_e}}).$$ Furthermore, if the split ratio $\rho$ is chosen adaptively such that $\nicefrac{n_t}{n_e} \to 0$ as $n\to\infty$, then for all $\epsilon > 0$, there exists some $n_1$, such that for all $n_t > n_1$, there exists $Z\in\cZ$ such that $$|F_{n_t \cL_{n_t}(\bflambda^e,\bfeta^e)}(x) - F_{Z}(x)| < \epsilon$$ for all $x\in\RR$ where $F$ is the cumulative distribution function. Furthermore, under $H_{A,\cap}$, $$n_t\cL_{n_t}(\bflambda^e, \bfeta^e) \to \infty$$ as $n\to\infty$.
\end{restatable}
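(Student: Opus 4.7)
The plan is to extend the argument used for Theorem~\ref{thm: main_theorem_h0} to accommodate the biconvex optimisation and the non-uniqueness of minimisers in $\Theta$. The strategy has three stages: (i) show that the estimators $(\bflambda^e, \bfeta^e)$ approach some accumulation point in $\Theta$ at a controlled rate; (ii) bound the discrepancy between $n_t\cL_{n_t}(\bflambda^e, \bfeta^e)$ and $n_t\cL_{n_t}(\bflambda_0, \bfeta_0)$ for such an accumulation point via a Taylor expansion; (iii) apply Slutsky's theorem under the adaptive splitting to obtain the stated $\epsilon$-convergence of the CDF to some element of $\mathcal{Z}$.

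For stage (i), I would combine the uniform convergence of $L_{n_e}$ to $L$ (Proposition~\ref{prop: uniform_convergence}) with the epi-convergence result of Kall (1986, Theorem~2) referenced in the discussion preceding the theorem. Since iterative coordinate descent on the biconvex objective $L_{n_e}$ produces a local minimiser, and since Proposition~\ref{prop: local_is_global} ensures every local minimiser of $L$ is global, the sequence $(\bflambda^e, \bfeta^e)$ has all accumulation points inside $\Theta$. To turn this into a rate, I would use Assumption~\ref{assumption: 3} together with the positive definiteness of the Schur complement in Assumption~\ref{assumption: 3} to establish a quadratic growth condition of $L$ around $\Theta$, which upgrades uniform convergence into $\operatorname{dist}((\bflambda^e, \bfeta^e), \Theta) = O(1/\sqrt{n_e})$ by standard M-estimation arguments.

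For stage (ii), pick $(\bflambda_0, \bfeta_0) \in \Theta$ as the projection of $(\bflambda^e, \bfeta^e)$ onto $\Theta$, so that $\|(\bflambda^e, \bfeta^e) - (\bflambda_0, \bfeta_0)\| = O(1/\sqrt{n_e})$. A first-order Taylor expansion of $\cL_{n_t}$ around $(\bflambda_0, \bfeta_0)$ combined with Lipschitz continuity of $\nabla \cL_{n_t}$ (Assumption~\ref{assumption: 2}) yields
\begin{align*}
\cL_{n_t}(\bflambda^e, \bfeta^e) - \cL_{n_t}(\bflambda_0, \bfeta_0)
&= \langle \nabla \cL_{n_t}(\bflambda_0, \bfeta_0), (\bflambda^e - \bflambda_0, \bfeta^e - \bfeta_0)\rangle + O(1/n_e).
\end{align*}
Under $H_{0,\cap}$, KKT optimality of $(\bflambda_0, \bfeta_0)$ for $L$ together with the technical bound $\|\nabla \cL_{n_t} - \nabla L\|_\infty \leq C'\|\cL_{n_t} - L\|_\infty$ from Assumption~\ref{assumption: 2}, and the $O(1/\sqrt{n_t})$ uniform rate for $\cL_{n_t} - L$ from Proposition~\ref{prop: uniform_convergence}, give $\|\nabla \cL_{n_t}(\bflambda_0,\bfeta_0)\| = O(1/\sqrt{n_t})$ in the directions tangent to the simplex interior. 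Multiplying by $n_t$ then yields $|n_t\cL_{n_t}(\bflambda^e, \bfeta^e) - n_t\cL_{n_t}(\bflambda_0, \bfeta_0)| = O(\sqrt{n_t/n_e}) + O(n_t/n_e)$, which under adaptive splitting with $n_t/n_e \to 0$ tends to zero in probability.

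For stage (iii), since $\bflambda_0^\top \bfP_X = \bfeta_0^\top \bfP_Y$ under the null, the samples $\tilde{S}_{X,\bflambda_0}$ and $\tilde{S}_{Y,\bfeta_0}$ arise from a common distribution, so Theorem~12 of \citet{gretton2012kernel} gives $n_t\cL_{n_t}(\bflambda_0, \bfeta_0) \xrightarrow{D} \sum_{i\geq 1}\zeta_{i,\bflambda_0,\bfeta_0} Z_i^2 \in \mathcal{Z}$. Slutsky's theorem combined with stage (ii) then yields the $\epsilon$-closeness of CDFs rather than weak convergence to a single limit, reflecting that the accumulation point may depend on the sampling path. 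For the alternative $H_{A,\cap}$, compactness of $\Delta_\ell \times \Delta_r$ and continuity of $L$ imply $\inf L > 0$, so $\cL_{n_t}(\bflambda^e, \bfeta^e)$ converges to a positive constant almost surely, hence $n_t\cL_{n_t}(\bflambda^e,\bfeta^e) \to \infty$. The main obstacle will be controlling the rate at which $(\bflambda^e, \bfeta^e)$ approaches $\Theta$ despite non-uniqueness: establishing quadratic growth of $L$ transverse to the solution set, which ultimately justifies why Assumption~\ref{assumption: 3} on the Schur complement is imposed, is the delicate part that distinguishes this result from Theorem~\ref{thm: main_theorem_h0}.
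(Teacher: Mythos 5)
Your three-stage plan is essentially the paper's own proof: the paper likewise shows (via epi-convergence and Proposition~\ref{prop: plausibility_estimator_convergence}) that the local minimiser $(\bflambda^e,\bfeta^e)$ lies within $O(\nicefrac{1}{\sqrt{n_e}})$ of some $\theta_0\in\Theta$, applies the mean value theorem with Cauchy--Schwarz and Assumption~\ref{assumption: 2} to obtain the $O(\sqrt{\nicefrac{n_t}{n_e}})$ bound, invokes \citet[Theorem 12]{gretton2012kernel} at $\theta_0$ together with a triangle inequality on CDFs for the $\epsilon$-closeness, and recycles the specification-test consistency argument for the alternative (your more direct version --- compactness and continuity give $\min L>0$, then uniform convergence --- is in fact cleaner than the paper's subsequence contradiction).

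The one step that does not deliver what you claim is stage (i). Quadratic growth of $L$ around $\Theta$ combined with the sup-norm rate $\|L_{n_e}-L\|_\infty=O(\nicefrac{1}{\sqrt{n_e}})$ gives, by the standard M-estimation inequality $c\cdot\operatorname{dist}(\theta^e,\Theta)^2\leq L(\theta^e)-\min L\leq 2\|L_{n_e}-L\|_\infty$, only $\operatorname{dist}(\theta^e,\Theta)=O(n_e^{-1/4})$. Propagated through stage (ii) this degrades the bound to $O\bigl((\nicefrac{n_t^2}{n_e})^{1/4}\bigr)$ and would require the stronger splitting condition $\nicefrac{n_t^2}{n_e}\to 0$. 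The paper obtains the full $O(\nicefrac{1}{\sqrt{n_e}})$ rate differently: it uses the first-order optimality condition $\nabla L_{n_e}(\theta^e)=0$, a Taylor expansion of the \emph{gradient} about $\theta_0$, invertibility of the Hessian block matrix guaranteed by the Schur-complement condition of Assumption~\ref{assumption: 3}, and the $O(\nicefrac{1}{\sqrt{n_e}})$ convergence of $\nabla L_{n_e}$ to $\nabla L$ from Proposition~\ref{prop: gradient_uniform_convergence}. Replacing your quadratic-growth step with this gradient-level Newton-type argument closes the gap; the rest of your outline then goes through as in the paper.
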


\section{Related Work}
\label{sec: related_work_new}


Several works have introduced imprecision in hypothesis testing through set-based approaches or second-order distributions. \citet{bellot_kernel_2021} focused on testing the equality of second-order distributions, while we address first-order distributions with second-order uncertainties represented by credal sets. Other studies~\citep{kutterer2004statistical,liu2020novel} used fuzzy theory to handle measurement imprecision. Bayesian two-sample tests~\citep{holmes2015two,zhang2022bayesian} account for EU about the hypothesis with Bayes factors, while we compare EU about the distributions through credal sets. \citet{hibshman2021higher} used a single credal set in parametric likelihood ratio tests, and \citet{mortier2023calibration} developed a calibration test to assess whether the credal set generated from classification models are well-calibrated. However, neither addresses credal set comparison purely based on samples. Our specification test can also be seen as a hypothesis test for finite mixture models~\citep{aitkin_estimation_1985}, determining if a sample belongs to a mixture of distributions. Unlike traditional one-sample goodness-of-fit tests designed for parametric mixtures~\citep{li2007hypothesis,wichitchan_hypothesis_2019}, we infer mixing distributions directly from samples, opening new research avenues.  

To the best of our knowledge, our work provides the first fully nonparametric method for statistically comparing epistemic uncertainties using credal sets.

\begin{figure*}[t]
    \centering
    \includegraphics[width=0.95\linewidth]{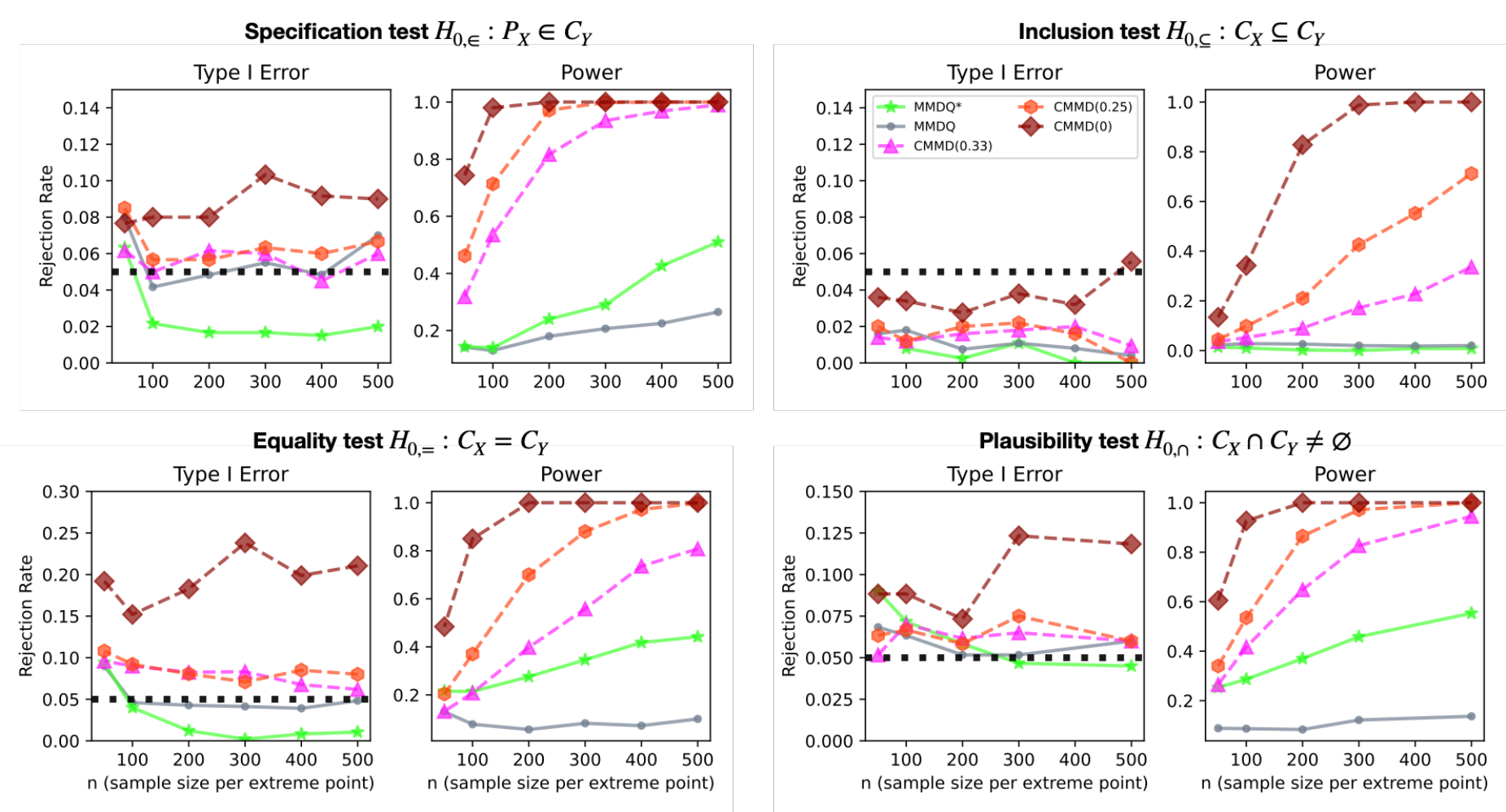}
    \caption{\small{We present the experimental results of our credal tests (labelled as CMMD) on synthetic data at a $0.05$ significance level (black dotted line). CMMD$(0)$ uses fixed sample splitting and fails to control Type I error, rendering it invalid. \textbf{It is included in the power plot for completeness but should not be compared to other valid tests.}}}
    \label{fig: simulation}
\end{figure*}

\section{Experiments}
\label{sec: experiments}

This section shows that our tests are valid and powerful using synthetic data. Semi-synthetic experiments based on MNIST data and detailed ablation studies are provided in Appendix~\ref{appendix_sec: further_experiments}, including larger-scale experiments, the impact of the number of credal samples and sample splitting ratio, comparisons with the double-dipping sample preparation, and sensitivity to violation of Assumption~\ref{assumption 0}.

\textbf{Benchmarking.} As no previous work has compared credal sets in a hypothesis testing framework, we benchmark our method against existing kernel two-sample tests that handle nuisance parameters. To our knowledge, the only relevant work is \citet{bruck2023distribution}, which introduced MMDQ and MMDQ$^\star$. MMDQ uses a distribution-free, studentised MMD test statistic, while MMDQ$^\star$ improves test power by combining the standard and studentised MMD statistics while maintaining the distribution-free property. 
Our methods are referred to as CMMD with variations based on the bias convergence rate $\nicefrac{n_t}{n_e}$ of the test statistic (see Theorems \ref{thm: main_theorem_h0} and \ref{thm: main_theorem_2}). To ensure the ratio decays to $0$ as sample size increases, we choose $\nicefrac{n_t}{n_e}$ in the form of $n_e^{-\beta}$ for $\beta\in[0, 1)$. Specifically, we determine the split ratio $\rho$ for sample size $n$ by solving an optimisation~(Algorithm~\ref{algo: compute_adaptive_split_ratio}) to ensure $\nicefrac{n_t}{n_e} = n_e^{-\beta}$ for $\beta \in \{\nicefrac{1}{3}, \nicefrac{1}{4}, 0\}$, with the constraint $n_t + n_e = n$. We denote our methods as CMMD$(\beta)$ correspondingly. The choice of $\beta = 0$, corresponds to a standard fixed sample-splitting strategy. MMDQ, MMDQ$^\star$, and CMMD share the same epistemic alignment step by minimising the same empirical KCD objective. MMDQ and MMDQ$^\star$ use $n$ samples for testing, rather than $n_t$, following the original ``double-dipping'' approaches in \citet{key_composite_2024,bruck_distribution_2023}.

\textbf{Experimental setup.} In our simulations, let $\bfP_Y$ represent a vector of \( r = 3 \) isotropic Gaussians in 10 dimensions, with means sampled randomly from a 10-dimensional unit sphere. Similarly, let $\bfQ_Y$ be of the same dimension and size but distributed as 10-dimensional Student's t-distributions with 3 degrees of freedom and identical means. For the \emph{specification test}, we simulate the null hypothesis by drawing a total of \( rn \) credal samples $\bfS_Y$ from the distributions in $\bfP_Y$, and generate $n$ samples \( S_X \) from $\bfeta_0^\top \bfP_Y$, where \( \boldsymbol{\eta}_0 \) is uniformly drawn from the simplex \( \Delta_r \). To simulate the alternative hypothesis, we instead generate $S_X$ from $\bfeta_0^\top\bfQ_Y$. For the \emph{inclusion test}, the null hypothesis is simulated by drawing credal samples \( \mathbf{S}_X \) from \( \{ \boldsymbol{\eta}_0^{(i)\top} \mathbf{P}_Y \}_{i=1}^\ell \), with \( \ell = 3 \) and \( \{\boldsymbol{\eta}_0^{(i)}\}_{i=1}^\ell \) uniformly sampled from \( \Delta_r \). The alternative hypothesis is simulated by drawing \( \mathbf{S}_X \) from \( \{ \boldsymbol{\eta}_0^{(i)\top} \mathbf{Q}_Y \}_{i=1}^\ell \). For the \emph{equality test}, under the null hypothesis, credal samples \( \mathbf{S}_X \) and \( \mathbf{S}_Y \) are drawn from \( \mathbf{P}_Y \), while under the alternative hypothesis, samples are drawn from \( \mathbf{Q}_Y \) and \( \mathbf{P}_Y \), respectively. At last for the \emph{plausibility test}, the null hypothesis is simulated with credal samples drawn from $\bfP_Y$ and from $\{P_Y^{(1)}, P_Y^{(2)}, Q_Y^{(3)}\}$, and the alternative is simulated with credal samples drawn from $\bfP_Y$ and $\bfQ_Y$. All algorithms in the experiments use a Gaussian kernel (cf Section~\ref{subsec: two-sample testing}), with bandwidth $\sigma$ determined by the median heuristic~\citep{gretton2012kernel}. All tests are conducted at a significance level of $0.05$. For methods requiring permutation tests, we permute the statistic $500$ times to estimate the critical value. 


\textbf{Analysis.} Figure~\ref{fig: simulation} presents the simulation results for the four tests with rejection rates computed over 500 repetitions and plotted on the y-axis, while the x-axis represents $n$, the sample size \emph{from each extreme point}. Under the null hypothesis, $\text{CMMD}(0)$ shows an inflated Type I error across all tests, except for the inclusion test, which may be attributed to the conservativeness of Bonferroni correction. This Type I inflation persists even as sample size increases, consistent with our theory that a non-decaying bias exists between the statistics based on estimated parameters and the oracle ones. In contrast, $\text{CMMD}(\nicefrac{1}{3})$ and $\text{CMMD}(\nicefrac{1}{4})$ exhibit slight Type I inflation at smaller sample sizes but converge to the correct level as sample size increases, supporting our theory that controlling estimation error is crucial for faster convergence relative to the test statistic's decay. MMDQ follows a similar Type I convergence, while MMDQ$^\star$ exhibits strong conservativeness.

In the experiments where the alternative hypothesis is true (right panels), $ \text{CMMD}(0) $ consistently has the highest rejection rates across all tests, partly because it uses a higher number $n_t$ of samples for testing, but more importantly due to its inflated Type I error. Meanwhile, among the tests which reach the desired Type I control, $ \text{CMMD}(\nicefrac{1}{4}) $ consistently outperforms $ \text{CMMD}(\nicefrac{1}{3}) $, as expected due to having more testing samples (with a caveat that it also converges more slowly to the correct Type I error level). Both MMDQ and MMDQ$^\star$ show lower power despite using a larger number of testing samples ($n$) compared to CMMD approaches ($n_t$), a common drawback of distribution-free studentized statistics compared to permutation-based methods. Overall, we recommend using CMMD$(\nicefrac{1}{4})$, which offers the highest power while maintaining theoretical guarantees for correct Type I error control.

\section{Discussion}
\label{sec: discussion}
We conclude by highlighting the potential uses of credal tests in machine learning. In domain generalisation~\citep{singh2024domain, caprio_credal_2024}, credal sets often represent unknown deployment distributions. During deployment, when a small amount of data is available, our specification test offers a statistically valid way to verify these assumptions, reducing the risk of harm from incorrect or unverified models. Additionally, as discussed in Section~\ref{sec: related_work_new}, specification tests open new possibilities for nonparametric mixture model testing, enabling scientists to collect samples directly from mixture component distributions without relying on parametric assumptions prone to misspecification. Next, we anticipate the inclusion test will be particularly useful in uncertainty quantification research, where no consensus exists on measuring the uncertainty of credal sets~\citep{sale2023volume, hofman2024quantifying}. Nonetheless, a desirable property of any such method is monotonicity, where if $\cC_X \subseteq \cC_Y$, then $\cC_Y$ is more epistemically uncertain than $\cC_X$. Our inclusion test facilitates this comparison without predefined quantification metrics. The equality test, similar to the standard two-sample test, helps detect treatment effects or significant changes even in the presence of uncertainty. Finally, the plausibility test acts as a distributionally robust two-sample test, determining whether a consensus exists despite ambiguity. Failure to reject the test suggests common ground and encourages further data collection or review of evidence, whereas rejection indicates strong evidence of no consensus. We also envision our credal tests serving as a foundation for independence testing between credal sets, a challenging problem due to the multiple definitions of (conditional) independence for credal sets~\citep{cozman2008sets} and the absence of statistical methods for testing these relationships---similar to how two-sample tests underpin nonparametric independence testing.

\textbf{Future work.} There remain several areas of study which warrant further investigation. One direction is to explore other methods for generating credal sets~\citep{augustin_introduction_2014}. Another is kernel selection, as heterogeneity within credal sets complicates kernel choice so as to maximize test power. Advanced multiple sample techniques~\citep{guo2024rank} may increase the test power for credal tests. Finally, it is of interest to develop a data-driven mechanism for selectively adjusting the split ratio to balance Type I error control with increased test power.




In addition to advancing two-sample testing with nuisance parameters, our credal tests promote the broader integration of a modeller's epistemic uncertainty into scientific practices, leading to conclusions that are more objective, robust, and ultimately more credible.


\section{Further Discussion}
\label{appendix_Sec: further_remarks}

In this section, we address additional important concepts about credal hypothesis testing.

\subsection{Limitations of Credal Sets}

Probability theory is the de facto mathematical formulation for modelling uncertainty and randomness in most scientific disciplines. However, researchers have increasingly recognised the limitations of relying on a single probability distribution to capture the diverse forms of uncertainty inherent in complex systems. Generalisations of probability theory, including Dempster-Shafer theory~\citep{shafer1992dempster}, interval-valued probabilities~\citep{kyburg1998interval}, the Choquet integral~\citep{choquet1953}, upper-lower probabilities~\citep{troffaes2014lower}, and comparative probabilities~\citep{walley1979varieties}, represent efforts to address these limitations. Despite their differences, these theories share a common feature: a credal set, which is a closed, convex set of probability distributions that serves as a unifying characterization.\footnote{For an excellent overview of uncertainty models beyond using a single probability distribution, we recommend reading \citet{walley2000towards}, \citet{hullermeier_aleatoric_2021}, and \citet{cuzzolin2024uncertainty}.}

This work focuses entirely on a finitely generated credal set, defined as the convex hull of a discrete set of probabilities, for which we have access to samples, to represent our epistemic ignorance. There are strong theoretical justifications from both fields of formal epistemology and mathematics supporting this representation of epistemic uncertainty \citep{lewis1980subjectivist,walley1991statistical}. For example, while a single probability, with a few additional axioms, represents a complete preference \citep[Chap. 13]{Fishburn70:utility}, the credal set offers a generalized representation of \emph{partial} preference \citep{giron_quasi-bayesian_1980,seidenfeld_shared_1989,walley1991statistical}, reflecting a rational agent’s partial ignorance. Moreover, any two credal sets with the same convex hull represent the same partial preference. Nevertheless, we highlight important limitations that must be considered when adopting this approach in practice. 

One limitation is the paradox of increasing epistemic uncertainty even as more evidence becomes available. Consider a finitely generated credal set $\cC_X = \operatorname{CH}(P_1, \dots, P_{\ell-1})$. Now, suppose we gather an additional source of information, $P_{\ell}$, and wish to incorporate it into the existing set of evidence to represent epistemic ignorance as a credal set. In this case, the credal set will either remain the same if $P_{\ell}$ lies within the set $\cC_X$ or expand if $P_{\ell}$ is linearly independent of $P_1, \dots, P_{\ell-1}$. This is counterintuitive to the usual notion of epistemic uncertainty, which typically decreases as more information is gathered. Furthermore, this property could result in credal sets being disproportionately “stretched” by a single ``outlier'' distribution.

To address this issue, we recommend that practitioners apply subjective judgment to assess the quality of such distributions, using techniques like distributional outlier detection, such as One-Class Support Measure Machines~\citep{muandet2013one}, a generalization of One-Class SVM at the distributional level. Additionally, when the sample sizes from each extreme point differ significantly, it is important to consider whether a distribution with very few samples should be included as part of the objective representation of epistemic ignorance.

\subsection{Interpretation of Probabilities in Credal Tests}

Classical probability theory, grounded in the Kolmogorov axioms~\citep{Kolmogorov60:Foundations,kolmogorov2018foundations}, offers a formal mathematical framework for analysing and representing uncertainty and the likelihood of events. However, the interpretation of probability remains flexible, leading to the emergence of different schools of thought. For an excellent overview of these interpretations, interested readers are encouraged to consult \citet{hajek2002interpretations}. Below, we briefly discuss different interpretations of probability that are relevant to understanding and interpreting results from credal tests.

\paragraph{Different types of probabilities.} There are different categorisations of probabilities. In machine learning, probabilities are often classified as either aleatoric, representing inherent randomness in systems, or epistemic, representing uncertainty due to limited knowledge~\citep{hullermeier_aleatoric_2021}. Another common distinction is between physical probability and belief probability. The physical probability, also referred to as risk or chance, describes objective phenomena and should be invariant to an observer’s perspective (unless we venture into quantum mechanics\footnote{The wavefunction of subatomic particles describes the probabilities of possible outcomes for their position, momentum, and other physical properties. However, when an observation is made, the probability ``collapse'' into one specific state. This is known as the observer effect.}). The frequentist interpretation of probability belongs to this category, where probability is derived from counting event occurrences over infinite repetitions. The classical probability, which calculates the ratio of favorable outcomes to total possible outcomes, is also a form of physical probability.

\paragraph{Belief probability.} In contrast, the belief probability reflects an agent’s degree of confidence in a particular proposition. Since belief is subjective, it can vary between individuals, making purely subjective probabilities challenging to study. However, if we assume that humans are rational agents who follow certain logical principles, we can impose constraints on how their beliefs should be structured. This leads to the well-known Dutch book argument, which demonstrates that rational agents engaged in betting must adhere to the Kolmogorov axioms to avoid sure losses~\citep{de1937foresight}. This result brings us back to probability theory as a framework for modeling rational credence, a concept often referred to as the structural norm in epistemic theories of probability. By imposing additional constraints on belief formation, we arrive at the concept of objective belief probability, which forms the foundation of the core interpretation of probability in this paper.

\paragraph{Objective belief probability.} A key principle in objective belief probability concerns how to calibrate one’s belief strength when presented with relevant evidence. Consider the following example: A person tells you, “It will rain tomorrow with probability $P_1(\text{rain}) = a$,” and you are asked to express your belief about the occurrence of this event. If you are agnostic about the problem, there is no reason not to calibrate your belief probability $P_B(\text{rain})$ to match $a$. However, if another person offers a different piece of evidence and claims, “It will rain tomorrow with probability $P_2(\text{rain}) = b$,” and you remain agnostic about both the problem and the credibility of the sources, how should you adjust your belief? One natural way is to express your belief of raining tomorrow as anything in between $a$ and $b$, i.e., assuming $a<b$, then $P_B(\text{rain})\in[a, b]$. Generalising this concept of probability interval naturally leads to the idea advocated in \citet{lewis1980subjectivist} and \citet{williamson_defence_2010} that objective epistemic ignorance should be represented as a convex hull of distributions, aka credal set.

\paragraph{Interpretation of credal tests.} The aforementioned concepts can be applied to the interpretation of credal tests. At its core, hypothesis testing is traditionally grounded in frequentist probability, relying on the concept of repetition to define quantities like the $p$-value, which is interpreted as:

\begin{center}
\emph{If I were to collect samples infinitely many times, how often would I observe a test statistic—computed to reflect certain desirable properties related to the \textbf{aleatoric variations} (probabilities) in the null hypothesis—as extreme as the one I have seen?}
\end{center}

This interpretation does not involve belief probability. In the case of credal tests, however, the interpretation of the $p$-value becomes:

\begin{center}
\emph{If I were to collect samples infinitely many times, and each time I express my epistemic ignorance through a credal set, how often would I observe a test statistic—computed to reflect certain desirable properties related to my \textbf{belief probabilities} in the null hypothesis—as extreme as the one I have seen?}
\end{center}

This interpretation clarifies the role of aleatoric variation (the observed samples) and belief disposition (epistemic ignorance represented as credal sets), which underpins the title of our work: \emph{Credal Two-sample Tests of Epistemic Ignorance}. To the best of our knowledge, the combination of using the frequentist interpretation of probability to reason about belief probability within a testing framework is rarely discussed. We believe this approach could open new avenues for future research at the intersection of these concepts.

\paragraph{Acknowledgements.} The authors would like to thank Yusuf Sale and Eyke Hüllermeier for their insightful discussions during the early stages of project development. Special thanks to Michele Caprio for his invaluable feedback, which significantly improved the draft. Additionally, discussions with Jean-Francois Ton and Anurag Singh contributed to enhancing the clarity and quality of the writing.
Antonin Schrab acknowledges support from the U.K. Research and Innovation under
grant number EP/S021566/1 and from the Gatsby Charitable Foundation.
Arthur Gretton acknowledges
support from the Gatsby Charitable Foundation.



\begin{appendices}



\newpage

The appendix contains algorithmic details, formal proofs, and additional experiments, it is organised as follows.

\vspace{-1cm}

\renewcommand{\contentsname}{}
\tableofcontents

\newpage
\section{Algorithms}
\label{appendix_sec: credal_algorithms}

This section provides the details of algorithms that are used as part of the specification, equality, and plausibility tests as well as their computational complexity.

\subsection{Algorithms}
The following algorithms are required for credal testing.

\begin{algorithm}
\caption{\texttt{split\_data}}
\label{algo: split_data}
\begin{algorithmic}[1]
    \Require Multiple samples $\bfS_X, \bfS_Y$, estimation-test split ratio $\rho$
    \For{$S^{(j)}_X$ in $\bfS_X$}
        \State $n_j \leftarrow$ the sample size for $S^{(j)}_X$
        \State Randomly assign $\lfloor n_j \times \rho \rfloor$ samples to $S^{(j),e}_X$ and the remaining samples to $S^{(j),t}_X$.
    \EndFor
    \For{$S^{(j)}_Y$ in $\bfS_Y$}
        \State $n_j \leftarrow$ the sample size for $S^{(j)}_Y$
        \State Randomly assign $\lfloor n_j \times \rho \rfloor$ samples to $S^{(j),e}_Y$ and the remaining samples to $S^{(j),t}_Y$.
    \EndFor
    \State Set $\bfS^e_X = \{S_X^{(j),e}\}_{j=1}^\ell$, $\bfS^t_X = \{S_X^{(j),t}\}_{j=1}^\ell$, $\bfS^e_Y = \{S_Y^{(j),e}\}_{j=1}^r$, $\bfS^t_Y = \{S_Y^{(j),t}\}_{j=1}^r$
    \State \Return $\bfS^e_X, \bfS^e_Y, \bfS^t_X, \bfS^t_Y$
\end{algorithmic}
\end{algorithm}

\begin{algorithm}
    \caption{\texttt{redraw\_samples}}
    \label{algo: redraw_samples}
    \begin{algorithmic}[1]
    \Require Multiple samples $\bfS_X, \bfS_Y$, convex weights $\bflambda, \bfeta$
    \State Initialise $\tilde{S}_{X, \bflambda} = \{\}, \tilde{S}_{Y, \bfeta} = \{\}$
    \State $n_X \leftarrow$ the minimum number of samples across $S_X$ in $\bfS_X$.
    \While{$|\tilde{S}_{X, \bflambda}| < n_X$}
    \State Draw $j \sim \operatorname{Multinomial}(\bflambda)$
    \State Draw $X^{(j)}$ from $S_X^{(j)}$ at random
    \State $\tilde{S}_{X, \bflambda} \leftarrow \tilde{S}_{X, \bflambda} \cup \{X^{(j)}\}$
    \State $S_X^{(j)} \leftarrow S_X^{(j)} \backslash \{X^{(j)}\}$
    \EndWhile
    \State $n_Y \leftarrow$ the minimum number of samples across $S_Y$ in $\bfS_Y$.
    \While{$|\tilde{S}_{Y, \bfeta}| < n_Y$}
    \State Draw $j \sim \operatorname{Multinomial}(\bfeta)$
    \State Draw $Y^{(j)}$ from $S_Y^{(j)}$ at random
    \State $\tilde{S}_{Y, \bfeta} \leftarrow \tilde{S}_{Y, \bfeta} \cup \{Y^{(j)}\}$
    \State $S_Y^{(j)} \leftarrow S_Y^{(j)} \backslash \{Y^{(j)}\}$
    \EndWhile
    \end{algorithmic}
\end{algorithm}

The procedure in Algorithm~\ref{algo: redraw_samples} ensures that we obtain independently and identically distributed samples from $\bflambda^\top \bfP_X$ and $\bfeta^\top \bfP_Y$ based on bootstrapping. Importantly, the sampling without replacement approach allows us to avoid obtaining the same observation, breaking the ``iid-ness'' of the redrawn samples.

\begin{algorithm}
    \caption{\texttt{kernel\_two\_sample\_test}}
    \begin{algorithmic}[1]
        \Require Samples $S_X$, $S_Y$, kernel $k$, number of simulated statistics $B$, test level $\alpha$
        \State Compute $M_0 = \operatorname{MMD}^2(S_X, S_Y)$ with kernel $k$
        \For{$b$ in $1,\dots, B$}
        \State Permute samples $S_X, S_Y$ to obtain $S_X^{(b)}, S_Y^{(b)}$
        \State Compute $M_b \leftarrow \operatorname{MMD}^2(S_X^{(b)}, S_Y^{(b)})$ with kernel $k$
        \EndFor
        \State Compute $p \leftarrow \frac{1}{B + 1} \left(\sum_{b=1}^{B} \mathbf{1}[M_b \geq M_0] + 1\right)$
        \State \Return ``reject'' if $p < \alpha$, otherwise \Return ``fail to reject''
    \end{algorithmic}
    \label{algo: kernel_two_sample_test}
\end{algorithm}

To implement Step 3 in Algorithm~\ref{algo: kernel_two_sample_test}, we employed the wild bootstrap permutation procedure for computational efficiency; see \citet[Section 3.2.2]{schrab2023mmd} for further details.

\begin{algorithm}
    \caption{\texttt{compute\_adaptive\_split\_ratio}}
    \label{algo: compute_adaptive_split_ratio}
    \begin{algorithmic}[1]    
    \Require Number of samples $n$, power $\beta$, tolerance $\epsilon$, maximum iteration $\zeta$
    \State Compute $n_e \leftarrow \lfloor n/2\rfloor$
    \For{$t=1,\dots,\zeta$}
    \State Compute $n_e' \leftarrow n_e - \frac{n_e + n_e^{(1+\beta)} - n}{1 + (1+\beta)n_e^{\beta}}$
    \If{$|n_e' - n_e| <\epsilon$}
    \State \Return split ratio $\frac{n_e}{n}$
    \EndIf
    \State Update $n_e \leftarrow n_e'$
    \EndFor
    \State \Return "error, the solution did not converge"
    \end{algorithmic}
\end{algorithm}

\subsection{Time Complexity}
We provide a brief description of the runtime complexities of our credal testing algorithms.
\begin{itemize}
    \item \textbf{Specification test.} Specification test consists of two stages: estimation (epistemic alignment) and testing. For the estimation stage, we are solving a convex quadratic program, which can be solved using the interior point method. Since we need to compute the KMEs for each distribution in $\bfP_Y$ and $P_X$, we need $O((1+r) n_e^2)$ runtime complexity. Solving the convex quadratic program per iteration often requires solving a system of linear equations with $r$ variables, which has $O(r^3)$ complexity, and it often converges at $O(\sqrt{r})$ steps. Overall the estimation stage has time complexity of ${O}((1+r)n_e^2 + r^{3.5})$. For the hypothesis testing part, the complexity is $O(B \times n_t^2)$ where $B$ is the number of simulated statistics we generate in the procedure. Therefore, combining the complexity of the two stages, we have the total runtime complexity of
    \begin{align*}
        {O}\left((1+r)n_e^2 + r^{3.5} + B \times n_t^2\right).
    \end{align*}
    \item \textbf{Inclusion test.} Since the inclusion test requires running multiple specification tests, the runtime complexity follows straightforwardly as
    \begin{align*}
        {O}\left(\ell\times \left((1+r)n_e^2 + r^{3.5} + B \times n_t^2\right)\right).
    \end{align*}
    \item \textbf{Equality test.} The complexity of the equality test, which requires running two times the inclusion tests, is
    \begin{align*}
        {O}\left(\ell\times \left((1+r)n_e^2 + r^{3.5} + B \times n_t^2\right) + r \times \left((1+\ell)n_e^2 + \ell^{3.5} + B \times n_t^2\right)\right).
    \end{align*}
    \item \textbf{Plausibility test.} The plausibility test requires solving an iterative biconvex minimisation problem. Let $D$ be the number of iterations, then the overall complexity of the algorithm is
    \begin{align*}
        { O} \left((\ell + r)n_e^2 + D \times (r^{3.5} + \ell^{3.5}) + B\times n_t^2\right).
    \end{align*}
\end{itemize}

\newpage
\section{Proofs}
\label{appendix_Sec: proofs}

This section contains the proofs of the main results presented in the main paper. Before proceeding to the proofs, we restate the assumptions used throughout the paper:

\begin{assumptiontrick}
\label{assumption 0}
The extreme points of the credal set are linearly independent.
\end{assumptiontrick}
\begin{assumptiontrick}
\label{assumption: 1}
    The kernel $k$ is continuous, bounded, positive definite, and characteristic.
\end{assumptiontrick}
\begin{assumptiontrick}
\label{assumption: 2}
    There exists some $n_0\in\NN$, such that for $n_t > n_0$, the function $\cL_{n_t}: \bflambda,\bfeta \mapsto \operatorname{MMD}^2(\tilde{S}_{X,\bflambda}, \tilde{S}_{Y,\bfeta})$ is continuous over $\Delta_\ell \times \Delta_r$ and twice continuously differentiable over its interior. Furthermore, the gradient $\nabla \cL_{n_t}$ satisfies Lipschitz continuity and a technical condition $\|\nabla \cL_{n_t} - \nabla L\|_{\infty} \leq C' \|\cL_{n_t} - L\|_{\infty}$ for some constant $C'$.
\end{assumptiontrick}
\begin{assumptiontrick}
    \label{assumption: 3}
    The Schur complement $M_{XX} - M_{XY}M_{YY}^{-1}M_{YX}$ is positive definite.
\end{assumptiontrick}

Assumption~\ref{assumption 0} facilitates our theoretical analysis, but even if this assumption is violated, our credal tests remain valid (c.f. Appendix~\ref{appendix subsub linearly dependent}). Assumption~\ref{assumption: 1} imposes regularity conditions on the RKHS and ensures the MMD serves as a valid divergence measure. These conditions are satisfied by commonly used kernels, such as the Gaussian kernel. Assumption~\ref{assumption: 2}, the smoothness condition, allows us to explicitly analyse the relationship between the estimation error and the test statistic. The smoothness assumption may be less reliable for small sample sizes, but it holds as sample size increases since $\cL_{n_t}$ (and $L_{n_e}$) converge uniformly to the population KCD, which is itself continuous over $\Delta_\ell \times \Delta_r$ and differentiable in its interior. The gradient conditions on $\nabla \cL_{n_t}$ pose a technical challenge when verifying them for the sample-based estimator $\cL_{n_t}$. However, these conditions can be confirmed for $L_{n_e}$ (see Proposition~\ref{prop: gradient_uniform_convergence} in Appendix~\ref{appendix_Sec: proofs}), as we have the analytical form of $L_{n_e}$. Given that both $L_{n_e}$ and $\cL_{n_t}$ are uniformly consistent estimators of $L$ (see Proposition~\ref{prop: uniform_convergence}), it is reasonable to extend this assumption to $\cL_{n_t}$ as well. Assumption~\ref{assumption: 3} is a technical assumption to analyse the convergence rate of KCD optimisers in the plausibility test.


\subsection{Proof for Proposition~\ref{prop: credal discrepancy}}

\CredalDiscrepancy*

\begin{proof}
    Let $d:\cP(\cX) \times \cP(\cX) \to \RR_{\geq 0}$  be a statistical divergence, i.e., $d(P, Q) = 0$ if and only if distributions $P, Q$ are equivalent. 
    
    For inclusion, recall that $\operatorname{Inc}(\cC_X,\cC_Y)$ is defined as $\sup_{P_X\in\cC_X}\inf_{P_Y\in\cC_Y} d(P_X, P_Y)$. If $\cC_X \subseteq \cC_Y$, then for all $P_X\in\cC_X$, we have $P_X \in \cC_Y$. Hence, for any $P_X\in\cC_Y$, there exists $P_Y\in\cC_Y$ such that $d(P_X, P_Y) = 0$, thus $\sup_{P_X\in\cC_X}\inf_{P_Y\in\cC_Y}d(P_X, P_Y) = 0$. On the other hand, if $\operatorname{Inc}(\cC_X, \cC_Y) = 0$, but $\cC_X \not\subseteq \cC_Y$, there exists a $P_X\in\cC_X$ where $P_X\not\in\cC_Y$. Consequently, there cannot be $P_Y\in\cC_Y$ such that $d(P_X, P_Y) = 0$. This implies that $\sup_{P_X\in\cC_X}\inf_{P_Y\in\cC_Y} d(P_X, P_Y) \neq 0$ because there is at least one $P_X\in\cC_Y$ for which the divergence is non-zero.

    For equality, the argument is straightforward. Two sets $\cC_X,\cC_Y$ are equal if and only if $\cC_X \subseteq \cC_Y$ and $\cC_Y \subseteq \cC_X$. To check these two conditions, we only need to show both $\operatorname{Inc}(\cC_X,\cC_Y) = 0$ and $\operatorname{Inc}(\cC_Y,\cC_X) = 0$, which is implied by $\max\left(\operatorname{Inc}(\cC_X,\cC_Y), \operatorname{Inc}(\cC_Y,\cC_X)\right) = 0$.

    For set intersection, recall that $\operatorname{Int}(\cC_X, \cC_Y) = \inf_{P_X\in\cC_X}\inf_{P_Y\in\cC_Y} d(P_X, P_Y)$. If $\cC_X \cap \cC_Y \neq \emptyset$, then there exists $P_X\in\cC_X, P_Y\in\cC_Y$ such that $P_X = P_Y$, implying that $\operatorname{Int}(\cC_X, \cC_Y) = \inf_{P_X\in\cC_X}\inf_{P_Y\in\cC_Y} d(P_X, P_Y) = 0$. If $\operatorname{Int}(\cC_X, \cC_Y) = 0$, but $\cC_X \cap \cC_Y = \emptyset$, then there exists no $P_X\in\cC_X, P_Y\in\cC_Y$ such that $d(P_X, P_Y) = 0$. However, since the credal set is closed, all infimums can be attained. Hence, the fact that $\operatorname{Int}(\cC_X, \cC_Y) = 0$ implies the existence of some $P_X\in\cC_X, P_Y\in\cC_Y$ that are equal, leading to a contradiction.
\end{proof}

\subsection{Proof for Proposition~\ref{prop: credal_mmd}}

\KernelCredalDiscrepancy*
\begin{proof}
    Recall that $\cC_X$ and $\cC_Y$ are finitely generated credal sets, i.e., they are convex hulls of discrete sets of probability distributions $\bfP_X$ and $\bfP_Y$. Hence, for any $P_X\in\cC_X$ and $P_Y\in\cC_Y$, there exists $\bflambda \in \Delta_\ell$ and $\bfeta \in \Delta_\ell$ where
    $$
    P_X = \bflambda^\top \bfP_X, \quad P_Y = \bfeta^\top \bfP_Y.
    $$
    Next, as discussed in Section~\ref{subsec: two-sample testing}, the squared MMD can be expressed as the RKHS distance between the kernel mean embedding of the distribution of interests, that is,
    \begin{align*}
        \operatorname{MMD}^2(P_X, P_Y) = \|\mu_{P_X} - \mu_{P_Y}\|_{\cH_k}^2
    \end{align*}
    where $\mu_{P_X}, \mu_{P_Y}$ are kernel mean embeddings of $P_X, P_Y$, defined as,
    \begin{align*}
        \mu_{P_X} = \int_\cX k(x,\cdot) \,dP_X, \quad \mu_{P_X} = \int_\cX k(y,\cdot) \,dP_Y.
    \end{align*}
    Focusing on $\mu_{P_X}$ for now, we then have,
    \begin{eqnarray*}
        \mu_{P_X}   &=& \int_\cX k(x,\cdot) \,dP_X \\
                    &=& \int_\cX k(x, \cdot) \,d\left(\bflambda^\top \bfP_X\right) \\
                    &=& \sum_{j=1}^\ell \lambda_j \int_\cX k(x,\cdot) \,dP^{(j)}_X\\
                    &=& \sum_{j=1}^\ell \lambda_j \mu_{P^{(j)}_X} \\
                    &=& \bflambda^\top \vec{\mu}_{\bfP_X}
    \end{eqnarray*}
    where $\vec{\mu}_{\bfP_X} := [\mu_{P^{(1)}_X},\ldots, \mu_{P^{(\ell)}_X}]^\top$ denotes the vector of kernel mean embeddings of the extreme point distributions. Similarly, we can express $\mu_{P_Y} = \bfeta^\top \vec{\mu}_{\bfP_Y}$. In the second equality, we used linearity of integral with respect to the measures, and in the third equality, we swapped integral with the summation, which is allowed since the kernel mean embeddings at each corner distribution are well defined due to the boundedness of the kernel. This characterisation of kernel mean embeddings for $\mu_{P_X}$ and $\mu_{P_Y}$ allows us to express the square of the MMD as
    \begin{eqnarray*}
        \operatorname{MMD}^2(P_X,P_Y)   &=& L(\bflambda,\bfeta) \\
                                        &=& \|\mu_{P_X} - \mu_{P_Y}\|_{\cH_k}^2 \\
                                        &=& \|\bflambda^\top \vec{\mu}_{\bfP_X} - \bfeta^\top \vec{\mu}_{\bfP_Y}\|_{\cH_k}^2 \\
                                        &=& \langle \bflambda^\top \vec{\mu}_{\bfP_X}, \bflambda^\top \vec{\mu}_{\bfP_X}\rangle + \langle \bfeta^\top \vec{\mu}_{\bfP_Y}, \bfeta^\top \vec{\mu}_{\bfP_Y}\rangle - 2 \langle \bflambda^\top \vec{\mu}_{\bfP_X}, \bfeta^\top \vec{\mu}_{\bfP_Y}\rangle \\
                                        &=& \sum_{j=1}^\ell \sum_{j'=1}^\ell \lambda_j\lambda_{j'} \langle \mu_{P^{(j)}_X}, \mu_{P^{(j')}_X}\rangle + \sum_{j=1}^r \sum_{j'=1}^r \eta_j\eta_{j'} \langle \mu_{P^{(j)}_Y}, \mu_{P^{(j')}_Y}\rangle 
                                        -2 \sum_{j=1}^\ell \sum_{j'=1}^r \lambda_j\eta_{j'} \langle \mu_{P^{(j)}_X}, \mu_{P^{(j')}_Y}\rangle \\
                                        &=& \bflambda^\top \bfM_{XX} \bflambda + \bfeta^\top\bfM_{YY} \bfeta -2 \bflambda^\top \bfM_{XY} \bfeta
    \end{eqnarray*}
    where the $i,j$ entry of $\bfM_{XX}$ is $\langle \mu_{P^{(i)}_X}, \mu_{P^{(j)}_X}\rangle = \EE[k(X^{(i)}, X^{(j)})]$, and $\bfM_{XY}, \bfM_{YY}$ defined analogously. This concludes the proposition.
\end{proof}

\subsection{Proof for Proposition~\ref{prop: uniform_convergence}}

\KCDUniformConvergence*

\begin{proof}[Proof for Proposition~\ref{prop: uniform_convergence}]
    Starting with $L_{n_e}$, recall that the empirical KCD $L_{n_e}(\bflambda, \bfeta)$ is expressed as
    \begin{align*}
        L_{n_e}(\bflambda, \bfeta) = \bflambda^\top \widehat{\bfM}_{XX} \bflambda + \bfeta^\top\widehat{\bfM}_{YY}\bfeta -2 \bflambda^\top \widehat{\bfM}_{XY}\bfeta.
    \end{align*}
    \paragraph{Uniform convergence of $L_{n_e}$.} For fixed $\bflambda, \bfeta$, the difference between the empirical and population level objectives can then be written as
    \begin{align*}
        |L_{n_e}(\bflambda, \bfeta) - L(\bflambda, \bfeta)| 
            &= \left| \bflambda^\top \widehat{\bfM}_{XX} \bflambda + \bfeta^\top\widehat{\bfM}_{YY}\bfeta -2 \bflambda^\top \widehat{\bfM}_{XY}\bfeta - \left(\bflambda^\top \bfM_{XX} \bflambda + \bfeta^\top\bfM_{YY} \bfeta -2 \bflambda^\top \bfM_{XY} \bfeta\right)\right| \\
            &= \left| \bflambda^\top (\widehat{\bfM}_{XX} - \bfM_{XX}) \bflambda + \bfeta^\top (\widehat{\bfM}_{YY} - \bfM_{YY}) \bfeta -2 \bflambda^\top (\widehat{\bfM}_{XY} - \bfM_{XY}) \bfeta\right| \\
            &\leq \left|\bflambda^\top (\widehat{\bfM}_{XX} - \bfM_{XX}) \bflambda \right| + \left|\bfeta^\top (\widehat{\bfM}_{YY} - \bfM_{YY}) \bfeta\right| + 2 \left|\bflambda^\top (\widehat{\bfM}_{XY} - \bfM_{XY}) \bfeta\right|.
    \end{align*}

Next, notice that,
\begin{align*}
    \left|\bflambda^\top (\widehat{\bfM}_{XX} - \bfM_{XX}) \bflambda \right| &= \left|\sum_{i=1}^\ell \sum_{j=1}^\ell \lambda_i\lambda_j (\langle \hat{\mu}_{P_X^{(i)}}, \hat{\mu}_{P_X^{(j)}}\rangle - \langle {\mu}_{P_X^{(i)}}, {\mu}_{P_X^{(j)}}\rangle)\right| \\
    &\leq \sum_{i=1}^\ell \sum_{j=1}^\ell \lambda_i\lambda_j \left|\langle \hat{\mu}_{P_X^{(i)}}, \hat{\mu}_{P_X^{(j)}}\rangle - \langle {\mu}_{P_X^{(i)}}, {\mu}_{P_X^{(j)}}\rangle\right| \\
    &= \sum_{i=1}^\ell \sum_{j=1}^\ell \lambda_i\lambda_j \left|\langle \hat{\mu}_{P_X^{(i)}}, \hat{\mu}_{P_X^{(j)}}\rangle - \langle \hat{\mu}_{P_X^{(i)}}, \mu_{P_X^{(j)}}\rangle + \langle \hat{\mu}_{P_X^{(i)}}, \mu_{P_X^{(j)}}\rangle - \langle {\mu}_{P_X^{(i)}}, {\mu}_{P_X^{(j)}}\rangle\right| \\
    &\leq \sum_{i=1}^\ell \sum_{j=1}^\ell \lambda_i\lambda_j \left(\left|\langle \hat{\mu}_{P_X^{(i)}}, \hat{\mu}_{P_X^{(j)}}\rangle - \langle \hat{\mu}_{P_X^{(i)}}, \mu_{P_X^{(j)}}\rangle \right| + \left|\langle \hat{\mu}_{P_X^{(i)}}, \mu_{P_X^{(j)}}\rangle - \langle {\mu}_{P_X^{(i)}}, {\mu}_{P_X^{(j)}}\rangle\right|\right) \\
    &= \sum_{i=1}^\ell \sum_{j=1}^\ell \lambda_i\lambda_j \left(\left|\langle \hat{\mu}_{P_X^{(i)}}, \hat{\mu}_{P_X^{(j)}} - \mu_{P_X^{(j)}}\rangle \right| + \left|\langle \hat{\mu}_{P_X^{(i)}}-{\mu}_{P_X^{(i)}}, \mu_{P_X^{(j)}}\rangle\right|\right) \\
    & \stackrel{(\clubsuit)}{\leq} \sum_{i=1}^\ell \sum_{j=1}^\ell \lambda_i\lambda_j \left( \|\hat{\mu}_{P_X^{(i)}}\|_{\cH_k} \|\hat{\mu}_{P_X^{(j)}} - \mu_{P_X^{(j)}}\|_{\cH_k}  + \|\hat{\mu}_{P_X^{(i)}}-{\mu}_{P_X^{(i)}}\|_{\cH_k} \|\mu_{P_X^{(j)}}\|_{\cH_k} \right)\\
    & \stackrel{(\spadesuit)}{\leq} \sum_{i=1}^\ell \sum_{j=1}^\ell \lambda_i\lambda_j \left(c  \|\hat{\mu}_{P_X^{(j)}} - \mu_{P_X^{(j)}}\|_{\cH_k} + c \|\hat{\mu}_{P_X^{(i)}} - \mu_{P_X^{(i)}}\|_{\cH_k}\right) \\
    &= {O}\left(\frac{1}{\sqrt{n_e}}\right)
\end{align*}
where we used Cauchy-Schwarz inequality in $(\clubsuit)$, and in $(\spadesuit)$ we used Assumption~\ref{assumption: 1}, which states that the kernel is bounded, implying that any kernel mean embedding is bounded by some constant $c$. The last step follows from the standard result regarding convergence of empirical kernel mean embedding to its population counterpart from \citep[Theorem 3.4]{muandet2017kernel}. It is important to note that this bound is not affected by the coefficients $\lambda_i,\lambda_j$ because they are bounded above by $1$. Similar results for $|\bflambda^\top (\widehat{\bfM}_{XY} - \bfM_{XY})\bfeta|$ and $|\bfeta^\top (\widehat{\bfM}_{YY} - \bfM_{YY})\bfeta|$ can be proven analogously.

Using this result, we can then bound the supremum difference between the empirical KCD and its population counterpart as,
\begin{eqnarray*}
    \sup_{\bflambda\in\Delta_\ell, \bfeta\in\Delta_r}|L_{n_e}(\bflambda, \bfeta) - L(\bflambda, \bfeta)| &\leq& \sup_{\bflambda\in\Delta_\ell, \bfeta\in\Delta_r}\Big(\left|\bflambda^\top (\widehat{\bfM}_{XX} - \bfM_{XX}) \bflambda \right| + \left|\bfeta^\top (\widehat{\bfM}_{YY} - \bfM_{YY}) \bfeta\right| \\
        && \qquad\qquad\qquad + 2 \left|\bflambda^\top (\widehat{\bfM}_{XY} - \bfM_{XY}) \bfeta\right|\Big) \\
    &=& {O}\left(\frac{1}{\sqrt{n_e}}\right).
\end{eqnarray*}

Since the bound holds uniformly across all possible $\bflambda\in\Delta_\ell,\bfeta\in\Delta_r$, we have established the uniform convergence of the objective and the corresponding convergence rate.

\paragraph{Uniform convergence of $\cL_{n_t}$.} Let $\vec{\mu}_{\bfP_X} :=[\mu_{P_X^{(1)}},\dots, \mu_{P_X^{(\ell)}}]$ for the credal samples $\bfP_X$ and similarly $\vec{\mu}_{\bfP_Y}$ as the vector of kernel mean embeddings for $\bfP_Y$. Then, we have
\begin{eqnarray*}
    |\cL_{n_t}(\bflambda, \bfeta) - L(\bflambda,\bfeta)| &=& |\|\hat{\mu}_{\bflambda^\top \bfP_X} - \hat{\mu}_{\bfeta^\top \bfP_Y} \|_{\cH_k}^2 - \|\bflambda^\top \vec{\mu}_{\bfP_X} - \bfeta^\top \vec{\mu}_{\bfP_Y} \|_{\cH_k}^2 |\\
    &\stackrel{(\clubsuit)}{=}& |\|\hat{\mu}_{\bflambda^\top \bfP_X} - \hat{\mu}_{\bfeta^\top \bfP_Y} \|_{\cH_k}^2 - \| \mu_{\bflambda^\top\bfP_X} - \mu_{\bfeta^\top\bfP_Y} \|_{\cH_k}^2 |\\
    &=& |\|\hat{\mu}_{\bflambda^\top \bfP_X} - \hat{\mu}_{\bfeta^\top \bfP_Y} \|_{\cH_k}^2 - \|\mu_{\bflambda^\top \bfP_X} - \hat{\mu}_{\bfeta^\top \bfP_Y} \|_{\cH_k}^2 \\ 
    && \quad + \|\mu_{\bflambda^\top \bfP_X} - \hat{\mu}_{\bfeta^\top \bfP_Y} \|_{\cH_k}^2    
    -\| \mu_{\bflambda^\top\bfP_X} - \mu_{\bfeta^\top\bfP_Y} \|_{\cH_k}^2|.
\end{eqnarray*}
In $(\clubsuit)$, we replace $\bflambda^\top \vec{\mu}_{\bfP_X}$ with $\mu_{\bflambda^\top\bfP_X}$ because
\begin{align*}
    \bflambda^\top \vec{\mu}_{\bfP_X} = \sum_{j=1}^\ell \lambda_j \mu_{P_X^{(j)}} = \int \sum_j \lambda_j k(X, \cdot) \, dP_X^{(j)} = \int k(X,\cdot) d\left(\sum_j P_X^{(j)}\right) = \mu_{\bflambda^\top \bfP_X}.
\end{align*}
Next, let $A := \|\hat{\mu}_{\bflambda^\top \bfP_X} - \hat{\mu}_{\bfeta^\top \bfP_Y} \|_{\cH_k}^2 - \|\mu_{\bflambda^\top \bfP_X} - \hat{\mu}_{\bfeta^\top \bfP_Y} \|_{\cH_k}^2$. Then, by expanding out the expressions and rearranging the terms, it follows that
\begin{eqnarray*}
    |A| &=& \left|\|\hat{\mu}_{\bflambda^\top\bfP_X}\|^2_{\cH_k} - 2\langle\hat{\mu}_{\bflambda^\top\bfP_X}, \hat{\mu}_{\bfeta^\top\bfP_Y}\rangle + \|\hat{\mu}_{\bfeta^\top\bfP_Y}\|^2_{\cH_k} - \left(\|{\mu}_{\bflambda^\top\bfP_X}\|^2_{\cH_k} - 2\langle{\mu}_{\bflambda^\top\bfP_X}, \hat{\mu}_{\bfeta^\top\bfP_Y}\rangle + \|{\mu}_{\bfeta^\top\bfP_Y}\|^2_{\cH_k}\right)\right| 
    \\ 
        &=& \left|\|\hat{\mu}_{\bflambda^\top\bfP_X}\|^2_{\cH_k} -\|\mu_{\bflambda^\top\bfP_X}\|^2_{\cH_k} + 2\langle {\mu}_{\bflambda^\top\bfP_X} - \hat{\mu}_{\bflambda^\top\bfP_X}, \hat{\mu}_{\bfeta^\top\bfP_Y}\rangle \right| \\
        &\leq& \left|\|\hat{\mu}_{\bflambda^\top\bfP_X}\|^2_{\cH_k} -\|\mu_{\bflambda^\top\bfP_X}\|^2_{\cH_k}\right| + 2\|{\mu}_{\bflambda^\top\bfP_X} - \hat{\mu}_{\bflambda^\top\bfP_X}\| \|\hat{\mu}_{\bfeta^\top\bfP_Y}\| \\
        &\leq& \left|\|\hat{\mu}_{\bflambda^\top\bfP_X}\|^2_{\cH_k} -\|\mu_{\bflambda^\top\bfP_X}\|^2_{\cH_k}\right| + 2C\|{\mu}_{\bflambda^\top\bfP_X} - \hat{\mu}_{\bflambda^\top\bfP_X}\| \\
        &=& {O}\left(\frac{1}{\sqrt{n_t}}\right).
\end{eqnarray*}
The third step follows from the Cauchy-Schwartz inequality. The constant term $C$ in the third step follows from Assumption~\ref{assumption: 1} that the kernel is bounded, so the kernel mean embeddings are also bounded in RKHS norm. Then, the last step follows from the fact that empirical kernel mean embeddings converge to their population counterpart at ${O}(\nicefrac{1}{\sqrt{n_t}})$. Analogously, the difference $B := \|\mu_{\bflambda^\top \bfP_X} - \hat{\mu}_{\bfeta^\top \bfP_Y} \|_{\cH_k}^2    
    -\| \mu_{\bflambda^\top\bfP_X} - \mu_{\bfeta^\top\bfP_Y} \|_{\cH_k}^2$ can be shown to converge to zero at rate ${O}(\nicefrac{1}{\sqrt{n_t}})$.
Therefore, continuing from before, we have
\begin{equation*}
    |\cL_{n_t}(\bflambda,\bfeta) - L(\bflambda,\bfeta)| = |A + B| \leq |A|+|B| =  {O}\left(\frac{1}{\sqrt{n_t}}\right).
\end{equation*}
Since the convergence is not affected by the arguments, we have established uniform convergence for $\cL_{n_t}$.
\end{proof}

\subsection{Proofs for Theorem~\ref{thm: main_theorem_h0}}

To prove Theorem~\ref{thm: main_theorem_h0}, we need a few auxiliary results.

\begin{proposition}
\label{prop: positive definiteness}
Given positive definite kernel $k$, the Gram matrix $\bfM_{XX} \in \RR^{\ell \times \ell}$ with $i,j$ entries given by $\EE[k(X^{(i)}, X^{(j)})]$ is also positive definite.
\end{proposition}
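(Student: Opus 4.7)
The plan is to recognise $\bfM_{XX}$ as the Gram matrix of the kernel mean embeddings (KMEs) of the extreme points of the credal set inside the RKHS $\cH_k$, and then argue (i)~semi-definiteness by a standard squared-norm rewriting, and (ii)~strict definiteness by linear independence of the KMEs obtained by combining Assumption~\ref{assumption 0} and the characteristic property from Assumption~\ref{assumption: 1}.

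First, I would rewrite the entries. Since $k$ is positive definite it induces an RKHS $\cH_k$ with canonical feature map $x \mapsto k(x,\cdot)$, and since $k$ is bounded (Assumption~\ref{assumption: 1}), each $\mu_{P_X^{(i)}} = \EE_{X^{(i)}\sim P_X^{(i)}}[k(X^{(i)},\cdot)] \in \cH_k$ is well defined. Applying Fubini and the reproducing property (with independence of $X^{(i)},X^{(j)}$) yields
\[
    [\bfM_{XX}]_{ij} \;=\; \EE\big[k(X^{(i)},X^{(j)})\big] \;=\; \langle \mu_{P_X^{(i)}}, \mu_{P_X^{(j)}}\rangle_{\cH_k},
\]
exactly as appears inside the proof of Proposition~\ref{prop: credal_mmd}. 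Consequently, for any $\bfv\in\RR^\ell$,
\[
    \bfv^\top \bfM_{XX}\bfv \;=\; \sum_{i,j} v_i v_j\,\langle \mu_{P_X^{(i)}}, \mu_{P_X^{(j)}}\rangle_{\cH_k} \;=\; \Big\|\textstyle\sum_{i} v_i\, \mu_{P_X^{(i)}}\Big\|_{\cH_k}^{2} \;\geq\; 0,
\]
which already gives positive semi-definiteness.

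Second, for strict positive definiteness I have to rule out a non-trivial $\bfv$ that makes the displayed RKHS norm vanish, i.e.\ establish that $\{\mu_{P_X^{(i)}}\}_{i=1}^{\ell}$ are linearly independent in $\cH_k$. Suppose $\sum_i v_i \mu_{P_X^{(i)}} = 0$ for some $\bfv\neq 0$; by linearity of the embedding, the finite signed measure $\nu := \sum_i v_i P_X^{(i)}$ satisfies $\mu_\nu = 0$. Here Assumption~\ref{assumption: 1} (bounded, continuous, characteristic kernel) delivers injectivity of the mean-embedding map on finite signed measures—the standard extension of the characteristic property under these regularity conditions—forcing $\nu=0$. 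Then Assumption~\ref{assumption 0}, which asserts that the extreme points $P_X^{(1)},\dots,P_X^{(\ell)}$ are linearly independent as measures, gives $\bfv = 0$, a contradiction. Hence the KMEs are linearly independent and $\bfM_{XX}$ is positive definite.

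The main obstacle is the step from "characteristic" (injectivity of $P\mapsto\mu_P$ on \emph{probability} measures) to injectivity on the finite \emph{signed} measure $\sum_i v_i P_X^{(i)}$. This subtlety is exactly the gap between characteristic and integrally strictly positive definite kernels; under the paper's standing assumptions (bounded, continuous, characteristic on the underlying Polish space, with the commonly used Gaussian kernel as the canonical example), this extension is standard, and once it is granted the rest of the argument is a one-line linear-algebra reduction using Assumption~\ref{assumption 0}. The same argument applies verbatim to $\bfM_{YY}$.
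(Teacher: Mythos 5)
Your proof takes a genuinely different route from the paper's. The paper's own argument is a one-liner: it writes $\bm{\alpha}^\top\bfM_{XX}\bm{\alpha} = \EE\big[\sum_{i,j}\alpha_i\alpha_j k(X^{(i)},X^{(j)})\big]$ and declares this strictly positive ``by positive definiteness of the kernel'', without invoking Assumption~\ref{assumption 0} or the characteristic property at all. That argument only delivers non-negativity (and the conclusion is in fact false without further hypotheses: if two extreme points coincide, or if $k$ is a constant kernel, $\bfM_{XX}$ is singular). Your proof, by contrast, identifies $\bfM_{XX}$ as the Gram matrix of the kernel mean embeddings, reduces strict definiteness to linear independence of $\{\mu_{P_X^{(i)}}\}_{i=1}^{\ell}$ in $\cH_k$, and supplies that via Assumption~\ref{assumption 0} together with injectivity of the embedding. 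This is the argument the proposition actually needs, and it makes explicit which assumptions are doing the work.

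The one residual issue is exactly the one you flag yourself: ``characteristic'' is equivalent to injectivity of $\nu\mapsto\mu_\nu$ on finite signed measures with $\nu(\cX)=0$, not on all finite signed measures. Hence your step from $\mu_\nu=0$ to $\nu=0$ for $\nu=\sum_i v_i P_X^{(i)}$ is immediate only when $\sum_i v_i=0$; the general case needs the kernel to be integrally strictly positive definite, which the Gaussian kernel is but which characteristicness alone does not imply. Since you name this gap precisely and it is closed by the kernels the paper actually works with, the proof is essentially complete---and, as written, more careful than the paper's own.
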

\begin{proof}
    Let $\bm{\alpha}\in\RR^\ell$ be any vector of size $\ell$. Then, we have
    \begin{equation*}
        \bm{\alpha}^\top \bfM_{XX} \bm{\alpha} = \sum_{i=1}^\ell\sum_{j=1}^\ell \alpha_i\alpha_j \EE[k(X^{(i)}, X^{(j)})] 
        = \EE\left[\sum_{i=1}^\ell\sum_{j=1}^\ell \alpha_i\alpha_j k(X^{(i)}, X^{(j)})\right] > 0.
    \end{equation*}
    The last inequality follows from the positive definiteness of the kernel $k$. Consequently, the Gram matrix $\bfM_{XX}$ is also positive definite.
\end{proof}

\begin{proposition}
\label{prop: gradient_uniform_convergence}
    Under Assumption \ref{assumption: 1}, for any $\bflambda,\bfeta, \|\nabla L_{n_e}(\bflambda, \bfeta) - \nabla L(\bflambda, \bfeta)\| = {O}\left(\frac{1}{\sqrt{n_e}}\right)$.
\end{proposition}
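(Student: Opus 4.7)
The plan is to exploit the fact that $L_{n_e}$ and $L$ are quadratic forms in $(\bflambda, \bfeta)$, so their gradients are affine and the gradient difference reduces to a linear combination of the Gram matrix errors $\widehat{\bfM}_{XX}-\bfM_{XX}$, $\widehat{\bfM}_{YY}-\bfM_{YY}$, and $\widehat{\bfM}_{XY}-\bfM_{XY}$, each of which we can control via the standard $O(1/\sqrt{n_e})$ rate for empirical kernel mean embeddings.

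\textbf{Step 1 (explicit gradients).} Differentiating the closed-form expressions for $L$ and $L_{n_e}$ with respect to $\bflambda$ and $\bfeta$ yields
\begin{equation*}
\nabla_{\bflambda} L = 2\bfM_{XX}\bflambda - 2\bfM_{XY}\bfeta, \qquad \nabla_{\bfeta} L = -2\bfM_{XY}^\top \bflambda + 2\bfM_{YY}\bfeta,
\end{equation*}
with the analogous formulas for $\nabla L_{n_e}$ obtained by replacing each $\bfM$ by its empirical counterpart $\widehat{\bfM}$. Consequently,
\begin{equation*}
\nabla L_{n_e}(\bflambda,\bfeta) - \nabla L(\bflambda,\bfeta) = 2\begin{pmatrix} (\widehat{\bfM}_{XX}-\bfM_{XX})\bflambda - (\widehat{\bfM}_{XY}-\bfM_{XY})\bfeta \\ -(\widehat{\bfM}_{XY}-\bfM_{XY})^\top\bflambda + (\widehat{\bfM}_{YY}-\bfM_{YY})\bfeta \end{pmatrix}.
\end{equation*}

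\textbf{Step 2 (entrywise control of the Gram matrix errors).} For any $i,j$, the $(i,j)$ entry of $\widehat{\bfM}_{XX}-\bfM_{XX}$ is $\langle \hat{\mu}_{P_X^{(i)}},\hat{\mu}_{P_X^{(j)}}\rangle - \langle \mu_{P_X^{(i)}},\mu_{P_X^{(j)}}\rangle$. Using the same ``add and subtract'' decomposition as in the proof of Proposition~\ref{prop: uniform_convergence}, together with Cauchy--Schwarz, the boundedness of the kernel (Assumption~\ref{assumption: 1}) which makes the population and empirical KMEs uniformly bounded in $\cH_k$, and the standard rate $\|\hat{\mu}_{P_X^{(i)}}-\mu_{P_X^{(i)}}\|_{\cH_k} = O(1/\sqrt{n_e})$, each entry is $O(1/\sqrt{n_e})$. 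The same holds for $\widehat{\bfM}_{XY}-\bfM_{XY}$ and $\widehat{\bfM}_{YY}-\bfM_{YY}$.

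\textbf{Step 3 (assembly).} Since the matrices have fixed dimensions $\ell$ and $r$ (independent of $n_e$), the entrywise $O(1/\sqrt{n_e})$ bound immediately gives operator-norm bounds of the same order. Because $\bflambda \in \Delta_\ell$ and $\bfeta \in \Delta_r$ satisfy $\|\bflambda\|_2 \leq 1$ and $\|\bfeta\|_2 \leq 1$, multiplying yields $\|(\widehat{\bfM}_{XX}-\bfM_{XX})\bflambda\|_2 = O(1/\sqrt{n_e})$ and similarly for the other terms. Applying the triangle inequality to the block expression from Step~1 gives $\|\nabla L_{n_e}(\bflambda,\bfeta) - \nabla L(\bflambda,\bfeta)\| = O(1/\sqrt{n_e})$, with the implicit constant depending only on $\ell$, $r$, and the kernel bound, hence uniform in $(\bflambda,\bfeta) \in \Delta_\ell \times \Delta_r$.

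The argument is essentially routine given the quadratic structure; the only care needed is in Step~2, where one must avoid inflating the rate by naive use of the triangle inequality and instead apply the telescoping trick already used for $L_{n_e}$, using the boundedness of the canonical features provided by the kernel assumption.
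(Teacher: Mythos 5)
Your proof is correct and follows essentially the same route as the paper: compute the explicit affine gradients of the two quadratic forms, reduce the difference to the Gram-matrix errors $\widehat{\bfM}-\bfM$, and control those via the $O(1/\sqrt{n_e})$ convergence of empirical kernel mean embeddings exactly as in Proposition~\ref{prop: uniform_convergence}. The paper's own proof is just a terser version of your Steps 1--3, so no further comment is needed.
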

\begin{proof}
    Differentiating $L_{n_e}$ with respect to its arguments yields 
    \begin{align*}
        \nabla L_{n_e}(\bflambda,\bfeta) = \begin{bmatrix}
            2\widehat{\bfM}_{XX}\bflambda -2 \widehat{\bfM}_{XY}\bfeta \\
            2\widehat{\bfM}_{YY}\bfeta - 2 \widehat{\bfM}_{YX}\bflambda 
        \end{bmatrix},
    \end{align*}
    whereas by differentiating $L(\bflambda,\bfeta)$, we have
    \begin{align*}
        \nabla L(\bflambda,\bfeta) = \begin{bmatrix}
            2{\bfM}_{XX}\bflambda -2 {\bfM}_{XY}\bfeta \\
            2{\bfM}_{YY}\bfeta - 2 {\bfM}_{YX}\bflambda 
        \end{bmatrix}.
    \end{align*}
    By following the same arguments as in the proof for Proposition~\ref{prop: uniform_convergence}, i.e., that empirical kernel mean embeddings converge to population counterparts at rate $O(\nicefrac{1}{\sqrt{n_e}})$, we can see that $\|\nabla L_{n_e}(\bflambda, \bfeta) - \nabla L(\bflambda, \bfeta)\| = {O}\left(\nicefrac{1}{\sqrt{n_e}}\right)$.
\end{proof}

\begin{proposition}
\label{proposition: eta_estimator_convergence}
    Under Assumption~\ref{assumption 0}, ~\ref{assumption: 1}, and under the null $H_{0,\in}: P_X \in \cC_Y$, $\bfeta^e$, the minimiser of the empirical KCD $L_{n_e}(1, \bfeta)$, converges to $\bfeta_0$, the minimiser of the population KCD $L(1, \bfeta)$, at the rate of ${O}(\nicefrac{1}{\sqrt{n_e}})$.
\end{proposition}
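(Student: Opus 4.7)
The plan is to obtain the $O(1/\sqrt{n_e})$ rate by combining strong convexity of the empirical objective with a variational inequality argument that routes the bound through \emph{gradient} rather than \emph{objective value} convergence. The crux is that under the null, the minimiser $\bfeta_0$ of $L(1,\cdot)$ is not merely a constrained minimiser over $\Delta_r$ but an unconstrained global minimiser over $\RR^r$, so $\nabla L(1,\bfeta_0) = 0$.

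First I would set up $\bfeta_0$ and record its key structural properties. Under $H_{0,\in}$ there exists $\bfeta_0 \in \Delta_r$ with $\bfeta_0^\top \bfP_Y = P_X$, and by Assumption~\ref{assumption 0} this representation is unique. Since $L(1,\bfeta) = \|\mu_{P_X} - \bfeta^\top\vec{\mu}_{\bfP_Y}\|_{\cH_k}^2 \geq 0$ with equality at $\bfeta_0$, we obtain the critical gradient identity
\[
\nabla L(1,\bfeta_0) \;=\; 2\bfM_{YY}\bfeta_0 - 2\bfM_{YX} \;=\; 0.
\]
By Assumptions~\ref{assumption 0} and~\ref{assumption: 1} the KMEs of the extreme points are linearly independent in $\cH_k$, so $\bfM_{YY}$ is positive definite (by the same reasoning as Proposition~\ref{prop: positive definiteness}); hence $L(1,\cdot)$ is $c$-strongly convex with $c = 2\lambda_{\min}(\bfM_{YY}) > 0$. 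Because $\widehat{\bfM}_{YY}\to\bfM_{YY}$ at rate $O(1/\sqrt{n_e})$ (implicit in Proposition~\ref{prop: uniform_convergence}), $L_{n_e}(1,\cdot)$ is also $c'$-strongly convex for any $c'<c$ once $n_e$ is large enough.

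The main step is the variational inequality argument. Since $\bfeta^e$ minimises the convex function $L_{n_e}(1,\cdot)$ over the convex set $\Delta_r$, first-order optimality gives $\nabla L_{n_e}(1,\bfeta^e)^\top(\bfeta_0 - \bfeta^e) \geq 0$. Combined with the $c'$-strong monotonicity of $\nabla L_{n_e}(1,\cdot)$,
\[
c'\,\|\bfeta^e - \bfeta_0\|^2 \;\leq\; \bigl[\nabla L_{n_e}(1,\bfeta^e) - \nabla L_{n_e}(1,\bfeta_0)\bigr]^\top(\bfeta^e - \bfeta_0) \;\leq\; -\nabla L_{n_e}(1,\bfeta_0)^\top(\bfeta^e - \bfeta_0).
\]
Because $\nabla L(1,\bfeta_0) = 0$, the right-hand side equals $-[\nabla L_{n_e}(1,\bfeta_0) - \nabla L(1,\bfeta_0)]^\top(\bfeta^e - \bfeta_0)$, which Cauchy--Schwarz bounds by $\|\nabla L_{n_e}(1,\bfeta_0) - \nabla L(1,\bfeta_0)\|\cdot\|\bfeta^e - \bfeta_0\|$. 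Proposition~\ref{prop: gradient_uniform_convergence} controls this gradient gap at $O(1/\sqrt{n_e})$, and cancelling $\|\bfeta^e - \bfeta_0\|$ on both sides yields $\|\bfeta^e - \bfeta_0\| = O(1/\sqrt{n_e})$.

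The hard part is recognising that the gradient route is essential. A direct strong-convexity argument via objective values, namely $L(1,\bfeta^e) - L(1,\bfeta_0) \leq 2\sup|L_{n_e}(1,\cdot) - L(1,\cdot)| = O(1/\sqrt{n_e})$, would only give $\|\bfeta^e - \bfeta_0\|^2 = O(1/\sqrt{n_e})$ and hence the slower $O(n_e^{-1/4})$ rate. It is the null-specific cancellation $\nabla L(1,\bfeta_0) = 0$---which comes from $\bfeta_0$ being a global unconstrained minimum rather than a merely constrained one---that permits the gradient-based argument and delivers the optimal $\sqrt{n_e}$-rate.
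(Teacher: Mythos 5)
Your proof is correct, and it rests on the same three pillars as the paper's: positive definiteness of $\bfM_{YY}$ (hence strong convexity of the quadratic objective), the null-specific identity $\nabla L(1,\bfeta_0)=0$, and the $O(\nicefrac{1}{\sqrt{n_e}})$ convergence of $\nabla L_{n_e}$ to $\nabla L$ from Proposition~\ref{prop: gradient_uniform_convergence}. The route differs in one technical respect: the paper Taylor-expands $\nabla L_{n_e}$ around $\bfeta_0$, sets $\nabla L_{n_e}(1,\bfeta^e)=0$, and inverts the Hessian (a Z-estimator-style argument), whereas you use the constrained first-order optimality condition $\nabla L_{n_e}(1,\bfeta^e)^\top(\bfeta_0-\bfeta^e)\geq 0$ together with strong monotonicity of the gradient. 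Your version is arguably the more careful one: the paper's step $\nabla L_{n_e}(1,\bfeta^e)=0$ is only guaranteed when the minimiser over $\Delta_r$ is an interior stationary point, while the variational inequality holds regardless of whether $\bfeta^e$ sits on the boundary of the simplex, and it also dispenses with the higher-order remainder terms entirely since the objective is exactly quadratic. Your closing remark correctly identifies why the gradient route (rather than a uniform-convergence-of-objective-values route) is what delivers the $\sqrt{n_e}$-rate; this is exactly the role the identity $\nabla L(1,\bfeta_0)=0$ plays in the paper's last line as well.
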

\begin{proof}
    Since $L_{n_e}(1, \bfeta)$ is twice continuously differentiable in $\bfeta$, we can apply the Taylor expansion to $\nabla L_{n_e}(1, \bfeta)$ around $\bfeta_0$ and obtain,
    \begin{align*}
        \nabla L_{n_e}(1, \bfeta^e) = \nabla L_{n_e}(1, \bfeta_0) + \nabla^2 L_{n_e}(1, \bfeta_0)^\top (\bfeta^e - \bfeta_0) + o(\|\bfeta^e - \bfeta_0\|).
    \end{align*}
    For simplicity, we will write $L_{n_e}(1, \bfeta)$ as $L_{n_e}(\bfeta)$, omitting the first argument. Hence, we have instead,
    \begin{align*}
        \nabla L_{n_e}(\bfeta^e) = \nabla L_{n_e}(\bfeta_0) + \nabla^2 L_{n_e}(\bfeta_0)^\top (\bfeta^e - \bfeta_0) + o(\|\bfeta^e - \bfeta_0\|).
    \end{align*}
    Next, since $\bfeta^e$ minimises $L_{n_e}$, $\nabla L_{n_e}(\bfeta^e)=0$. Furthermore, recall that
    \begin{align*}
        L(1,\bfeta) = \bfeta^\top \bfM_{YY}\bfeta -2\bfM_{XY}\bfeta + c
    \end{align*}
    for some positive constant $c$. Since $\bfM_{YY}$ is positive definite (Proposition~\ref{prop: positive definiteness}), the quadratic form $L$ is strongly convex. This implies that there is a lower bound $c_0 > 0$ for the singular values of $\nabla^2 L(\bfeta_0)$ and that the operator norm $\|\nabla^2 L(\bfeta_0)\| \geq c_0 I$ for some identity matrix $I$. Furthermore, since $L_{n_e}$ converges to $L$ uniformly, for large enough $n_e$, there also exists a positive constant $c_1$ such that $\|\nabla^2 L_{n_e}(\bfeta_0)\| \geq c_1 I$. Hence, for large enough $n_e$, combining all the results yields
    \begin{align*}
        \nabla L_{n_e}(\bfeta^e) &= \nabla L_{n_e}(\bfeta_0) + \nabla^2 L_{n_e}(\bfeta_0)^\top (\bfeta^e - \bfeta_0) + o(\|\bfeta^e - \bfeta_0\|) \\
        \implies 0 &= \nabla L_{n_e}(\bfeta_0) + \nabla^2 L_{n_e}(\bfeta_0)^\top (\bfeta^e - \bfeta_0) + o(\|\bfeta^e - \bfeta_0\|) \\
        \implies \nabla^2 L_{n_e}(\bfeta_0)^\top (\bfeta^e - \bfeta_0) &= -\nabla L_{n_e}(\bfeta_0) + o(\|\bfeta^e - \bfeta_0\|) \\
        \implies \|\bfeta^e - \bfeta_0\| &\leq \frac{1}{c_1}\|\nabla L_{n_e}(\bfeta_0)\| + h.o. \\
        &= {O}\left(\frac{1}{\sqrt{n_e}}\right)
    \end{align*}
    where the last step follows from the fact that $\|\nabla L_{n_e}(\bfeta)\| = \|\nabla L_{n_e}(\bfeta_0) - \nabla L(\bfeta_0)\|$ since $\bfeta_0$ is the minimiser of $L$. This error converges at rate ${O}(\nicefrac{1}{\sqrt{n_e}})$ as proven in Proposition~\ref{prop: gradient_uniform_convergence}. This concludes the proof of $\sqrt{n_e}$-consistency for the estimator $\bfeta^e$.
\end{proof}


\MainTheoremOne*
\begin{proof}[Proof for Theorem~\ref{thm: main_theorem_h0}] 

    \textbf{Error on the test statistic.} Under Assumptions \ref{assumption: 2}, there exists some $n_0$, such that for $n_t > n_0$, $\cL_{n_t}(1,\bfeta^e)$ as a function of $\bfeta^e$ is continuous in $\Delta_r$ and continuously differentiable in its interior. Therefore, by invoking the mean value theorem, we have
    \begin{align*}
        n_t\cL_{n_t}(1,\bfeta^e) = n_t\cL_{n_t}(1,\bfeta^0) + n_t\langle\bfeta^e - \bfeta_0, \nabla\cL_{n_t}(1,\tilde{\bfeta})\rangle
    \end{align*}
    for some $\tilde{\bfeta}$ lying on the line segment between $\bfeta_0$ and $\bfeta^e$. Rearranging terms yields
    \begin{align}
        n_t\cL_{n_t}(1,\bfeta^e) - n_t\cL_{n_t}(1,\bfeta_0) &= n_t\langle \bfeta^e - \bfeta_0, \nabla \cL_{n_t}(1,\tilde{\bfeta})\rangle \nonumber \\
        \implies |n_t\cL_{n_t}(1,\bfeta^e) - n_t\cL_{n_t}(1,\bfeta_0)| &\leq n_t \|\bfeta^e - \bfeta_0\| \|\nabla \cL_{n_t}(1,\tilde{\bfeta}) \label{eq: error_bound_for_mmd_eta}\|
    \end{align}
where we used the Cauchy-Schwarz inequality on the inner product. Notice that 
\begin{eqnarray*}
    \|\nabla \cL_{n_t}(1, \tilde{\bfeta})\| 
        &=& \|\nabla \cL_{n_t}(1, \tilde{\bfeta}) - \nabla \cL_{n_t}(1, \bfeta_0) + \nabla \cL_{n_t}(1, \bfeta_0)\|\\
        &\stackrel{(\clubsuit)}{=}& \|\nabla \cL_{n_t}(1, \tilde{\bfeta}) - \nabla \cL_{n_t}(1, \bfeta_0) + \nabla \cL_{n_t}(1, \bfeta_0) - \nabla L(1, \bfeta_0) \| \\
        &\leq& \|\nabla \cL_{n_t}(1, \tilde{\bfeta}) - \nabla \cL_{n_t}(1, \bfeta_0)\| + \|\nabla \cL_{n_t}(1, \bfeta_0) - \nabla L(1, \bfeta_0) \| \\
        &\stackrel{(\spadesuit)}{\leq}& C'\|\tilde{\bfeta} - \bfeta_0\| + C''\|\cL_{n_t} - L\|_{\infty} \\
        &\stackrel{(\heartsuit)}{\leq}& C'\|\bfeta^e - \bfeta_0\| + C''\|\cL_{n_t} - L\|_{\infty} \\
        &=& {O}\left(\frac{1}{\sqrt{n_e}} + \frac{1}{\sqrt{n_t}}\right).
\end{eqnarray*}
In $(\clubsuit)$, we used the fact that under the null and Assumption~\ref{assumption: 1}, $\bfeta_0$ is the minimiser of the population KCD. That is, since $P_X = \bfeta_0^\top \bfP_Y$, $\nabla L(1,\bfeta_0) = 0$. In $(\spadesuit)$, we used the Lipschitz conditions on the gradient terms stated in Assumption~\ref{assumption: 2}, i.e., $\|\nabla \cL_{n_t}(1, \tilde{\bfeta}) - \nabla \cL_{n_t}(1, \bfeta_0)\| \leq C'\|\tilde{\bfeta} - \bfeta_0\|$ for some constant $C'$, and $\|\nabla \cL_{n_t}(1, \bfeta_0) - \nabla L(1, \bfeta_0) \| \leq \|\nabla \cL_{n_t} - \nabla L\|_{\infty} \leq C''\|\cL_{n_t} - L\|_{\infty}$. In $(\heartsuit)$, we used the fact that $\tilde{\bfeta} = t\bfeta^e + (1-t)\bfeta_0$ for some $t\in[0,1]$, therefore by sandwiching argument, $\|\tilde{\bfeta} - \bfeta_0\| \leq \|\bfeta^e - \bfeta_0\|$. Finally, the last equality follows from the convergence rate results in Proposition~\ref{proposition: eta_estimator_convergence} and Proposition~\ref{prop: uniform_convergence}.

Next, continuing from Equation~\eqref{eq: error_bound_for_mmd_eta}, 
\begin{eqnarray*}
    |n_t\cL_{n_t}(1,\bfeta^e) - n_t\cL_{n_t}(1,\bfeta_0)|   
        &\leq& n_t \|\bfeta^e - \bfeta_0\| \|\nabla \cL_{n_t}(1,\tilde{\bfeta}) \| \\
        &\leq& n_t \frac{C}{\sqrt{n_e}} \left(\frac{1}{\sqrt{n_e}} + \frac{1}{\sqrt{n_t}}\right) \\
        &=& {O}\left(\frac{n_t}{n_e} + \sqrt{\frac{n_t}{n_e}}\right) \\
        &=& {O}\left(\sqrt{\frac{n_t}{n_e}}\right),
\end{eqnarray*}
where $C$ is some constant term.

\paragraph{Limiting distribution under the null.} We can now apply the Slutsky's theorem. For splitting ratio $\rho$ chosen such that $\nicefrac{n_t}{n_e} \to 0$, we have
\begin{eqnarray*}
    \lim_{n\to\infty}n_t\cL_{n_t}(1,\bfeta^e) 
        &=& \lim_{n\to\infty}n_t\cL_{n_t}(1,\bfeta^e) - \lim_{n\to\infty}n_t\cL_{n_t}(1,\bfeta_0) + \lim_{n\to\infty}n_t\cL_{n_t}(1,\bfeta_0) \\
        &=& \lim_{n\to\infty} (n_t\cL_{n_t}(1,\bfeta^e)-n_t\cL_{n_t}(1, \bfeta_0)) + \lim_{n_t\to\infty} n_t\cL_{n_t}(1,\bfeta_0) \\
        &\overset{D}{\rightarrow}& 0 + \sum_{i=1}^\infty \zeta_iZ_i^2,
\end{eqnarray*}
for standard normal random variables $Z_i \overset{i.i.d}{\sim} N(0,1)$, and a certain eigenvalue $\zeta_i$ depending on the choice of kernel and $P_X$, with $\sum_{i=1}^\infty \zeta_i < \infty$. For exact details, see \citet[Theorem 12]{gretton2012kernel}. This result shows that as long as we choose an adaptive splitting ratio such that $\nicefrac{n_t}{n_e} \to 0$, the null distribution of our test statistic will converge in distribution to the null distribution of the test statistic as if the oracle parameter is known. Appendix~\ref{appendix_subsubsec: null_dist_specification} provides an empirical demonstration of this result.

\paragraph{Consistency against the fixed alternative.} The proof strategy follows closely from \citep[Theorem 2]{key_composite_2024}. Under $H_{A,\in}$, we first show that $\lim\inf_{n\to\infty}\cL_{n_t}(1, \bfeta^e) > 0$. Recall that
\begin{eqnarray*}
    \cL_{n_t}(1,\bfeta^e) 
        &=& \|\hat{\mu}_{P_X} - \hat{\mu}_{{\bfeta^e}^\top\bfP_Y}\|^2_{\cH_k} \\
        &=& \|\hat{\mu}_{P_X} - \mu_{P_X} + \mu_{P_X} - \mu_{{\bfeta^e}^\top \bfP_Y} + \mu_{{\bfeta^e}^\top \bfP_Y} - \hat{\mu}_{{\bfeta^e}^\top\bfP_Y} \|^2_{\cH_k} \\
        &\geq& \|\mu_{P_X} - \mu_{{\bfeta^e}^\top\bfP_Y}\|^2_{\cH_k} - \|\hat{\mu}_{P_X} - \mu_{P_X}  + \mu_{{\bfeta^e}^\top \bfP_Y} - \hat{\mu}_{{\bfeta^e}^\top\bfP_Y} \|^2_{\cH_k} \\
        &\geq& \|\mu_{P_X} - \mu_{{\bfeta^e}^\top\bfP_Y}\|^2_{\cH_k} - \|\hat{\mu}_{P_X} - \mu_{P_X}\|  - \|\mu_{{\bfeta^e}^\top \bfP_Y} - \hat{\mu}_{{\bfeta^e}^\top\bfP_Y} \|^2_{\cH_k}
\end{eqnarray*}
where the last two steps follow from the triangle inequality. Note that as $n\to\infty$, both $\|\hat{\mu}_{P_X} - \mu_{P_X}\|$ and $\|\mu_{{\bfeta^e}^\top \bfP_Y} - \hat{\mu}_{{\bfeta^e}^\top\bfP_Y}\|$ are zeros \emph{almost surely} by the standard law of large numbers for RKHS~\citep{berlinet2011reproducing}. We just need to show that
\begin{align*}
    \lim\inf_{n\to\infty} \|\mu_{P_X} - \mu_{{\bfeta^e}^\top\bfP_Y}\| > 0.
\end{align*}
We proceed by contradiction. To emphasize the dependence of $\bfeta^e$ on the sample size, we write $\mu_{{\bfeta^e}^\top\bfP_Y}$ as $\mu_{\bfeta(n_e)}$. Suppose that $\lim\inf_{n\to\infty} \|\mu_{P_X} - \mu_{\bfeta(n_e)}\| = 0$, then by definition of the limit infimum, there exists a subsequence of estimators $\bfeta(a(n_e))$ such that,
\begin{align*}
    \lim_{n\to\infty}\|\mu_{P_X} - \mu_{\bfeta(a(n_e))}\|_{\cH_k}^2 = 0.
\end{align*}
Furthermore, since $\Delta_r$ is compact, there exists a subsequence $\bfeta(b(a(n_e)))$ and $\bfeta^\star \in \Delta_r$ such that $\lim_{n\to\infty}\|\bfeta(b(a(n_e))) - \bfeta^\star\| = 0$. Moreover, since the kernel is bounded by Assumption~\ref{assumption: 1}, we deduce,
\begin{align*}
    \MoveEqLeft \|\mu_{\bfeta(b(a(n_e)))} - \mu_{\bfeta^\star}\|^2_{\cH_k} \\ 
    &= \int\int k(x,y) \left((\bfeta(b(a(n_e)))^\top\bfP_Y)(x) - ({\bfeta^\star}^\top\bfP_Y)(x) \right)\left((\bfeta(b(a(n_e)))^\top\bfP_Y)(y) - ({\bfeta^\star}^\top\bfP_Y)(y) \right) dxdy \\
    &\to 0
\end{align*}
as $n\to \infty$. Therefore, by triangle inequality, we have
\begin{align*}
    \|\mu_{\bfeta^\star} - \mu_{P_X}\| \leq \|\mu_{\bfeta^\star} - \mu_{\bfeta(b(a(n_e))}\| + \|\mu_{\bfeta(b(a(n_e))} - \mu_{P_X}\| \to 0.
\end{align*}
Therefore, $\|\mu_{\bfeta^\star} - \mu_{P_X}\| = \operatorname{MMD}(P_X, {\bfeta^\star}^\top\bfP_Y) = 0$, but since the kernel is characteristic by Assumption~\ref{assumption: 1}, this implies there exists $\bfeta^\star \in \Delta_r$ such that ${\bfeta^\star}^\top\bfP_Y = P_X$. This is a contradiction under the alternative hypothesis $H_{A,\in}: \not\exists \bfeta\in\Delta_r, \bfeta^\top\bfP_Y = \bfP_X$. Finally, since $\lim\inf_{n\to\infty} \cL_{n_t}(1,\bfeta^e) > 0$, this implies,
\begin{align*}
    n_t\cL_{n_t}(1,\bfeta^e) \to \infty.
\end{align*}
This concludes the consistency proof for the test.
\end{proof}

\subsection{Proofs for Theorem~\ref{thm: main_theorem_2}}

\begin{proposition} 
\label{prop: local_is_global}
Under the null $H_{0,\cap}$ and Assumption~\ref{assumption: 1}, any local minimiser of $L$ is a global minimiser.
\end{proposition}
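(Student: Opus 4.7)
The plan is to observe that $L$ is in fact \emph{jointly} convex on $\Delta_\ell \times \Delta_r$---not merely biconvex---and then invoke the standard fact that every local minimiser of a convex function on a convex set is a global minimiser. Under this viewpoint, the biconvex description in the main text reflects the coordinate-descent implementation rather than a genuine lack of joint convexity.

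First I would re-express $L$ in manifestly convex form. Using Proposition~\ref{prop: credal_mmd} together with the linearity of the kernel mean embedding in the mixing weights,
\begin{equation*}
L(\bflambda, \bfeta) \;=\; \bigl\|\, \bflambda^{\top} \vec{\mu}_{\bfP_X} \;-\; \bfeta^{\top} \vec{\mu}_{\bfP_Y} \,\bigr\|_{\cH_k}^{2},
\end{equation*}
where $\vec{\mu}_{\bfP_X} = (\mu_{P_X^{(1)}}, \ldots, \mu_{P_X^{(\ell)}})$ and $\vec{\mu}_{\bfP_Y}$ is defined analogously; these lie in $\cH_k$ by boundedness of $k$ (Assumption~\ref{assumption: 1}). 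Then I would note that the map $T : (\bflambda, \bfeta) \mapsto \bflambda^{\top} \vec{\mu}_{\bfP_X} - \bfeta^{\top} \vec{\mu}_{\bfP_Y}$ is jointly linear from $\RR^\ell \times \RR^r$ into $\cH_k$, and that $g(v) = \|v\|_{\cH_k}^{2}$ is convex; hence $L = g \circ T$ is convex as the composition of a convex function with a linear map. Equivalently, the block Hessian $\begin{pmatrix} \bfM_{XX} & -\bfM_{XY} \\ -\bfM_{YX} & \bfM_{YY} \end{pmatrix}$ is positive semi-definite precisely because the quadratic form it induces equals $L(\bflambda, \bfeta) \geq 0$.

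Finally, since $\Delta_\ell \times \Delta_r$ is convex, I would close with the textbook argument: if $(\bflambda^{*}, \bfeta^{*})$ were a local but not global minimiser, pick $(\bflambda', \bfeta')$ with $L(\bflambda', \bfeta') < L(\bflambda^{*}, \bfeta^{*})$; by convexity,
\begin{equation*}
L\bigl(t(\bflambda', \bfeta') + (1-t)(\bflambda^{*}, \bfeta^{*})\bigr) \;\leq\; t\, L(\bflambda', \bfeta') + (1-t)\, L(\bflambda^{*}, \bfeta^{*}) \;<\; L(\bflambda^{*}, \bfeta^{*})
\end{equation*}
for every $t \in (0, 1]$, producing strictly smaller values of $L$ arbitrarily close to $(\bflambda^{*}, \bfeta^{*})$ and contradicting local minimality.

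The main conceptual point---and the only real obstacle, given that the surrounding text emphasises biconvexity---is recognising that the bilinear coupling $-2\bflambda^{\top} \bfM_{XY} \bfeta$ does not destroy joint convexity, and is in fact forced by $L$'s interpretation as a squared RKHS norm of an affine-in-$(\bflambda, \bfeta)$ element. The null $H_{0,\cap}$ enters only in guaranteeing that the global minimum value $0$ is actually attained (via the characteristic property in Assumption~\ref{assumption: 1}), so that the solution set $\Theta$ referenced in Theorem~\ref{thm: main_theorem_2} is non-empty; the local-equals-global conclusion itself is automatic from convexity alone.
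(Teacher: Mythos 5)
Your proof is correct, and it takes a genuinely different route from the paper's. The paper argues via first-order conditions: it takes a stationary point $\nabla L(\bflambda,\bfeta)=0$, unpacks the two gradient blocks to show that $\sum_a\lambda_a\mu_{P_X^{(a)}}-\sum_b\eta_b\mu_{P_Y^{(b)}}$ is orthogonal to every element of the span in which it lives, concludes it is the zero element of $\cH_k$, and then invokes characteristicness to get $\bflambda^\top\bfP_X=\bfeta^\top\bfP_Y$, so the point attains the global minimum value $0$. Your observation that $L=g\circ T$ with $T$ linear and $g=\|\cdot\|_{\cH_k}^2$ convex is sound, and it does establish \emph{joint} convexity: the Hessian is twice the Gram matrix of $\{\mu_{P_X^{(1)}},\dots,\mu_{P_X^{(\ell)}},-\mu_{P_Y^{(1)}},\dots,-\mu_{P_Y^{(r)}}\}$, hence PSD, so the paper's surrounding remark that the objective is ``biconvex\dots but not jointly convex'' is in fact an overstatement (the bilinear coupling alone is indefinite, but the full quadratic form is a squared norm). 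Your route buys two things: it is more elementary (no characteristic kernel needed for the local-equals-global conclusion itself), and it correctly covers local minimisers on the boundary of $\Delta_\ell\times\Delta_r$, where the paper's implicit assumption $\nabla L=0$ need not hold for a constrained problem. What the paper's argument buys in exchange is the explicit identification, used downstream in Theorem~\ref{thm: main_theorem_2}, that every minimiser satisfies $\bflambda^\top\bfP_X=\bfeta^\top\bfP_Y$; you recover this only indirectly, by combining $H_{0,\cap}$ (which forces the global minimum value to be $0$) with characteristicness, as you note at the end.
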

\begin{proof}
    Let $\Theta = \arg\min_{\bflambda\in\Delta_\ell, \bfeta\in\Delta_r} L(\bflambda,\bfeta)$ be the set of global minimisers for $L$. Since $L$ is  biconvex, standard results state that iterative minimisation guarantees arrival at local minima, i.e., we obtain $(\bflambda^e,\bfeta^e)$ such that,
    \begin{align*}
        \left.\nabla L(\bflambda,\bfeta)\right|_{\bflambda=\bflambda^e, \bfeta=\bfeta^e} = 0.
    \end{align*}
    Now, notice that
    \begin{align*}
        \nabla L(\bflambda,\bfeta) = \begin{bmatrix}
            2\bfM_{XX}\bflambda -2 \bfM_{XY}\bfeta \\
            2\bfM_{YY}\bfeta - 2 \bfM_{YX}\bflambda
        \end{bmatrix}.
    \end{align*}
    Setting $\nabla L(\bflambda,\bfeta) = 0$ and focusing on the top block matrix, we have,
    \begin{align*}
        \bfM_{XX}\bflambda - \bfM_{XY}\bfeta = 0.
    \end{align*}
    Specifically, for the $i^{th}$ entry of $\bfM_{XX}\bflambda - \bfM_{XY}\bfeta$, we have,
    \begin{align*}
        \sum_{a=1}^\ell (\bfM_{XX})_{i,a} \lambda_a - \sum_{b=1}^r (\bfM_{XY})_{i, b}\eta_b &= 0 \\ 
        \implies \sum_{a=1}^\ell \langle\mu_{P_X^{(i)}}, \mu_{P_X^{(a)}}\rangle \lambda_a - \sum_{b=1}^r \langle\mu_{P_X^{(i)}}, \mu_{P_Y^{(b)}}\rangle \eta_b &= 0 \\
        \implies \left\langle \mu_{P_X^{(i)}}, \sum_{a=1}^\ell \lambda_a \mu_{P_X^{(a)}} - \sum_{b=1}^r \eta_b \mu_{P_Y^{(b)}}\right\rangle = 0.
    \end{align*}
    Similarly, for the $j^{th}$ entry of $\bfM_{YY}\bfeta - \bfM_{YX}\bflambda$, we have
    \begin{align*}
        \left\langle \mu_{P_Y^{(j)}}, \sum_{a=1}^\ell \lambda_a \mu_{P_X^{(a)}} - \sum_{b=1}^r \eta_b \mu_{P_Y^{(b)}}\right\rangle = 0.
    \end{align*}
    Since $\sum_{a=1}^\ell \lambda_a \mu_{P_X^{(a)}} - \sum_{b=1}^r \eta_b \mu_{P_Y^{(b)}}$ are in the span of $\Xi = \{\mu_{P_X^{(1)}},\dots, \mu_{P_X^{(\ell)}}, \mu_{P_Y^{(1)}},\dots, \mu_{P_Y^{(r)}} \}$ but it is orthogonal to every element in $\Xi$, we can deduce by standard geometry argument that
    \begin{align*}
        \sum_{a=1}^\ell \lambda_a \mu_{P_X^{(a)}} - \sum_{b=1}^r \eta_b \mu_{P_Y^{(b)}} = \mathbf{0}.
    \end{align*}
    Under Assumption $\ref{assumption: 1}$, since the kernel is characteristic, we have
    \begin{align*}
        & \sum_{a=1}^\ell \lambda_a \mu_{P_X^{(a)}} - \sum_{b=1}^r \eta_b \mu_{P_Y^{(b)}} = \mu_{\bflambda^\top \bfP_X} - \mu_{\bfeta^\top\bfP_Y} = \mathbf{0} \\
        &\implies \bflambda^\top \bfP_X = \bfeta^\top \bfP_Y.
    \end{align*}
    As a result, any local minimiser $(\bflambda^e, \bfeta^e)$ to the optimisation of population KCD such that $\nabla L(\bflambda^e,\bfeta^e) = 0$ implies $\bfeta^e\bfP_Y = \bflambda^e\bfP_X$, therefore $\bfeta^e, \bflambda^e \in \Theta$, the set of global minimisers that satisfies the null hypothesis.
\end{proof}



\begin{proposition}
\label{prop: plausibility_estimator_convergence}
Under the plausibility null $H_{0,\cap}$, Assumptions \ref{assumption 0}, \ref{assumption: 1}, \ref{assumption: 2}, and \ref{assumption: 3}, let $\theta^e = (\bflambda^e, \bfeta^e)$ be a pair of local minimisers of the empirical KCD objective, i.e., $\theta^e = (\bflambda^e, \bfeta^e) \in \arg\min_{\bflambda\in\Delta_\ell,\bfeta\in \Delta_r} L_{n_e}(\bflambda,\bfeta)$. Then, there exists some $n_0 \in \NN$, such that for $n_e > n_0$, there exists $\theta_0=(\bflambda_0,\bfeta_0) \in \arg\min_{\bflambda\in\Delta_\ell,\bfeta\in \Delta_r} L(\bflambda,\bfeta)$ such that $\|\theta^e - \theta_0\| = {O}(\nicefrac{1}{\sqrt{n_e}})$.
\end{proposition}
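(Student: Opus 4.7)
Here is my plan of proof. I would set $\theta := (\bflambda,\bfeta)$ and write both $L$ and $L_{n_e}$ as quadratic forms $L(\theta) = \theta^\top H \theta$ and $L_{n_e}(\theta) = \theta^\top \widehat H \theta$, where $H$ is the block matrix $\begin{pmatrix} \bfM_{XX} & -\bfM_{XY} \\ -\bfM_{YX} & \bfM_{YY}\end{pmatrix}$ and $\widehat H$ is its empirical counterpart. By the argument that underlies Proposition~\ref{prop: uniform_convergence}, $\|\widehat H - H\|_{\mathrm{op}} = O(1/\sqrt{n_e})$. Since $L_{n_e}$ is a convex quadratic on the convex domain $\Delta_\ell\times\Delta_r$, its local minimiser $\theta^e$ is automatically a global minimiser, so $L_{n_e}(\theta^e) = \min L_{n_e}$.

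I would then pick $\theta_0 \in \Theta$ to be a point closest to $\theta^e$ (which exists since $\Theta$ is compact and nonempty under the null) and combine the basic inequality $L_{n_e}(\theta^e) \leq L_{n_e}(\theta_0)$ with $L(\theta_0) = 0$ to obtain
\[L(\theta^e) \leq (L_{n_e} - L)(\theta_0) - (L_{n_e} - L)(\theta^e).\]
Because $(L_{n_e} - L)(\theta) = \theta^\top(\widehat H - H)\theta$ is itself quadratic, expanding around $\theta_0$ gives
\[(L_{n_e} - L)(\theta^e) - (L_{n_e} - L)(\theta_0) = 2\theta_0^\top(\widehat H - H)(\theta^e - \theta_0) + (\theta^e - \theta_0)^\top(\widehat H - H)(\theta^e - \theta_0),\]
which, since $\theta_0$ lies in the bounded simplex, is at most $O(1/\sqrt{n_e})\bigl(\|\theta^e - \theta_0\| + \|\theta^e - \theta_0\|^2\bigr)$.

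For the matching lower bound I would invoke Assumption~\ref{assumption: 3}: the positive-definite Schur complement controls the smallest positive eigenvalue of $H$ from below by some $c > 0$ on the orthogonal complement of $\ker H$, where $\ker H$ corresponds exactly to the tangent directions of $\Theta$ (i.e.\ pairs with $\bflambda^\top \bfP_X = \bfeta^\top \bfP_Y$, in line with Proposition~\ref{prop: local_is_global}). Taking $\theta_0$ as the projection of $\theta^e$ onto $\ker H$, which lies inside $\Theta$ once $\theta^e$ is sufficiently close to $\Theta$ (guaranteed for large $n_e$ by a preliminary crude consistency argument via uniform convergence and compactness of the simplex), the displacement $\theta^e - \theta_0$ is orthogonal to $\ker H$, yielding the quadratic growth $L(\theta^e) \geq c\|\theta^e - \theta_0\|^2$. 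Combining with the upper bound produces $c\|\theta^e - \theta_0\|^2 \leq C\|\theta^e - \theta_0\|/\sqrt{n_e}$ for $n_e$ large, from which cancelling a factor of $\|\theta^e - \theta_0\|$ gives the claimed rate $\|\theta^e - \theta_0\| = O(1/\sqrt{n_e})$.

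The principal obstacle is the quadratic-growth step: $L$ fails to be strongly convex when $\Theta$ is a non-singleton continuum, so the textbook M-estimator rate argument does not apply off the shelf. Assumption~\ref{assumption: 3} is the structural hypothesis that supplies a uniform spectral gap of $H$ transverse to $\Theta$, and a secondary care point is ensuring that the projection of $\theta^e$ onto $\Theta$ agrees with its projection onto $\ker H$ so that the orthogonal decomposition is valid; this identification is handled by the preliminary consistency step, using that for $\theta^e$ close enough to $\Theta$ the projection onto the ambient null space stays feasible.
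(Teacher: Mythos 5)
Your route is genuinely different from the paper's. The paper argues via epi-convergence (uniform convergence of $L_{n_e}$ to $L$ on the compact simplex puts $\theta^e$ on a subsequence converging to some $\theta_0\in\Theta$) and then Taylor-expands the \emph{gradient}: $0=\nabla L_{n_e}(\theta^e)=\nabla L_{n_e}(\theta_0)+\nabla^2L_{n_e}(\theta_0)(\theta^e-\theta_0)+\text{h.o.}$, uses Assumption~\ref{assumption: 3} to lower-bound the singular values of the Hessian and invert it, and concludes $\|\theta^e-\theta_0\|\leq c^{-1}\|\nabla L_{n_e}(\theta_0)-\nabla L(\theta_0)\|=O(\nicefrac{1}{\sqrt{n_e}})$ via Proposition~\ref{prop: gradient_uniform_convergence}. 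You instead run a value-based M-estimation argument: basic inequality plus the perturbation bound $\|\widehat{H}-H\|=O(\nicefrac{1}{\sqrt{n_e}})$ on one side, quadratic growth of $L$ transverse to $\Theta$ on the other. Two of your side observations are actually sharper than the paper's treatment: $L_{n_e}(\theta)=\theta^\top\widehat{H}\theta$ with $\widehat{H}$ a Gram matrix is \emph{jointly} convex, so local minimisers really are global; and reading Assumption~\ref{assumption: 3} as a spectral gap transverse to $\ker H$ is the only reading compatible with the null, since taken literally (with $\bfM_{YY}$ positive definite) it makes $H$ strictly positive definite, which forces $L>0$ on the whole simplex and contradicts $H_{0,\cap}$ --- a tension the paper's own Hessian-inversion step inherits.

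The step that does not go through as written is your quadratic-growth lower bound. You take $\theta_0$ to be the orthogonal projection of $\theta^e$ onto the linear subspace $\ker H$ and assert that this projection is feasible once $\theta^e$ is close to $\Theta$. That is false in general: $\Theta=\ker H\cap(\Delta_\ell\times\Delta_r)$, and the product of simplices is an affine polytope not containing the origin, so projecting onto $\ker H$ typically leaves the affine hull $\{\mathbf{1}^\top\bflambda=1\}\times\{\mathbf{1}^\top\bfeta=1\}$. Concretely, if $\mu_{P_X^{(1)}}=\mu_{P_Y^{(1)}}$ is the only linear relation, then $\ker H=\operatorname{Span}{(e_1,e_1)}$ and $\Theta=\{(e_1,e_1)\}$, yet the projection of a nearby feasible $\theta^e$ onto $\ker H$ is a multiple $(1-\delta)(e_1,e_1)$ with $\delta>0$, which is not in the simplex, so your chosen $\theta_0\notin\Theta$ and the orthogonal decomposition $L(\theta^e)\geq c\|\theta^e-\theta_0\|^2$ is not established for an admissible $\theta_0$. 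The statement you actually need, $L(\theta)\geq c\,\operatorname{dist}(\theta,\Theta)^2$ for all $\theta$ in the polytope, is true --- it is a Hoffman/Luo--Tseng-type error bound for convex quadratic programs --- but it requires its own argument rather than the projection-feasibility claim. With that lemma substituted in, the remainder of your chain, $c\|\theta^e-\theta_0\|^2\leq O(\nicefrac{1}{\sqrt{n_e}})\left(\|\theta^e-\theta_0\|+\|\theta^e-\theta_0\|^2\right)$ followed by absorbing the quadratic term for large $n_e$, is correct and yields the claimed rate.
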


\begin{proof}
    Since $L_{n_e}$ uniformly converges to $L$ in a compact domain, $L_{n_e}$ epi-converges to $L$~\citep[Theorem 2]{kall1986approximation}. Hence, all converging subsequence's limit points are part of the solution set $\arg\min L(\bflambda,\bfeta)$, i.e., $\lim_{n\to\infty}\arg\min L_{n_e} \subseteq \arg\min L = \Theta$. This means, for large enough $n$, any local minimiser $\theta^e$ is on some subsequence that is converging to some $\theta_0\in\Theta$. Applying the Taylor expansion to the gradient $\nabla L_{n_e}(\bflambda,\bfeta)$ with respect to this $\theta_0$ yields
    \begin{align*}
        \nabla L_{n_e}(\bflambda^e,\bfeta^e) = \nabla L_{n_e}(\bflambda_0,\bfeta_0) + \nabla^2 L_{n_e}(\bflambda_0, \bfeta_0)\begin{bmatrix}
            \bflambda^e - \bflambda_0 \\
            \bfeta^e - \bfeta_0
        \end{bmatrix} + \text{h.o.}
    \end{align*}
    For simplicity, we rewrite the expression in terms of $\theta$, then
    \begin{align}
        \nabla L_{n_e}(\theta^e) &= \nabla L_{n_e}(\theta_0) + \nabla^2 L_{n_e}(\theta_0)(\theta^e - \theta_0) + \text{h.o.} \nonumber \\
        0 &= \nabla L_{n_e}(\theta_0) + \nabla^2 L_{n_e}(\theta_0)(\theta^e - \theta_0) + \text{h.o.} \label{eq: kcd_estimator_uniform_convergence}
    \end{align}
    Note that $\nabla^2 L_{n_e}(\theta_0)$ can also be expressed as
    \begin{align*}
        \nabla^2 L_{n_e}(\theta_0) = \begin{bmatrix}
            \bfM_{XX} & -\bfM_{XY} \\
            -\bfM_{YX} & \bfM_{YY}
        \end{bmatrix}
    \end{align*}
    which by Assumption \ref{assumption: 3}, has a positive definite Schur complement, implying that the block matrix $\nabla^2 L(\theta_0)$ is positive definite and has a positive lower bound for its singular values. Using a similar argument as in the proof for Proposition~\ref{proposition: eta_estimator_convergence}, it follows that $\nabla^2 L_{n_e}$ uniformly converges to $\nabla^2 L$, therefore for large enough samples, the Schur complement for $\nabla^2 L_{n_e}$ will be positive definite as well, therefore the singular values for $\nabla^2 L_{n_e}$ will be lower bounded by some constant $c>0$. Furthermore, $\nabla^2 L_{n_e}$  will be invertible. Continuing from Equation~\eqref{eq: kcd_estimator_uniform_convergence},
    \begin{eqnarray*}
        \|\theta^e - \theta_0\| &\leq& \|\nabla^2 L_{n_e}(\theta_0)^{-1}\|\|\nabla L_{n_e}(\theta_0)\| + \text{h.o.} \\
            &\leq& \frac{1}{c}{O}\left(\frac{1}{\sqrt{n_e}}\right) \\
            &=& {O}\left(\frac{1}{\sqrt{n_e}}\right)
    \end{eqnarray*}
    since $\|\nabla L_{n_e}(\theta_0)\| = {O}\left(\nicefrac{1}{\sqrt{n_e}}\right)$ as proven in Proposition~\ref{prop: gradient_uniform_convergence}. This concludes the proposition for the convergence rate of our estimators for the plausibility test.
\end{proof}

\MainTheoremTwo*
\begin{proof}
    The overall proof strategy is analogous to the proof for Theorem~\ref{thm: main_theorem_h0}. For a fixed $n_t, n_e$, pick an optimiser $\theta^e=(\bflambda^e,\bfeta^e)$ from $\arg\min L_{n_e}(\bflambda,\bfeta)$. Since $L_{n_e}$ converges to $L$ uniformly over a compact domain, epi-convergence~\citep{kall1986approximation} implies $\lim_{n\to\infty}\arg\min L_{n_e}(\bflambda,\bfeta) \subseteq \arg\min L = \Theta$. This means $\theta^e$ is on some subsequence that converges to some $\theta_0 \in \Theta$. Let $\theta_0$ be such limit for the subsequence $\theta^e$ is on. As such, based on Assumption $\ref{assumption: 2}$, there exits some $n_0\in\NN$ such that for $n_t>n_0$, $n_t\cL_{n_t}(\theta^e)$ is continuous over $\Delta_\ell \times \Delta_r$ and differentiable over the interior. We can then invoke the mean value theorem,
    \begin{align*}
        n_t\cL_{n_t}(\theta^e) = n_t\cL_{n_t}(\theta_0) + n_t\langle\theta^e - \theta_0, \nabla \cL_{n_t}(\tilde{\theta}) \rangle
    \end{align*}
    where $\tilde{\theta}$ is some interpolation between $\theta_0$ and $\theta^e$. Rearranging the terms and applying Cauchy-Schwartz yield
    \begin{align*}
        |n_t\cL_{n_t}(\theta^e) - n_t\cL_{n_t}(\theta_0)| \leq n_t \|\theta^e - \theta_0\|\|\nabla\cL_{n_t}(\tilde{\theta})\|
    \end{align*}
    Utilising Proposition~\ref{prop: plausibility_estimator_convergence}, Assumption~\ref{assumption: 2}, and Proposition~\ref{prop: uniform_convergence}, we can express the error as
    \begin{align}
        |n_t\cL_{n_t}(\theta^e) - n_t\cL_{n_t}(\theta_0)| = {O}\left(\sqrt{\frac{n_t}{n_e}}\right).
    \end{align}
    As a result, if the split ratio is chosen such that $\nicefrac{n_t}{n_e}\to 0$ as $n\to\infty$, this error decays to zero. Next, for any $\epsilon > 0$, choose $\epsilon_1, \epsilon_2 > 0$ such that $\epsilon = \epsilon_2 + \epsilon_3$, we know there exists $n_2 \in \NN$ such that for all $n > n_2$, there exists $\theta_0 \in \Theta$, such that 
    \begin{align*}
        | F_{n_t\cL_{n_t}(\theta^e)}(x) - F_{n_t\cL_{n_t}(\theta_0)}(x)| < \epsilon_2
    \end{align*}
    since convergence almost surely implies convergence in distribution. Here, $F$ is the cumulative distribution function. Now, there also exists $n_3\in\NN$ such that for all $n> n_3$, there exists a $Z \in \cZ$ such that,
    \begin{align*}
        |F_{n_t\cL_{n_t}(\theta_0)}(x) - F_Z(x)| < \epsilon_3,
    \end{align*}
    where $Z = \sum_{i=1}^\infty \zeta_{i, \bflambda_0,\bfeta_0} Z_i^2$ is the infinite sum of chi-squared distributions that is indexed by the parameter $(\bflambda_0,\bfeta_0)$. Combining the two statements, we arrive at the main result. For $n> n_1 = \max(n_2, n_3)$, there exists $\theta_0\in\Theta$, such that,
    \begin{eqnarray*}
        |F_{n_t\cL_{n_t}(\theta^e)(x)} - F_Z(x)| &\leq& | F_{n_t\cL_{n_t}(\theta^e)}(x) - F_{n_t\cL_{n_t}(\theta_0)}(x)| + |F_{n_t\cL_{n_t}(\theta_0)}(x) - F_Z(x)| \\ 
        &<& \epsilon_2 + \epsilon_3 \\
        &=& \epsilon .
    \end{eqnarray*}
    The proof for showing $n_t\cL_{n_t}(\bflambda^e, \bfeta^e)\to\infty$ under the fixed alternative $H_{A,\cap}$ is identical to the proof in Theorem~\ref{thm: main_theorem_2} showing that $n_t\cL_{n_t}(1,\bfeta^e) \to \infty$ under the fixed alternative $H_{A,\in}$, thus is ommited.
\end{proof}

\newpage
\section{Additional Experiments}
\label{appendix_sec: further_experiments}

All the experiments were conducted on the Google Cloud Platform using a single NVIDIA V100 GPU. We provide an overview of the additional experiments below:

\begin{itemize}
    \item In Appendix~\ref{appendix_subsec: null_dist}, we simulate and plot the empirical null distribution of our test statistic using oracle parameters, as well as when parameters are estimated with both fixed and adaptive splitting ratios. The aim of these experiments is to visualize the non-diminishing bias in the null distribution when using a fixed splitting ratio and to compare this with the null distribution from the adaptive splitting ratio, which closely resembles the distribution obtained with oracle parameters. Aligning with our theoretical analysis from Theorem~\ref{thm: main_theorem_h0} and Theorem~\ref{thm: main_theorem_2}.
    \item In Appendix~\ref{appendix_subsec: more synthetic data}, we conduct ablation studies on our credal tests and assess their performance under different configurations:
        \begin{itemize}
            \item In Appendix~\ref{appendix_subsubsec: different convex weights}, we simulate 10 different convex weights $\bfeta_0$ uniformly from $\Delta_r$ to test the sensitivity of the specification test with respect to $\bfeta_0$.
            \item In Appendix~\ref{appendix_subsubsec: varying number of credal samples}, we increase the number of extreme points in the credal sets from 3 (used in the main text experiments) to 5 and 10, and study the impact on the Type I error convergence behavior.
            \item In Appendix~\ref{appendix_subsubsec: tradeoff between different adaptive splitting ratios}, we expand the set of adaptive splitting ratios from $\beta \in \{0, 0.25, 0.33\}$ to $\{0, 0.1, 0.2, 0.3, 0.4, 0.5, 0.6, 0.7\}$ to highlight the trade-off between the Type I error convergence rate and test power when different split configurations are chosen.
            \item In Appendix~\ref{subsubsec:largescale}, we conduct a large-scale experiment with up to 45,000 samples on a challenging two-sample problem, comparing a mixture of Gaussians with a mixture of Student's t-distributions (with 10 degrees of freedom) to demonstrate the scalability of our method.
            \item In Appendix~\ref{appendix_subsubsec: comparing_with_double_dipping}, we compare our standard sample splitting scheme with the "double-dipping" approaches considered in \citet{key_composite_2024} and \citet{bruck_distribution_2023}. We discuss why analysing double-dipping theoretically is challenging, and show that using double-dipping with a fixed splitting ratio may not always achieve correct Type I error control, making it unreliable for practical use.
            \item In Appendix~\ref{appendix subsub linearly dependent}, we present the results of the specification test when Assumption~\ref{assumption 0}, concerning the linear independence of extreme points, is violated. We provide a sketch of the proof explaining why this does not pose a problem, as the arguments are analogous to those addressing multiplicity in the optimisation problem for the plausibility test, as proven in Theorem~\ref{thm: main_theorem_2}.
        \end{itemize}
    \item In Appendix~\ref{appendix_subsec: mnist}, we conduct semi-synthetic experiments with credal tests using the MNIST data to demonstrate our test can handle structured data such as images. 
\end{itemize}



\subsection{Examining the Empirical Null Distributions of the Test Statistics}
\label{appendix_subsec: null_dist}

\subsubsection{Null Distributions for the Specification Test}
\label{appendix_subsubsec: null_dist_specification}
To illustrate the impact of estimation on test validity, we utilise the simulation setup described in Section~\ref{sec: experiments} to simulate the empirical null test statistic distribution. We set $n=3000$ for the following illustrations. 
\begin{itemize}
    \item \textbf{Oracle:} No estimation is involved. For each round in the $500$ repeated experiment, we draw the credal samples $\bfS_X,\bfS_Y$ and compute the test statistic $n_t\cL_{n_t}(1, \bfeta_0)$ following the sample-splitting procedure described in the main paper. Since the estimation weight is provided in this procedure, we call this the oracle set-up.
    \item \textbf{CMMD(0):} Estimation is involved. For each round in the $500$ repeated experiments, we draw the credal samples $\bfS_X,\bfS_Y$ and compute $n_t\cL_{n_t}(1, \bfeta^e)$ with a fixed splitting ratio $\rho$ such that $\nicefrac{n_t}{n_e} = 1$. We call this the CMMD(0) approach.
    \item \textbf{CMMD($\nicefrac{1}{3})$:}. Estimation is involved. For each round in the $500$ repeated experiment, we draw the credal samples $\bfS_X,\bfS_Y$ and compute $n_t\cL_{n_t}(1, \bfeta^e)$ with an adaptive splitting ratio $\rho$ such that $\nicefrac{n_t}{n_e} = \nicefrac{1}{n_e^{0.33}}$. We call this the CMMD(\nicefrac{1}{3}) approach.
\end{itemize}
\begin{figure}[!h]
    \centering
    \includegraphics[width=\linewidth]{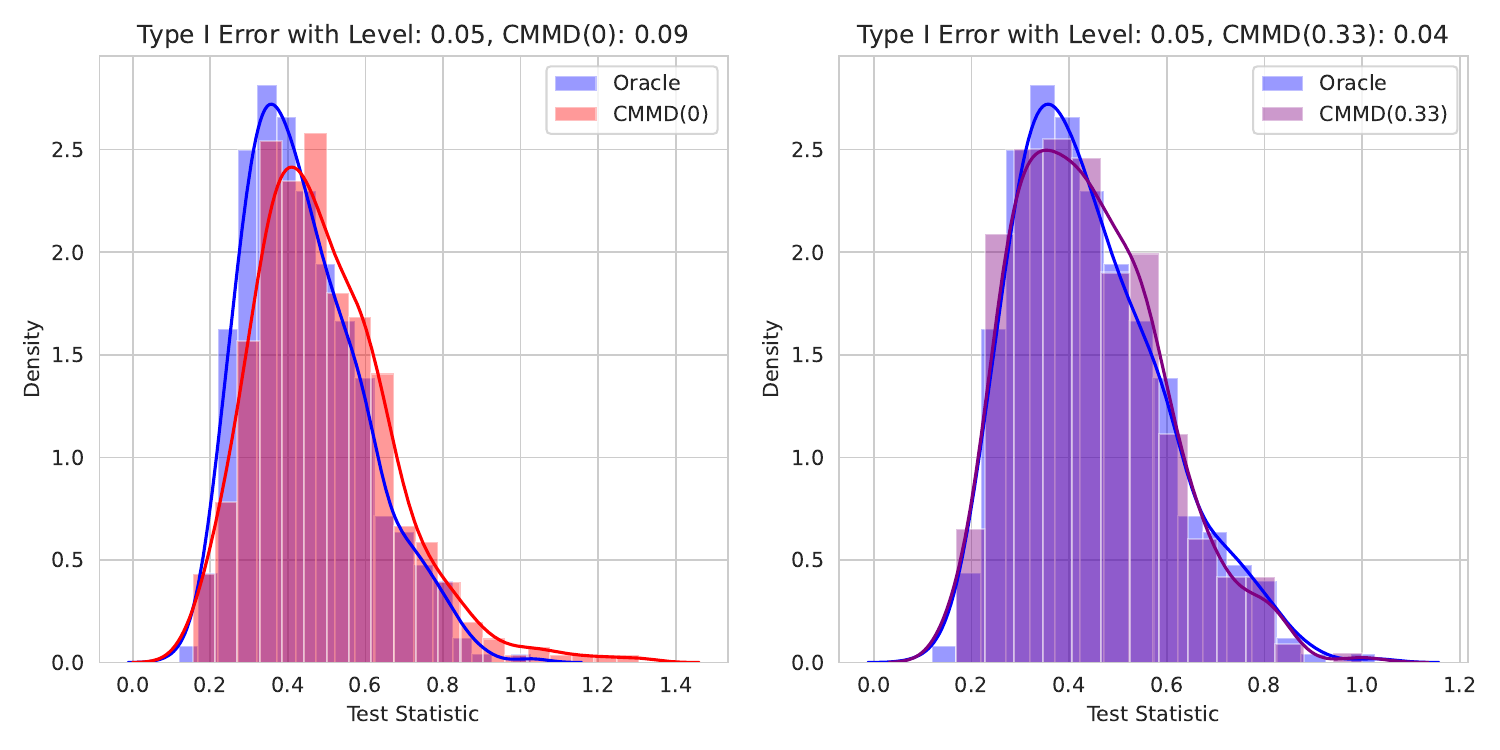}
    \caption{We visualise the impact of estimation on the null statistic distribution when using fixed splitting ratio CMMD$(0)$ and adaptive splitting ratio CMMD$(0.33)$. Using a fixed sample splitting scheme results in an empirical null distribution that presents an observable shift compared to the null distribution based on the oracle parameter. On the other hand, the adaptive sample splitting scheme results in an empirical null distribution that resembles the shape of the oracle version.}
    \label{fig:impact_of_estimation_to_null_distribution}
\end{figure}

After computing all the test statistics, we fit a kernel density estimator to visualise the distribution for each method. Figure~\ref{fig:impact_of_estimation_to_null_distribution} illustrates the impact of estimation on the null statistic distribution when using a fixed splitting ratio where $\nicefrac{n_t}{n_e} = 1$, compared to an adaptive splitting ratio where $\nicefrac{n_t}{n_e} = \nicefrac{1}{n_e^{0.33}}$. Even with 3,000 samples, the estimation error has a persistent effect on the test statistic, leading to incorrect Type I error control. As shown in the left panel, the null statistic distribution from CMMD$(0)$ is shifted to the right relative to the oracle distribution. In contrast, with adaptive sample splitting, as demonstrated in Theorem~\ref{thm: main_theorem_h0}, the estimation error decays faster relative to the decay of the test statistic, allowing for asymptotically correct Type I control. This is evident in the right panel, where the null statistic distribution of CMMD$(0.33)$ nearly overlaps with that of the oracle procedure.

\subsubsection{Null Distributions for the Plausibility Test}
\label{subsubsec: null_dist_plausibility_multiplicity}

For plausibility tests, there is an interesting phenomenon regarding the null statistic's distribution. Recall that the plausibility test is based on the following null:
\begin{align*}
    H_{0,\cap}: \cC_X\cap\cC_Y \neq \emptyset.
\end{align*}
Consider $\cC_X = \operatorname{CH}[P_1, P_2, P_3]$ and $\cC_Y = \operatorname{CH}[Q_1, P_2, P_3]$, then any convex weights $\bflambda \in \Delta_\ell$ and $\bfeta \in \Delta_r$ of the form $(0, \lambda_2, \lambda_3)$ and $(0,\eta_2, \eta_3)$ satisfies the null.  Consequently, unlike the specification test, under Assumption~\ref{assumption 0}, which has only one specific convex weight and, therefore, a single null distribution of test statistics, the plausibility test can exhibit a set of null distributions, each indexed by a pair of plausible convex weights. This raises concerns about the approximated Type I error reported in the experimental section. To investigate further, we replicated the plausibility test setup described in Section~\ref{sec: experiments}. In each repetition, we drew credal samples $\bfS_X, \bfS_Y$ from the population, estimated a pair of convex weights $\bflambda^e, \bfeta^e$, computed the test statistic $n_t \cL_{n_t}(\bflambda^e, \bfeta^e)$, and stored this value in a list of test statistics.

In the standard specification test, this list can be used to estimate the underlying null statistic distribution, as each element represents a draw from the same null distribution. However, in the plausibility test, due to the random initialisation of the iterative optimisation algorithm, each draw of credal samples $\bfS_X, \bfS_Y$ can result in a different pair of convex weights $\bflambda^e, \bfeta^e$. As a result, even with the adaptive splitting ratio, the test statistic for each round may be drawn from a null distribution that differs from previous rounds due to the randomness in the optimisation process. Intuitively, the null distribution we are observing in the experiment for the plausibility test follows the generative process:
\begin{align*}
    \PP(\text{Test Statistic}) = \int \PP(\text{Test Statistic}\mid \text{Optimisation identified $\bflambda_0,\bfeta_0$}) \,d\PP(\text{Optimisation identified $\bflambda_0,\bfeta_0$}).
\end{align*}
This scenario happens regardless of whether we use a fixed or adaptive splitting strategy, as illustrated in Figure~\ref{fig:plausibility_test_cmmd_0} and Figure~\ref{fig:plausibility_test_cmmd_0.33}.
\begin{figure}[!h]
    \centering
    \includegraphics[width=\linewidth]{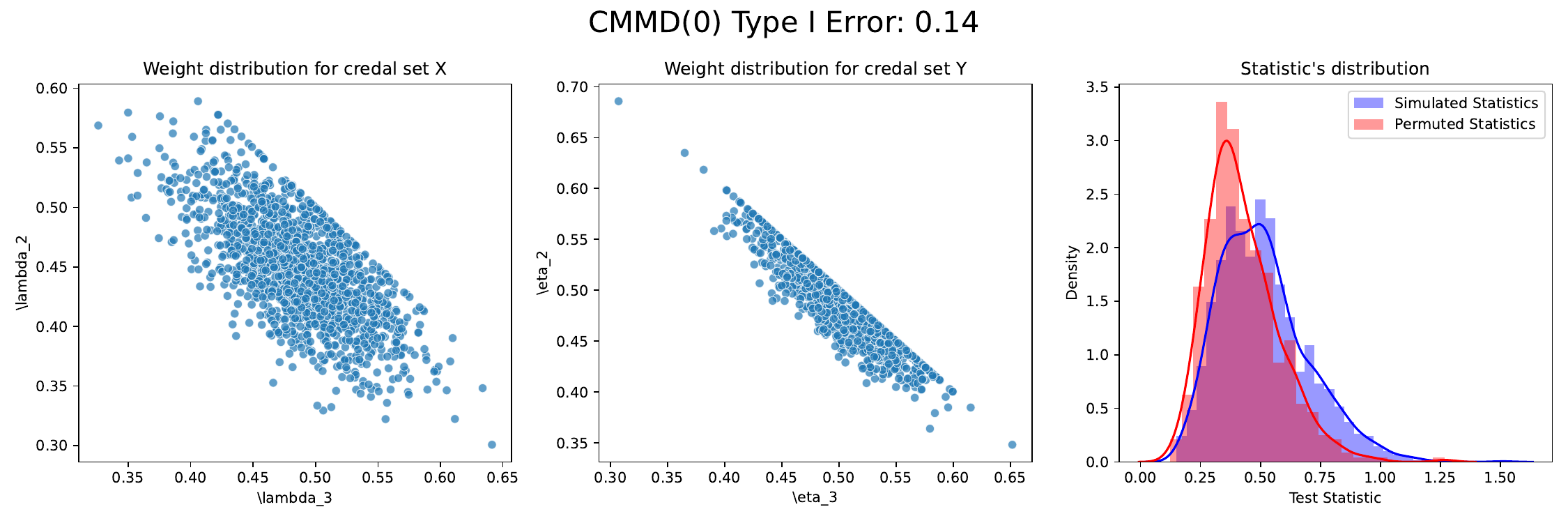}
    \caption{(Left) and (Middle): Distribution of the estimated parameters $\bflambda^e$ and $\bfeta^e$. Due to the existence of multiple pairs of weights under which the null hypothesis holds, our randomised optimisation procedure may identify a different pair of weights in each round during the repeated data sampling used to approximate the Type I error distribution in the experiments. (Right) The null statistic distribution for CMMD$(0)$ in the plausibility test, is denoted as "Simulated Statistics". The “Permuted Statistic” refers to the statistics generated through permutation during a specific round of the repeated experiment using the permutation test.}
    \label{fig:plausibility_test_cmmd_0}
\end{figure}
\begin{figure}[!h]
    \centering
    \includegraphics[width=\linewidth]{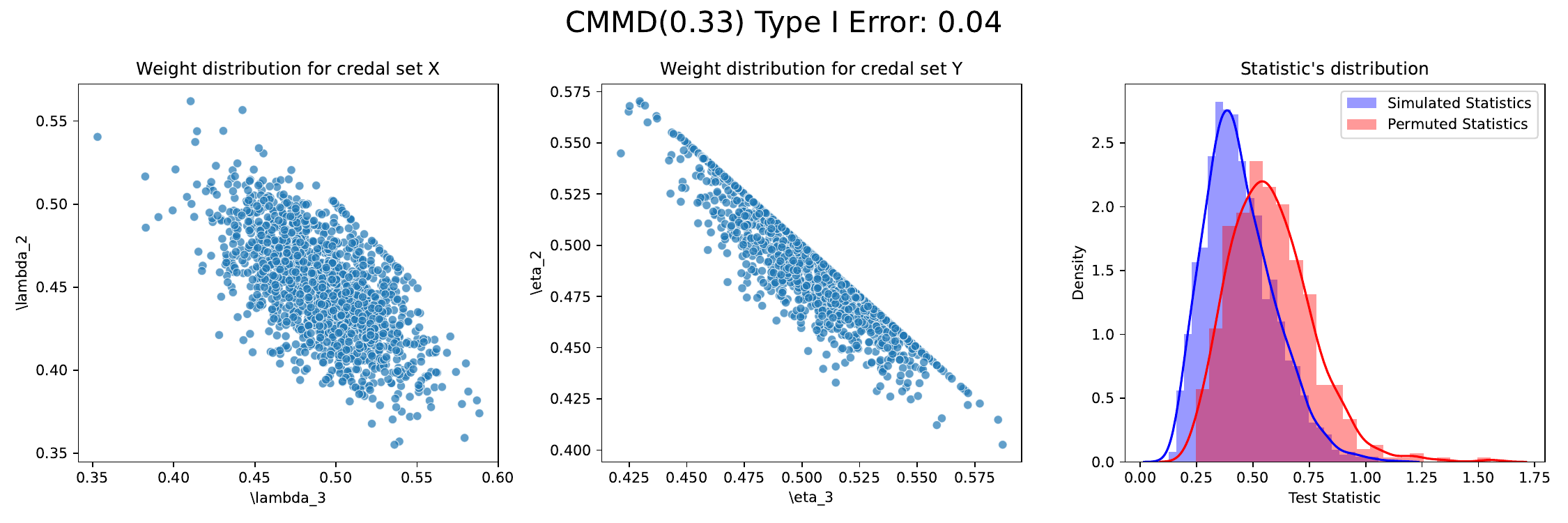}
    \caption{(Left) and (Middle): Distribution of the estimated parameters $\bflambda^e$ and $\bfeta^e$. Due to the existence of multiple pairs of weights under which the null hypothesis holds, our randomised optimisation procedure may identify a different pair of weights in each round during the repeated data sampling used to approximate the Type I error distribution in the experiments.  (Right) The null statistic distribution for CMMD$(0.33)$ in the plausibility test, is denoted as "Simulated Statistics". The “Permuted Statistic” refers to the statistics generated through permutation during a specific round of the repeated experiment using the permutation test.}
    \label{fig:plausibility_test_cmmd_0.33}
\end{figure}

Specifically, in both figures, we observe that the null statistic distribution derived from repeated data sampling and the estimated null statistic distribution obtained through the permutation procedure (for a fixed round of observation) do not overlap significantly. In fact, the null statistic distribution from repeated sampling follows a mixture of chi-square distributions, each indexed by the weights identified during that specific round of repetition. Nevertheless, the overall rejection rate for CMMD$(0.33)$ is 0.04, while for CMMD$(0)$ it is 0.14—substantially inflated compared to the nominal level of 0.05. This discrepancy is not surprising because the Type I error for this plausibility test follows the following generating process:
\begin{align*}
\PP(\text{Rejection}) = \int \PP(\text{Rejection} \mid \text{Optimisation identified }\bflambda_0, \bfeta_0) \,d\PP(\text{Optimisation identified }\bflambda_0, \bfeta_0).
\end{align*}
However, as shown in Theorem~\ref{thm: main_theorem_2}, as long as a specific pair of weights are identified and we use the adaptive splitting ratio, asymptotically:
\begin{align*}
\PP(\text{Rejection} \mid \text{Optimisation identified }\bflambda_0, \bfeta_0) = 0.05.
\end{align*}
This implies that, regardless of how the randomisation in the optimisation affects the distribution of weights obtained in each round of the repeated experiment, the overall Type I error rate will still converge to 0.05. This holds because
\begin{align*}
\PP(\text{Rejection}) &= \int \PP(\text{Rejection} \mid \text{Optimisation identified }\bflambda_0, \bfeta_0) \,d\PP(\text{Optimisation identified }\bflambda_0, \bfeta_0) \\
&= \int 0.05 \ \,d\PP(\text{Optimisation identified }\bflambda_0, \bfeta_0) \\
&= 0.05.
\end{align*}

Therefore, the multiplicity of null distributions under the plausibility hypothesis does not cause any issue. In the end, we still have the correct Type I control, since \emph{if I were to repeatedly sample the observations and conduct my test, although the procedure might identify different solutions every time, on average I am still wrong $5\%$ all the time}.

\subsection{Ablation Studies with Synthetic Data}
\label{appendix_subsec: more synthetic data}
We now perform ablation studies on our tests using the synthetic data set-up we outlined in Section~\ref{sec: experiments}. Unless specified, all experiments share the same kernel parameter selection, number of permutations used to determine critical values, and number of repetitions used to determine the rejection rate, as the main experiments in Section~\ref{sec: experiments}.
\label{appendix_subsec: ablation} 

\subsubsection{Experimenting with Different Convex Weights for Specification Test}
\label{appendix_subsubsec: different convex weights}

In the main experiment section, we chose not to include error bars, as they are generally not relevant in the context of hypothesis testing. We conducted 500 repetitions of the experiments and reported the average rejection rate as an approximation of the Type I error probability. If we were to repeat this setup 10 more times, it would essentially amount to running the experiment 5000 times, then splitting the results into 10 groups, averaging the rejection rates, and plotting the error bars—an approach that wouldn’t provide additional meaningful insights.

However, for the specification (and inclusion) test, we can generate multiple sets of convex weights and observe how the tests perform across these different weights. This is not possible with the equality and plausibility tests, as there is no additional randomness to exploit in these cases. 

To illustrate the sensitivity of our tests to convex weights in the simulation set-up, we randomly draw $10$ sets of convex weights and perform the specification experiment described in the main text. The result is presented in Figure~\ref{fig:many_corners}. The observation is the same as in previous sections, fixed sample splitting results in Type I inflation, while adaptive sample splitting results in Type I control asymptotically. Our permutation-based methods significantly outperform studentised statistic-based approaches. We also see as sample size increases, the fluctuation in Type I for CMMD$(0.33)$ and CMMD$(0.25)$ decreases as well.

\begin{figure}[!h]
    \centering
    \includegraphics[width=\linewidth]{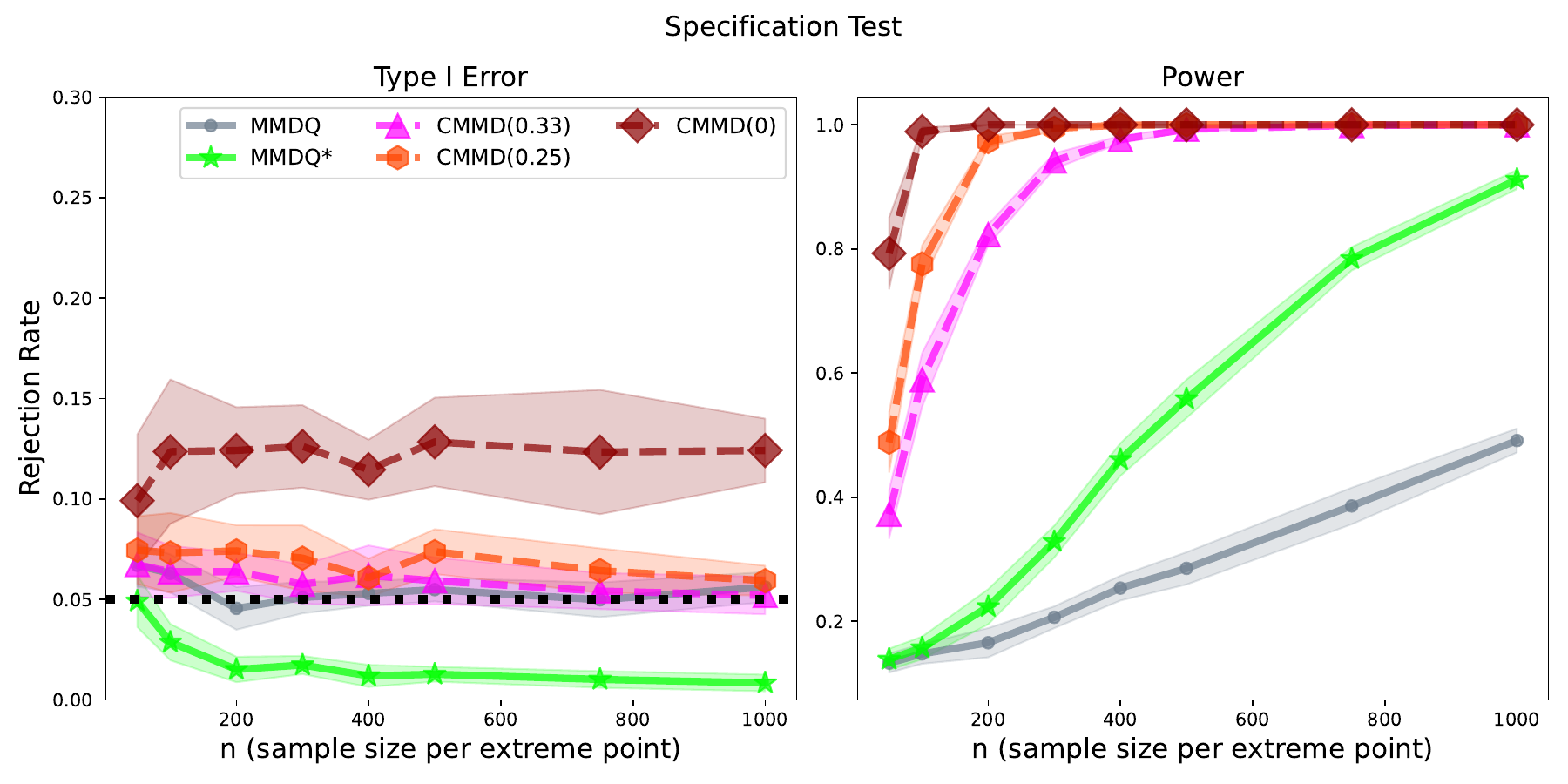}
    \caption{We average the rejection rate over $10$ configurations of convex weight (with $1$ standard deviation reported) for the specification test to demonstrate the sensitivity of our tests to the convex weights. The conclusion is the same as in previous sections, fixed sample splitting results in Type I inflation, while adaptive sample splitting results in Type I control asymptotically. Our permutation-based methods significantly outperform studentised statistic based approaches.}
    \label{fig:many_corners}
\end{figure}

\subsubsection{Varying Number of Credal Samples}
\label{appendix_subsubsec: varying number of credal samples}
In this section, we study how varying the number of extreme points in our simulation affects Type I control in our algorithms. 

\paragraph{Experimental setup.} The set-up for the specification test follows from the one described in the main text but this time we use $5$ and $10$ number of extreme points for $\cC_Y$ instead of $3$. For inclusion test, we test whether $\cC_X \subseteq \cC_Y$ for $\cC_X, \cC_Y$ both having $5$ and $10$ number of extreme points. The same setting applies to the equality test. For plausibility tests, $\cC_X$ and $\cC_Y$ only share two common extreme points, and the rest differ.

\paragraph{Analysis.} Figure~\ref{fig:abalation_number_of_corners_5} and Figure~\ref{fig:abalation_number_of_corners_10} illustrate synthetic experiment results for credal sets with $5$ extreme points and $10$ extreme points respectively. The overall behaviour is analogous to the one presented in the main text. Fixed splitting ratio results in inflated Type I control, thus rendering it invalid. We see also that by increasing the number of extreme points, the estimation problem becomes more challenging, therefore the convergence to Type I for CMMD($\nicefrac{1}{4})$ is observably slower than that of CMMD($\nicefrac{1}{3})$. This is particularly obvious for the equality test, which is the most challenging tests since it requires performing multiple testing to check whether a corner distribution belongs to another set. 

\begin{figure}[!h]
    \centering
    \includegraphics[width=0.49\linewidth]{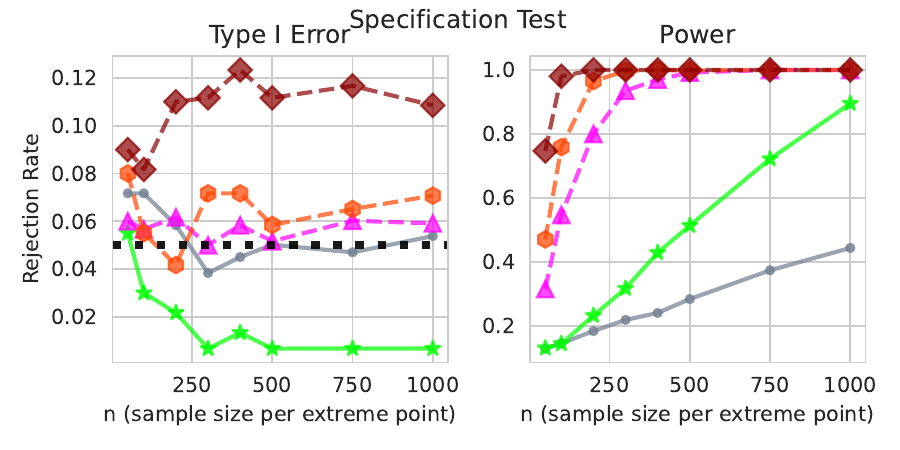}
    \includegraphics[width=0.49\linewidth]{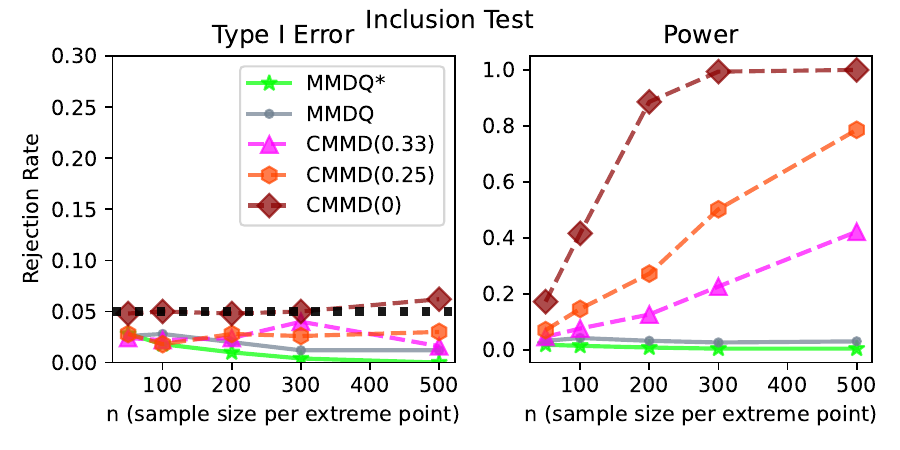}
    \includegraphics[width=0.49\linewidth]{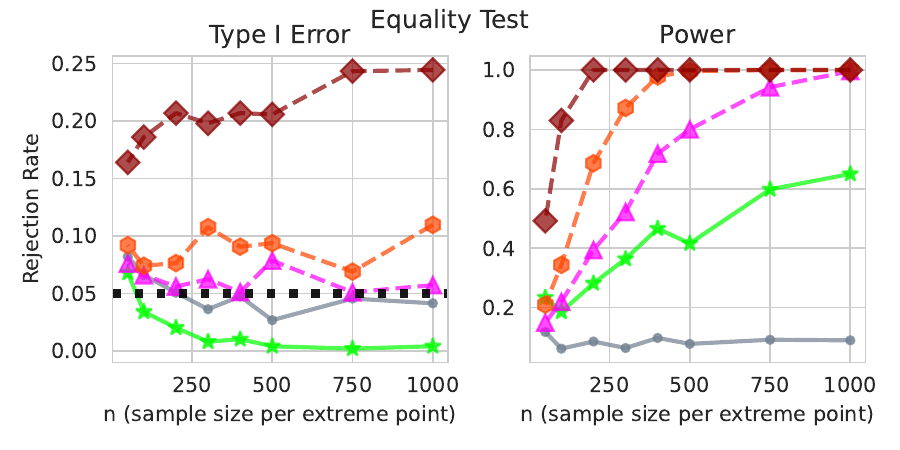}
    \includegraphics[width=0.49\linewidth]{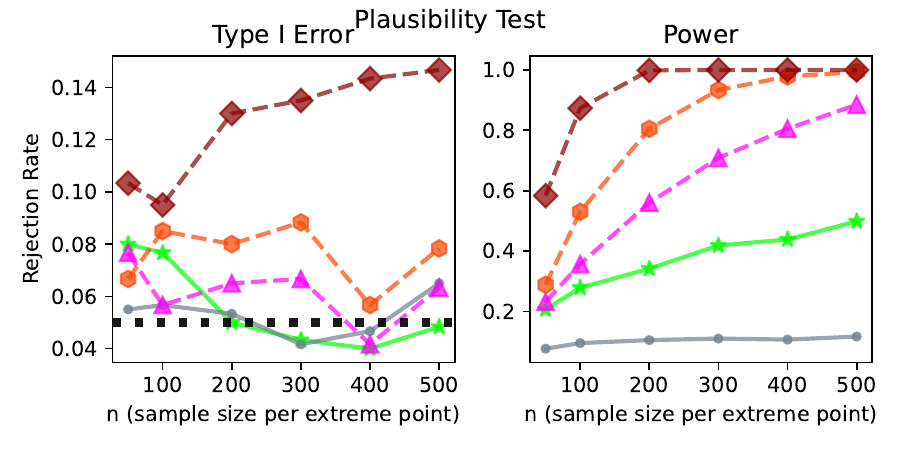}
    \caption{Synthetic experiment results for credal sets with $5$ extreme points. We again observe a persistent Type I error for fixed sample splitting and for adaptive sample splitting approaches we see Type I convergence. The power of studentised tests is significantly weaker than our proposed permutation-based approaches. }
    \label{fig:abalation_number_of_corners_5}
\end{figure}

\begin{figure}[!h]
    \centering
    \includegraphics[width=0.49\linewidth]{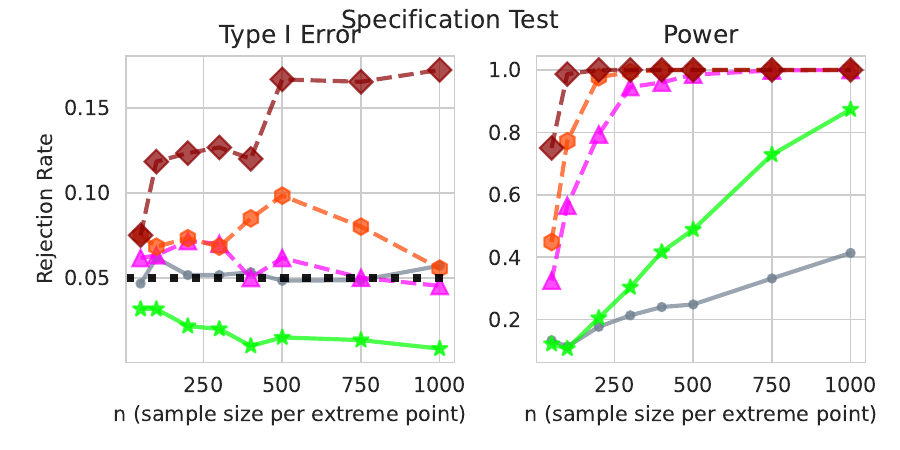}
    \includegraphics[width=0.49\linewidth]{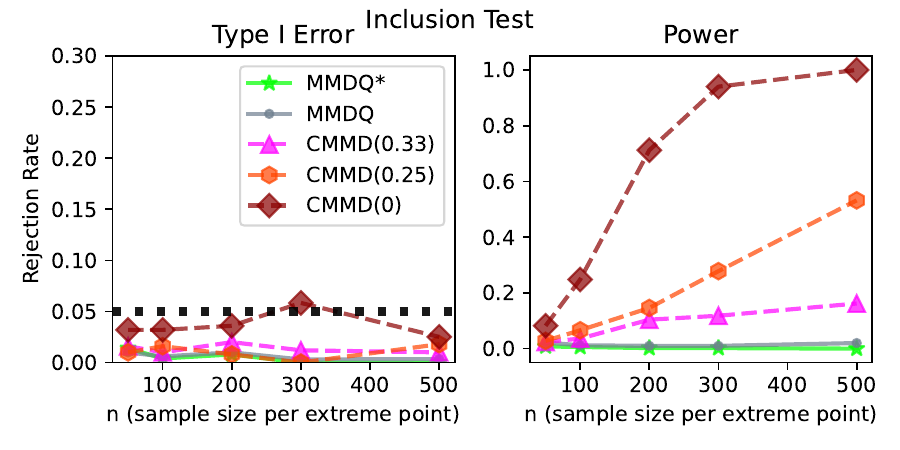}
    \includegraphics[width=0.49\linewidth]{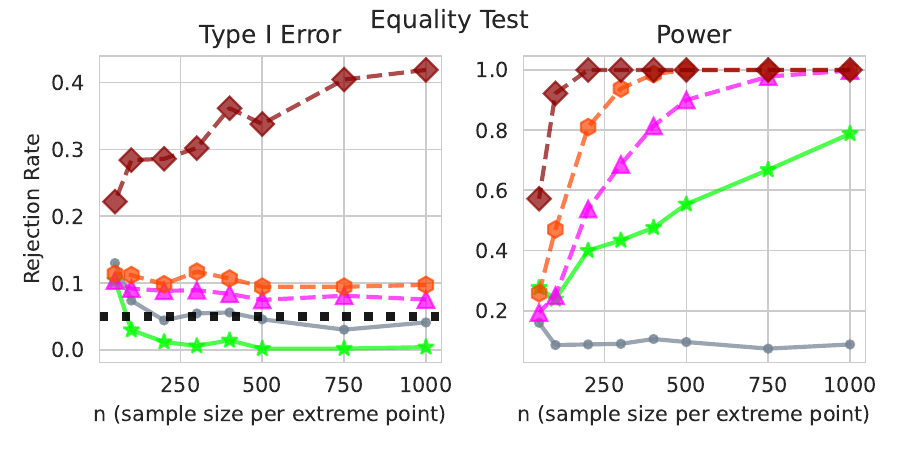}
    \includegraphics[width=0.49\linewidth]{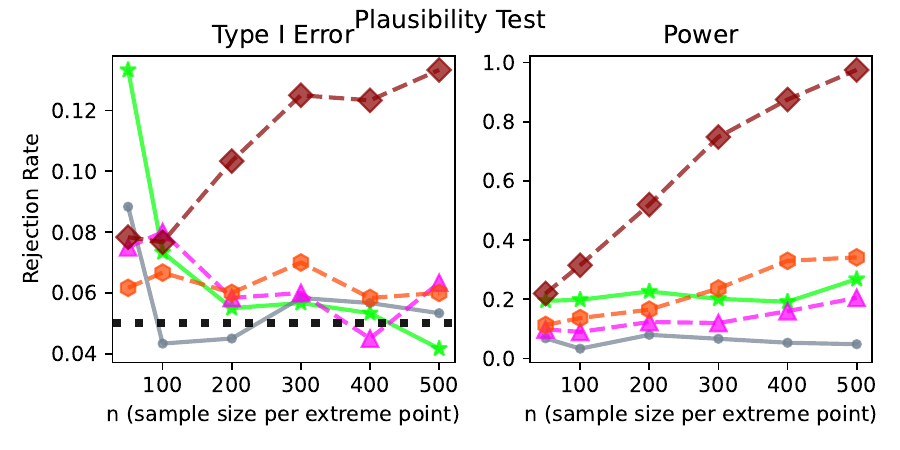}
    \caption{Synthetic experiment results for credal sets with $10$ extreme points. Besides the same observations as before, we see that as we increase the number of credal samples, the Type I inflation becomes more serious compared to when using $5$ or $3$ number of credal samples. This suggests in practice one should consider the tradeoff between the difficulty of the problem with the right adaptive sample-size splitting ratio.}
    \label{fig:abalation_number_of_corners_10}
\end{figure}

\subsubsection{Tradeoff between Different Adaptive Splitting Ratios}
\label{appendix_subsubsec: tradeoff between different adaptive splitting ratios}

We use the specification test to demonstrate the trade-off between the Type I error convergence rate and test power. In the following, we replicate the specification experiment from the main paper, but with a broader range of configurations controlling the split ratios. Recall that the adaptive split ratio $\rho$ is chosen for $n$ such that $\nicefrac{n_t}{n_e} = \nicefrac{1}{n_e^{\beta}}$, where we take $\beta \in \{0.0, 0.1, 0.2, 0.3, 0.4, 0.5, 0.6, 0.7\}$. For each configuration, we draw 10 sets of convex weights and run 500 experiments per weight to obtain the rejection rate. We then average the results across the configurations and report them in Figure~\ref{fig:more_split}.

As shown in Figure~\ref{fig:more_split}, as $\beta$ approaches 0—causing the ratio $\nicefrac{n_t}{n_e}$ to converge more slowly—the Type I error inflation becomes increasingly pronounced. However, since as $\beta$ approaches $0$ we get more testing samples, the power of the test also increases. Although theoretically any $\beta > 0$ is a valid test since they are proven to asymptotically control Type I error, the convergence speed affects their validity in finite sample cases. 

In Figure~\ref{fig:large_scale_splits}, we repeat the large scale experiment introduced in Appendix~\ref{subsubsec:largescale} for specification test under different adaptive splitting ratios to further illustrate the tradeoff. We see that CMMD$(0.1)$ while theoretically converging to the right Type I error, in the large scale experiment, even when we are at $n=7500$, the method doesn't exhibit any Type I converging behaviour. This means it requires much more samples compare to other methods to exhibit converging behaviour. In comparison, we see CMMD$(0.2)$ exhibits converging behaviour starting from $n=4000$, while all other methods converge to the right Type I control much earlier in comparison.

\begin{figure}[!h]
    \centering
    \includegraphics[width=\linewidth]{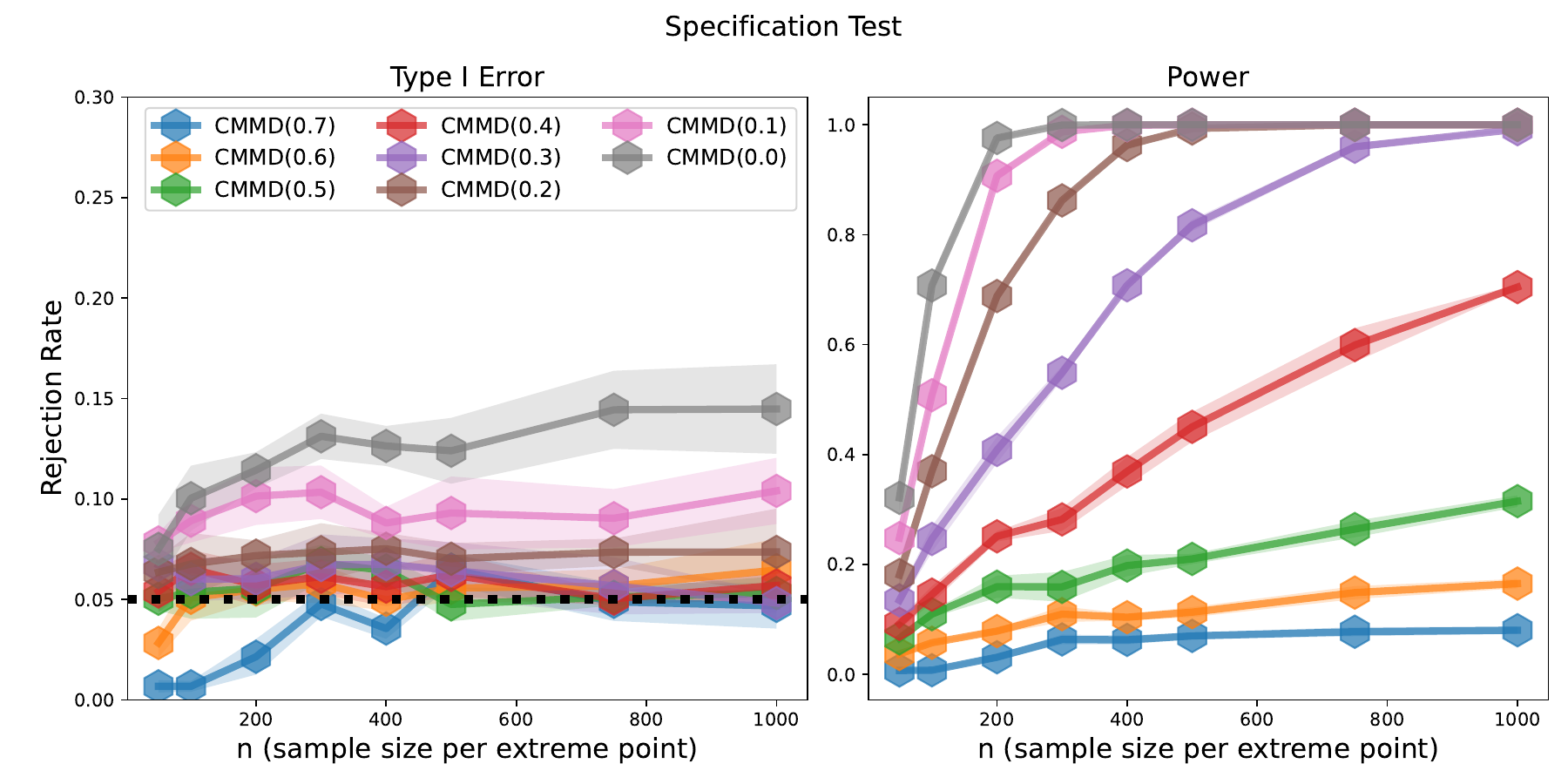}
    \caption{Demonstrating the tradeoff between Type I convergence and test power increase across different $\beta$ for $\beta$ in $\frac{n_t}{n_e} = \frac{1}{n_e^\beta}$. Rejection rate are averaged across $10$ set of convex weights and 1 standard deviation is reported. As we can see, the closer $\beta$ is to $0$, the slower the Type I convergence, but since we are using more testing samples in exchange, the power also increases.}
    \label{fig:more_split}
\end{figure}

\begin{figure}[!h]
    \centering
    \includegraphics[width=\linewidth]{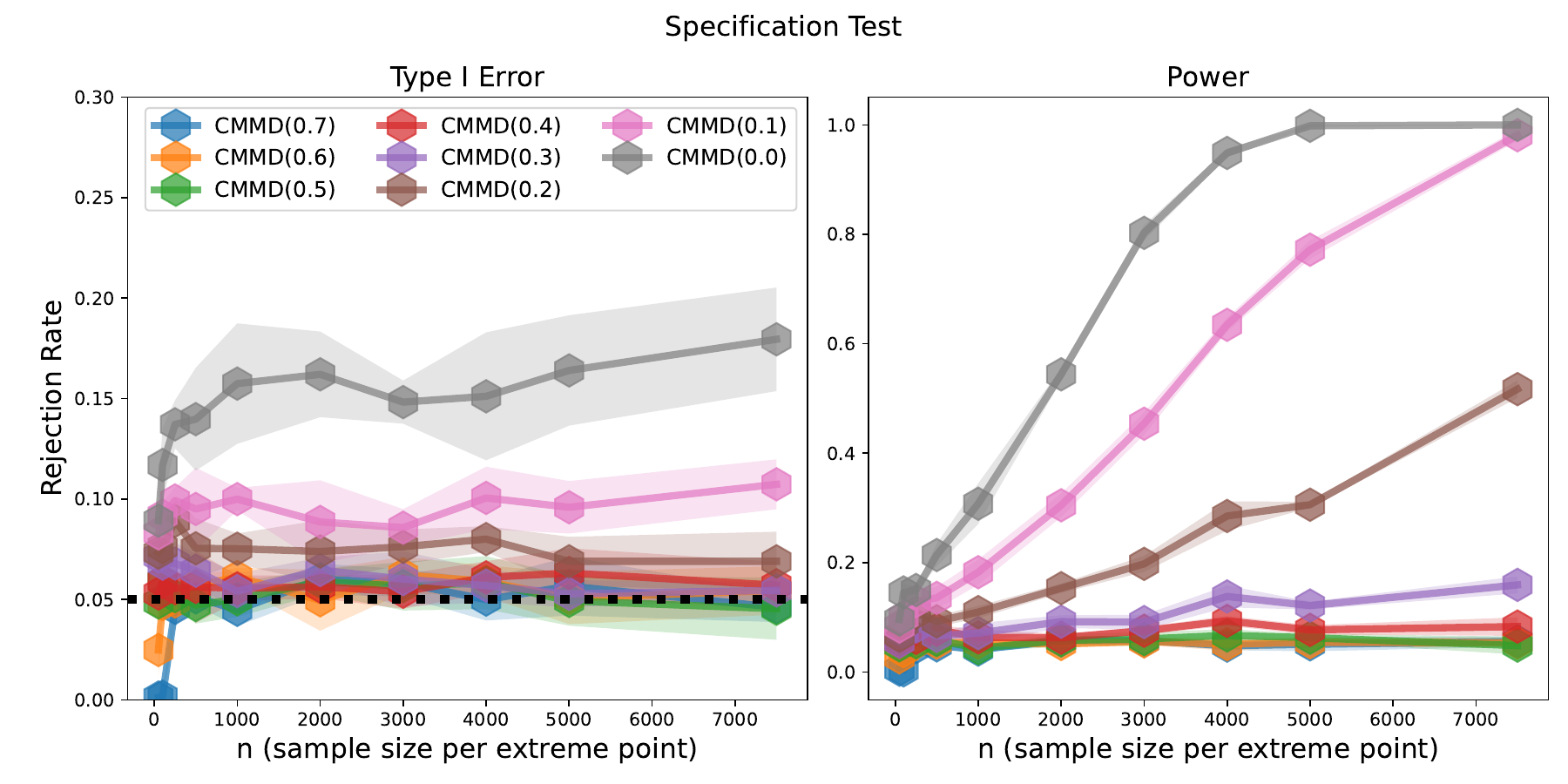}
    \caption{Repeating the large scale experiment introduced in Appendix~\ref{subsubsec:largescale} for specification test with different adaptive splitting ratios.}
    \label{fig:large_scale_splits}
\end{figure}

\subsubsection{Large Scale Experiment}
\label{subsubsec:largescale}
We demonstrate a large-scale experiment for specification tests. We test against a mixture of Gaussians with a mixture of student distribution with $10$ degrees of freedom. We use $5$ extreme point for $\cC_Y$. As we know, the larger the degrees of freedom for a student distribution, the closer it resembles a Gaussian distribution. This means we need much more samples to distinguish the two, compared to the case in the main text where we only have 3 degrees of freedom.

\begin{figure}[!h]
    \centering
    \includegraphics[width=\linewidth]{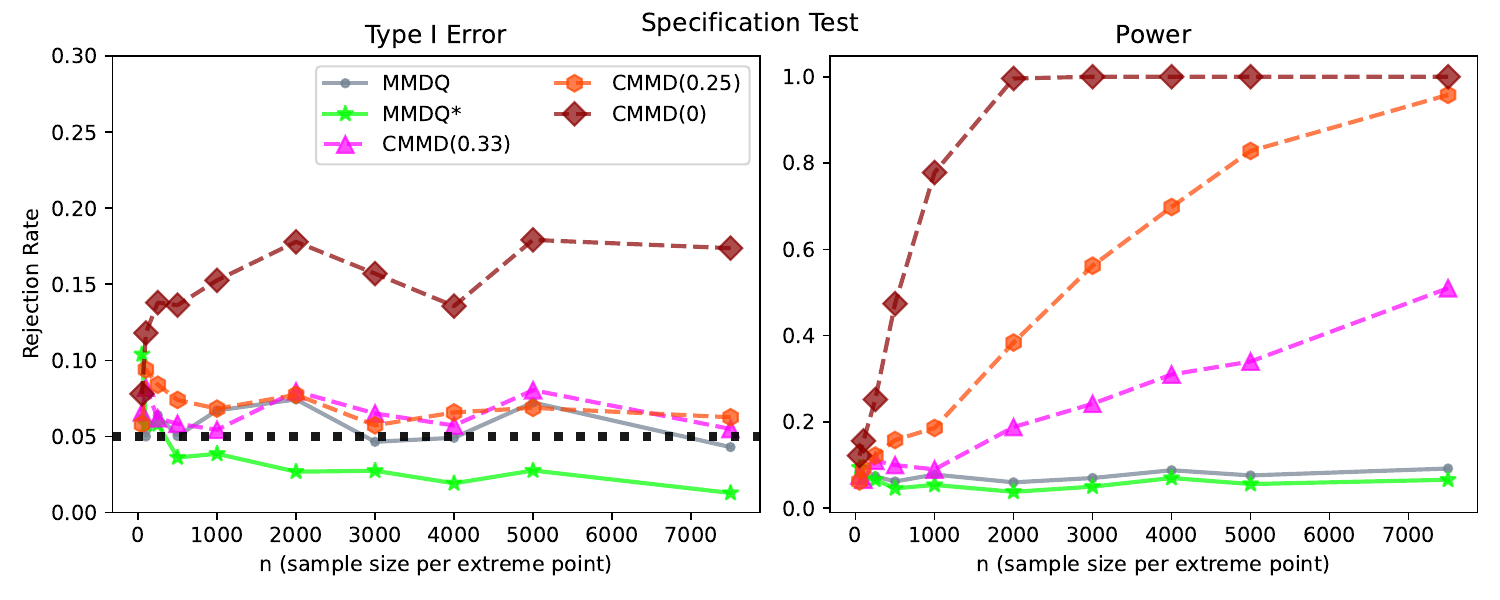}
    \caption{Large scale specification test comparing mixture of Gaussians with mixtures of student distributions with $10$ degrees of freedom. We see that fixed sample splitting approach yields consistent Type I inflation whereas using adaptive sample splitting shows Type I control at level $0.05$ asymptotically.}
    \label{fig:large_scale_specification}
\end{figure}

Figure~\ref{fig:large_scale_specification} illustrates the result for the large-scale experiment. We see that overall fixed sample splitting has a persistent inflated Type I and the other CMMD tests with adaptive splitting approach $0.05$ level. The studentisd tests exhibit very low power compared to our permutation approaches despite using more testing samples than ours.

\subsubsection{Comparison with Double-dipping Approaches}
\label{appendix_subsubsec: comparing_with_double_dipping}

In \citet{key_composite_2024} and \citet{bruck_distribution_2023}, double-dipping approaches were explored, where the same sample set is reused for both estimation and hypothesis testing. It is natural to examine how this method applies in our setting. Using the notation from Section \ref{sec: experiments}, we pick the split ratio $\rho$ such that $\nicefrac{n_t}{n_e} = \nicefrac{1}{n_e^\beta}$ for some $\beta\in[0,1]$, the double-dipping approach then corresponds to setting $n_e = n$ and $n_t = n^{1-\beta}$. The theoretical analysis of the test statistic in this scenario is challenging due to the inter-dependence between the estimated parameters and the test statistic samples (see \citep[Section 2.1]{bruck_distribution_2023} for an illustrative example).

We repeat the experiments we conducted in the main paper, including double-dipping approaches. The results are shown in Figure~\ref{fig: double_dip}. These methods are referred to as ddip$(\beta)$ in the plots. Empirically, we show that adaptive sample splitting strategies ($\beta \neq 0$) tend to produce very conservative Type I error, resulting in reduced test power compared to non-double-dipping methods with the same splitting ratio $\rho$. While the fixed split ratio method ($\beta = 0$) often achieves conservative but valid Type I error control and higher power, there is no theoretical guarantee of consistent performance. Notably, for the plausibility test, we observe an increase in Type I error as sample sizes grow—a trend also seen with non-double-dipping fixed sample splitting. In Figure~\ref{fig:large-scale-ddip}, we repeat the large-scale experiments described in Appendix~\ref{subsubsec:largescale} with double-dipping approaches. We see that the overly conservativeness of double-dipping approaches results in extremely low power compared to the non-double-dipping approaches that share the same sample splitting ratio.


\begin{figure}[!h]
    \centering
    \includegraphics[width=0.49\linewidth]{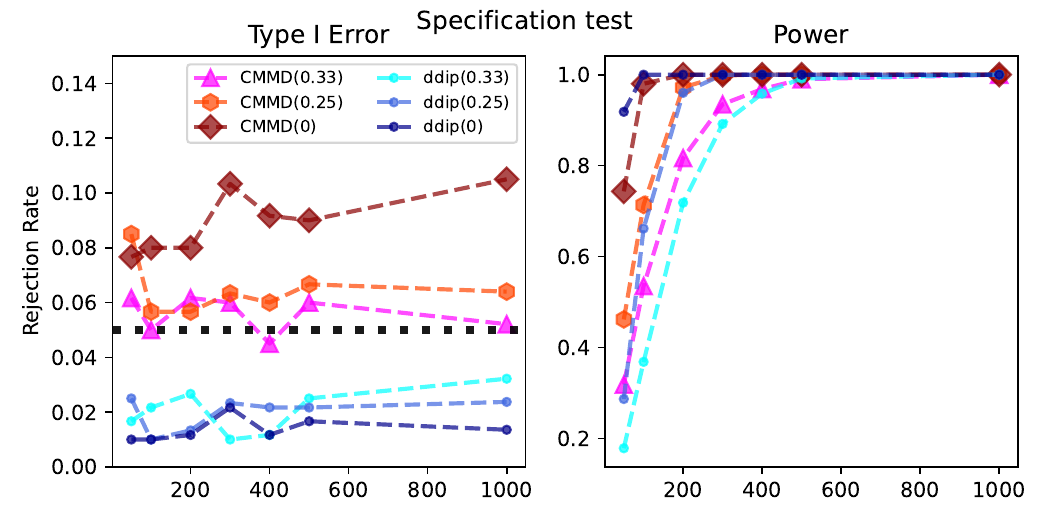}
    \includegraphics[width=0.49\linewidth]{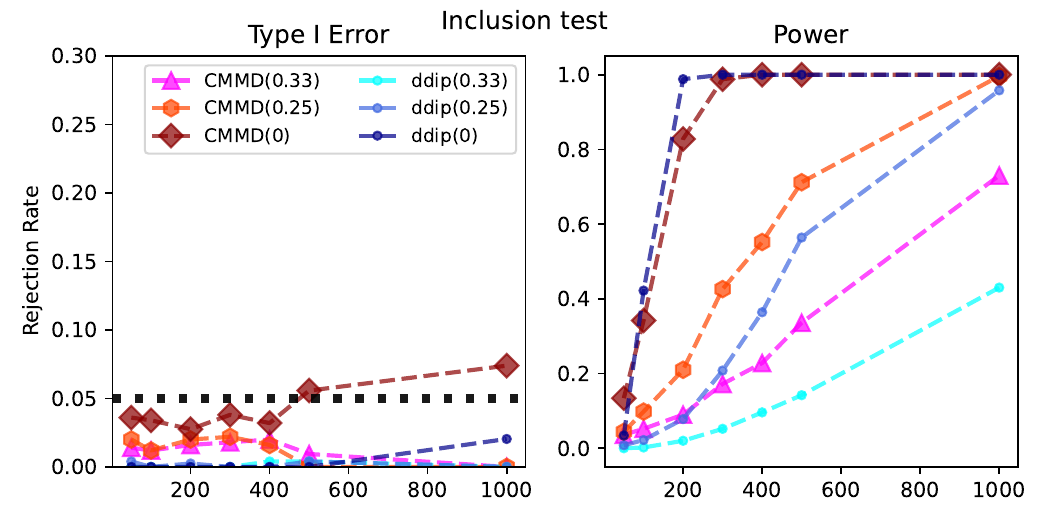}
    \includegraphics[width=0.49\linewidth]{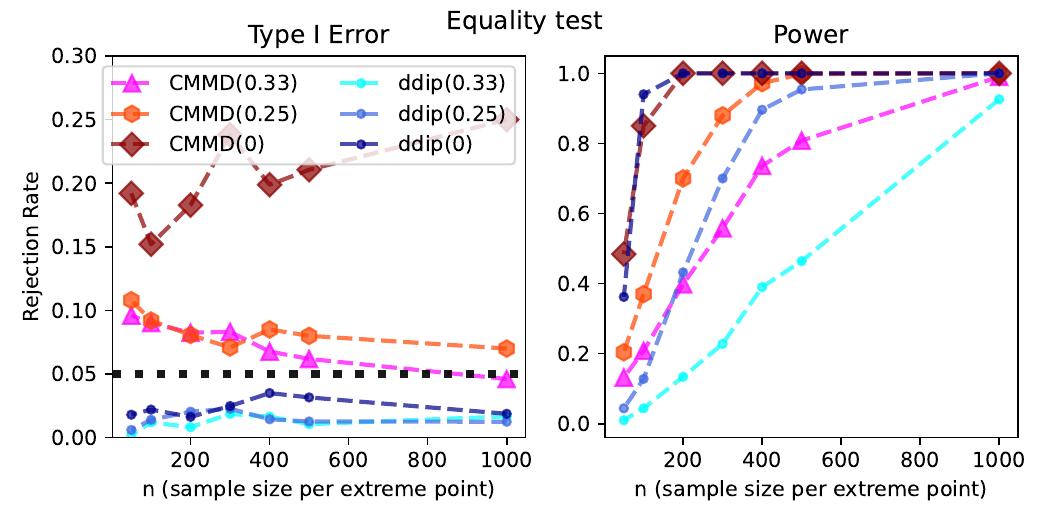}
    \includegraphics[width=0.49\linewidth]{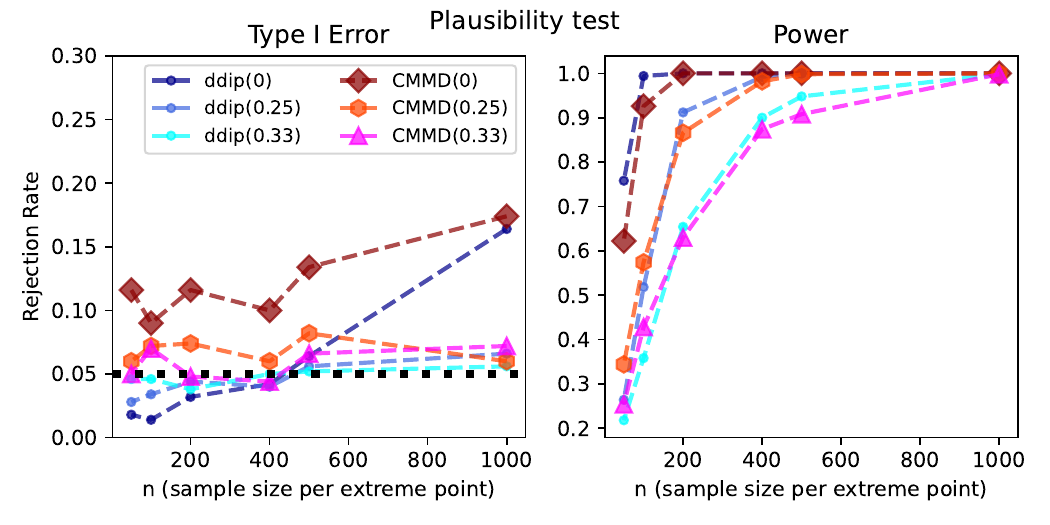}
    \caption{Comparing the performance of double-dipping-based methods with non-double-dipping-based methods. We observe that double-dipping methods often produce too conservative Type I control, resulting in lower power compared to their counterpart methods which shares the same sample splitting ratios. While in some cases ddip$(0)$ exhibits valid Type I control and yields high power compared to other adaptive splitting methods, in the plausibility test experiments we see ddip$(0)$ fails to control Type I. This means that in practice we should not use ddip$(0)$ because we do not know when ddip$(0)$ is valid or not.}
    \label{fig: double_dip}
\end{figure}

\begin{figure}[!h]
    \centering
    \includegraphics[width=0.8\linewidth]{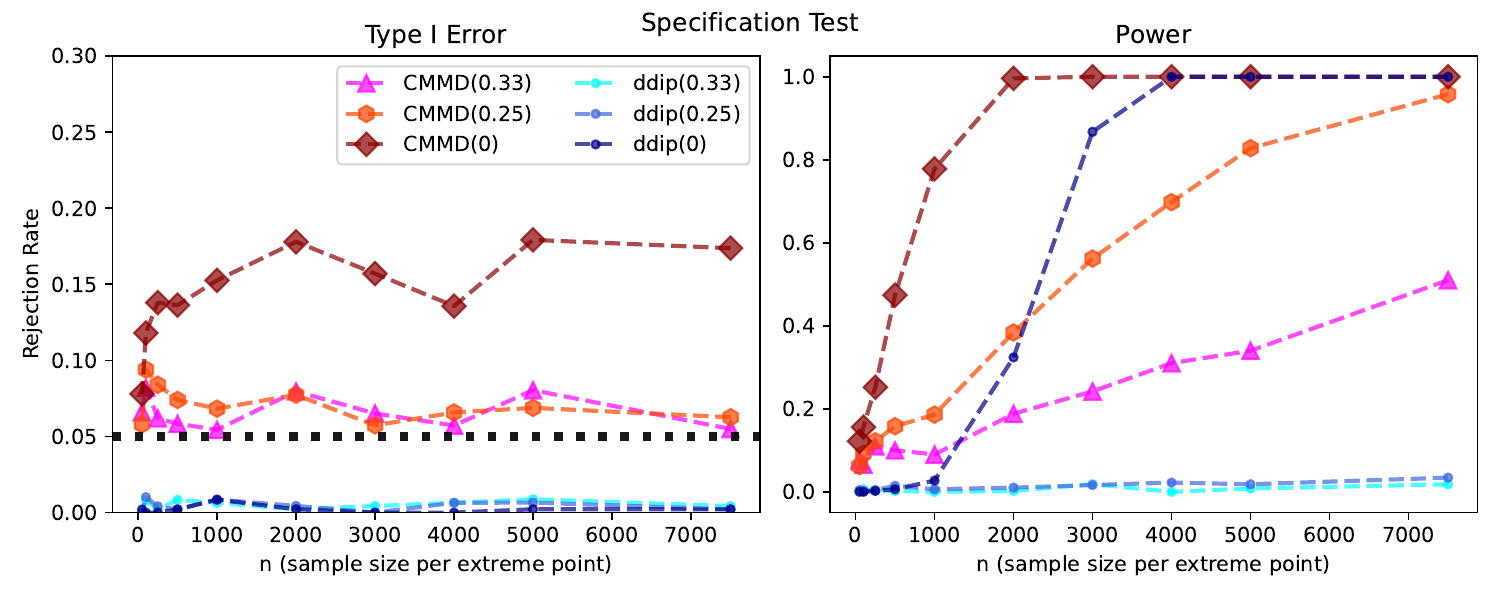}
    \caption{Repeating the large-scale experiment for the double dipping method and comparing the performance between standard sample splitting with double dipping approaches. We see that double-dipping approaches are extremely conservative, therefore resulting in very low power. \citet{key_composite_2024} also explored a double-dipping approach for composite goodness-of-fit where they also share the same observation that their method leads to very conservative results. Since double dipping approach has been shown empirically to be an invalid test in Figure~\ref{fig: double_dip} so even though it exhibits decent power here, we should not use this test in practice.}
    \label{fig:large-scale-ddip}
\end{figure}

\subsubsection{What if Some Extreme Points are Linearly Dependent?}
\label{appendix subsub linearly dependent}

To demonstrate that our credal test is robust to violations of Assumption~\ref{assumption 0}, we use the specification test as an example. To simulate the null hypothesis, we generate the credal set $\cC_Y = \operatorname{CH}(P_1, P_2, P_3, P_4)$, where $\operatorname{CH}$ is the convex hull operator, $P_4 = \frac{1}{3}P_1 + \frac{1}{3}P_2 + \frac{1}{3}P_3$, and $P_1, P_2, P_3$ are 10-dimensional multivariate Gaussians generated as described in Section~\ref{sec: experiments}. $P_X$ is then a convex weighted aggregation of the three extreme points. To simulate the alternative hypothesis, we generate $P_X$ using mixtures of Student’s t-distributions instead of Gaussian mixtures. Figure~\ref{fig:violation_of_assumption_1} illustrates this experiment. The tests with adaptive sample splitting approaches still converge to the required Type I error level, while the fixed sample splitting approach consistently shows inflated Type I errors.

\begin{figure}[!h]
\centering
\includegraphics[width=0.8\linewidth]{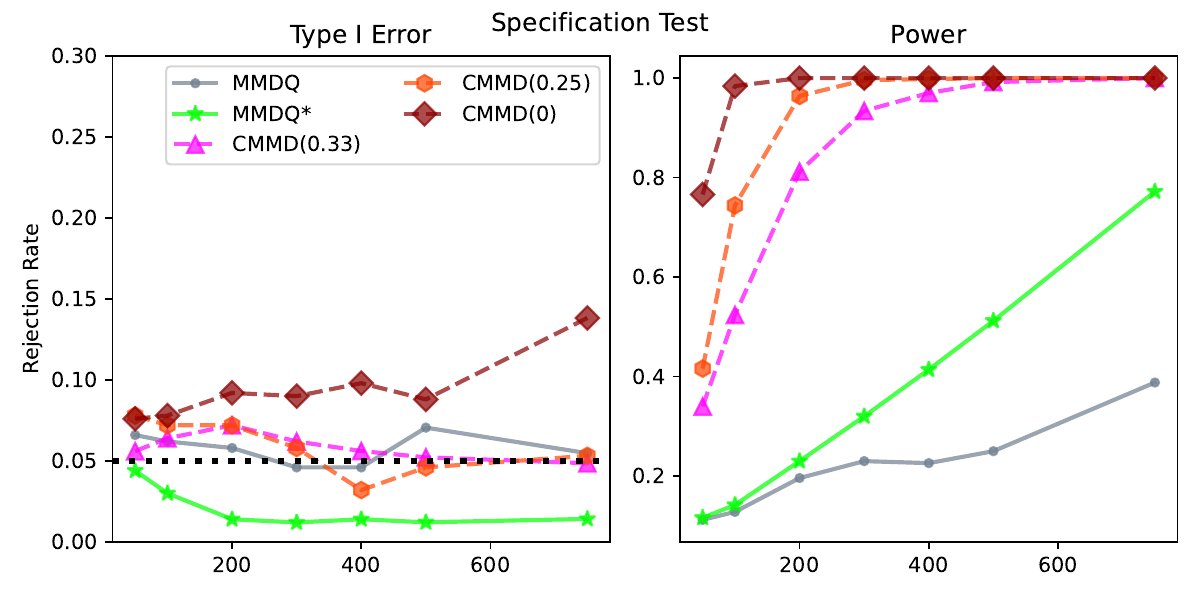}
\caption{Specification test results when Assumption~\ref{assumption 0} is violated. The tests with adaptive sample splitting approaches still converge to the required Type I error level, while the fixed sample splitting approach consistently shows inflated Type I error.}
\label{fig:violation_of_assumption_1}
\end{figure}

We now explain why this violation does not affect the test, using similar reasoning as to why multiple solutions in the plausibility test do not cause issues. When Assumption~\ref{assumption 0} is violated, some extreme points may become linearly dependent. In the specification test, this could result in the optimization procedure yielding multiple solutions. However, following the argument from the proof of Theorem~\ref{thm: main_theorem_h0}, any local minimum of the KCD, where $\nabla L(1, \bfeta) = 0$, will—due to the characteristicness of the kernel—produce a set of parameters $\bfeta^e$ that satisfies the null hypothesis $H_{0,\in}$ (see Proposition~\ref{prop: local_is_global} for details). Moreover, the uniform convergence of the objective function ensures that the estimator will converge to some solution of the population level objective. As we increase the sample size, the estimators approach one of the parameters in the solution set $\arg\min_{\bfeta \in \Delta_r} L(1,\bfeta)$. Therefore, using the adaptive sample splitting strategy, our test statistic asymptotically converges to the same distribution as if we had access to a certain set of true parameter. Combining this with the arguments in Appendix~\ref{subsubsec: null_dist_plausibility_multiplicity}, we justify why the test maintains proper Type I error control asymptotically.

\subsection{MNIST Experiments}
\label{appendix_subsec: mnist}

Following \citet{kubler2022automl} and \citet{schrab2023mmd}, we also validate our credal tests using the MNIST dataset~\citep{lecun1998mnist}. We utilise a pretrained image classifier to extract vector embeddings for the MNIST images. Specifically, we used the pretrained Resnet-18~\citep{he2016deep} model to extract 512-dimensional vectors for our images. The kernel between images is then an RBF kernel applied to these 512-dimensional vectors.

\begin{figure}[]
    \centering
    \includegraphics[width=0.49\linewidth]{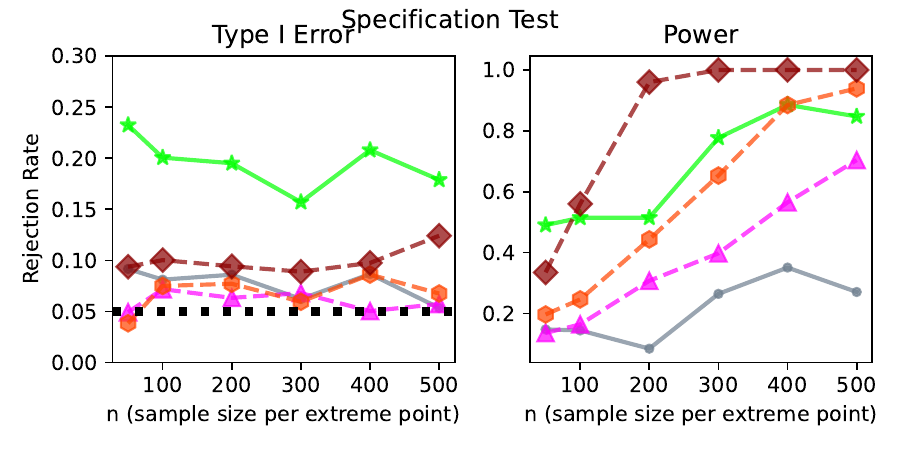}
    \includegraphics[width=0.49\linewidth]{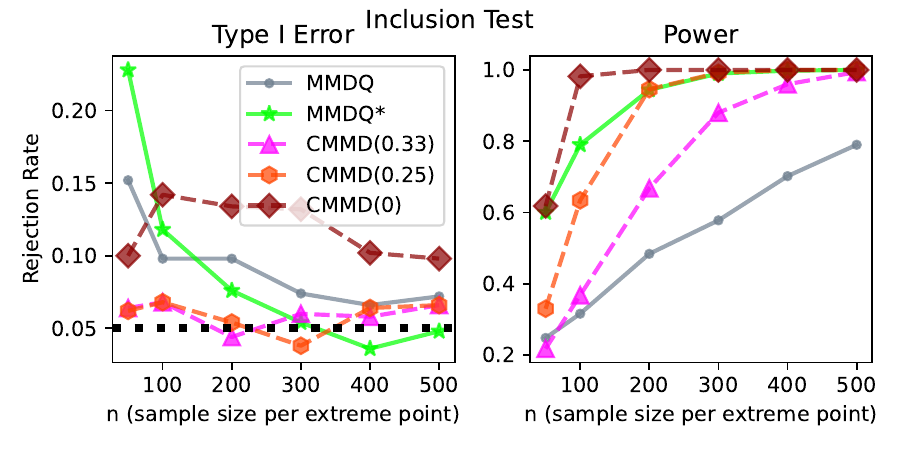}
    \includegraphics[width=0.49\linewidth]{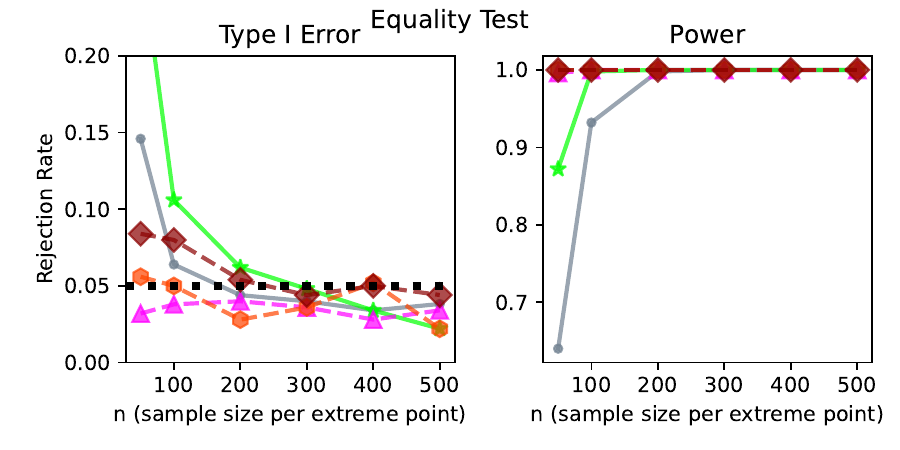}
    \includegraphics[width=0.49\linewidth]{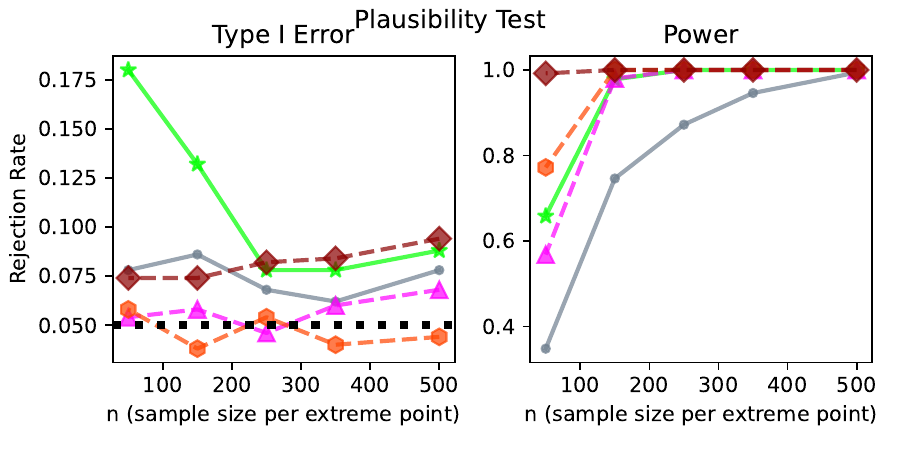}
    \caption{MNIST Credal Testing Experimental Results. In addition to the consistent Type I error inflation for CMMD$(0)$ using a fixed splitting strategy, we observe that MMDQ$^\star$ also exhibits significantly inflated Type I error rates for small sample sizes. In the specification test, MMDQ$^\star$ fails to achieve any Type I error control.}
    \label{fig:MNIST}
\end{figure}

In our experiments, each extreme point in a credal set represents the distribution of images for a specific digit.

\begin{itemize}
    \item \textbf{Specification test.} Under the null hypothesis, our credal set $\cC_Y$ consists of extreme points representing distributions of digits [1, 3, 7], while $P_X$ is a mixture distribution of these three digits. To simulate the alternative, $P_X$ remains a mixture distribution of digits [1, 3, 7], but the credal set $\cC_Y$ now consists of extreme points representing distributions of digits [1, 3, 9].
    \item \textbf{Inclusion test.} For the inclusion test, under the null hypothesis, we simulate three mixture distributions to construct $\cC_X$ based on a credal set $\cC_Y$ that includes extreme points for digits [1, 3, 7]. To simulate the alternative, $\cC_X$ is constructed similarly to the null setting, but the credal set $\cC_Y$ now includes extreme points for digits [1, 3, 9].
    \item \textbf{Equality test.} Under the null hypothesis, we build $\cC_X$ and $\cC_Y$ both constructed using images of $[1, 3, 7]$, and for the alternative, we modify $\cC_Y$ to be constructed using images of digits $[1, 3, 9]$.
    \item \textbf{Plausibility test.} Under the null hypothesis, $\cC_X$ are built using digits $[1, 3, 7]$ and $\cC_Y$ are built using digits $[1, 3, 9]$. Under the alternative, $\cC_Y$ are built using digits $[0,2,9]$.
\end{itemize}

Figure~\ref{fig:MNIST} illustrates the performance of our credal tests on the MNIST dataset. The results align with those from the synthetic experiments, showing consistent Type I error inflation for fixed splitting approaches. In contrast, adaptive sample splitting converges to the correct Type I level as sample size increases. Notably, MMDQ$^*$ either fails to maintain Type I error control or exhibits significantly inflated Type I error when sample sizes are small.

\newpage
\section{Preliminary materials on kernel methods, kernel mean embeddings, and kernel two-sample tests.}
\label{appendix: kernel stuff}

Here, we provide preliminary materials on kernel methods, kernel mean embeddings, and kernel-based hypothesis testing for readers less familiar with these topics. For a foundational understanding of reproducing kernel Hilbert spaces (RKHS), we recommend the lecture notes by \citet{sejdinovic2012rkhs}. To gain an in-depth understanding of the kernel two-sample test, refer to the journal paper by \citet{gretton2012kernel}. Finally, for insights into how kernel mean embeddings serve as nonparametric representations of distributions and their applications, we suggest \citet{muandet2017kernel}.

\subsection{Kernel methods}
\label{appendix subsec: kernel methods}
We begin by defining what a kernel is. 

\begin{defn}[Kernel.]
    Let $\mathcal{X}$ be a nonempty set. A function $k:\mathcal{X} \times \cX \to \RR$ is called a kernel if there exist a real-valued Hilbert space $\mathcal{H}$ and a map $\phi:\cX\to\cH$ such that for all $x, x' \in\mathcal{X}$,
    \begin{align*}
        k(x,x') := \langle \phi(x), \phi(x') \rangle_{\mathcal{H}}.
    \end{align*}
\end{defn}
This can be understood intuitively as follows: Let $\mathcal{X}$ be a collection of TV series, and suppose we want to compare them. While computing an inner product directly in the space of TV series may not be meaningful, it becomes sensible when we compare extracted “features” of the series instead. For instance, we can define a feature map $\phi(x)$ that represents each TV series x using attributes such as its length, ratings, and production costs:
\begin{align*}
    \phi(x) = [\text{length, costs, ratings}].
\end{align*}
This representation allows us to analyse and compare TV series in a mathematical way. For this reason, $\phi$ is also known as a feature map. Kernels satisfying certain properties are core to their popularity in machine learning literature, and they have their special name, reproducing kernels.
\begin{defn}[Reproducing kernel~\citep{berlinet2011reproducing}]
    Let $\cH$ be a Hilbert space of real-valued functions defined on a non-empty set $\cX$. A function $k:\cX\times\cX\to\RR$ is a called a reproducing kernel of $\cH$ if it satisfies:
    \begin{itemize}
        \item $\forall x \in \mathcal{X}, k(\cdot, x)\in\mathcal{H},$
        \item $\forall x, \forall f\in\cH \langle f, k(\cdot, x)\rangle = f(x) $
    \end{itemize}
    The second property is also known as the reproducing property and the map $x:\mapsto k(\cdot, x)$ is often denoted as the canonical feature map of $x$. 
\end{defn}
In particular, for any $x, x' \in\cX$, 
\begin{align*}
    k(x,x') = \langle k(\cdot, x), k(\cdot, x') \rangle_\mathcal{H}
\end{align*}
From this illustration, it is obvious that a reproducing kernel is also a kernel with the canonical feature map as the $\phi$ map. For illustraiton purpose, let $\cX \subseteq \RR^d$, popular examples of kernels are:
\begin{itemize}
    \item Linear kernel: $k(x,x') = \langle x, x'\rangle$
    \item Polynomial kernel of degree $p$: $k(x,x') = (\langle x,x' \rangle + c)^p$
    \item Radial basis function kernel with bandwidth $\ell>0$: $k(x,x') = \exp\left(\frac{\|x-x'\|^2}{2\ell^2}\right)$
    \item Matérn Kernel with smoothness $\nu>0$, bandwidth $\ell>0$:
    \begin{align*}
        k(x,x') = \frac{1}{\Gamma(\nu)2^{\nu -1}}\left(\frac{\sqrt{2}\nu \|x-x'\|}{\ell}\right)K_\nu\left(\frac{\sqrt{2\nu}\|x-x'\|}{\ell}\right)
    \end{align*}
    where $K_\nu$ is the modified Bessel function of the second kind and $\Gamma(\nu)$ is the gamma function.
\end{itemize}


Another important notion in the literature of kernel methods is the reproducing kernel Hilbert space, a space of functions $f:\cX\to\RR$ adhere to specific properties, defined as follows.
\begin{defn}[Reproducing kernel Hilbert space.] A Hilbert space of real-valued functions $f:\cX\to\RR$, defined on a non-empty set $\cX$ is said to be a Reproducing kernel Hilbert space (RKHS) if the evaluation function $\delta_x: f\mapsto f(x)$ is continuous $\forall x\in\cX$.
\end{defn}
Moore and Aronsjn \citep{aronszajn1950theory} have shown that not only given any RKHS $\cH$, we can define a unique reproducing kernel associated with $\cH$, but for any reproducing kernel $k$, there corresponds an unique RKHS $\cH$. This is known as the Moore-Arnsjon theorem. But the readers may now wonder why we are even interested in working with such a specific function space instead of the more general space of bounded continuous real-valued functions. Turns out, with common choices of kernels such as RBF and Matern, the corresponding RKHS is dense in the space of bounded continuous functions, a property known as $C_0$ universality. This is a desirable property, meaning that for any bounded continuous function of interest, we can find an element in the RKHS that can get arbitrarily close to it. Please refer to \citet{sriperumbudur2011universality} for further discussion.

\subsection{Kernel Mean Embeddings}
\label{appendix subsec: kme}
Given an instance \( x \in \mathcal{X} \), the canonical feature map \( k(\cdot, x) \) serves as its representation. Now, given a random variable \( X \) distributed according to a law \( P_X \), can we construct a representation of \( P_X \) using the feature map? This can be done by the kernel mean embedding~\citep{smola2007hilbert,muandet2017kernel} for a specific class of kernels with properties satisfying by most commonly used kernels such as the RBF and Matern kernel.
\begin{defn}
    The kernel mean embedding $\mu: \cX \to \cH_k$ of a distribution $P_X$ is defined as:
    \begin{align*}
        \mu(P_X) := \mathbb{E}_{X\sim P_X}[k(\cdot, X)].
    \end{align*}
\end{defn}
For characteristic kernels, the mapping $\mu:P_X\to \mu(P_X)$ is injective. We often write $\mu(P_X)$ simply as $\mu_{P_X}$ when the context is clear.

This representation is particularly convenient in practice since it can be estimated directly from samples. In other words, without requiring any parametric assumptions on \( P_X \), we can approximate its representation using only observed samples. The estimation is also quite (statistically) efficient, as can be seen by the following result from \citet{tolstikhin2017minimax},
\begin{theorem}[\citet{tolstikhin2017minimax} Proposition A.1]
    Let $X_1,\dots,X_n \overset{iid}{\sim} P_X$ and let $k:\cX\times\cX \to \RR$ be a continuous positive definite kernel on a separable topological space $\cX$ with $\sup_{x\in\cX} k(x,x) \leq C_k\infty$. Then for any $\delta \in (0,1)$ with probability at least $1-\delta$,
    \begin{align*}
        \|\frac{1}{n}\sum_{i=1}^n k(\cdot, X_i) - \mu_{P_X}\|_{\cH} \leq \sqrt{\frac{C_k}{n}} + \sqrt{\frac{2C_k \log{\frac{1}{\delta}}}{n}}.
    \end{align*}
\end{theorem}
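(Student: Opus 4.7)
The plan is to prove this concentration inequality via the classical two-step template: bound the expectation of the RKHS deviation, then apply a bounded-differences concentration inequality (McDiarmid) around that expectation. Write $f(X_1,\dots,X_n) := \bigl\|\tfrac{1}{n}\sum_{i=1}^n k(\cdot,X_i) - \mu_{P_X}\bigr\|_{\cH}$; the goal decomposes as $f \le \EE[f] + (f - \EE[f])$, and the two summands on the right of the stated bound correspond exactly to these two pieces.

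First, I would bound $\EE[f]$ by Jensen, $\EE[f] \le \sqrt{\EE[f^2]}$, and compute $\EE[f^2]$ explicitly using the reproducing property. Expanding the squared RKHS norm and using independence and identical distribution of the $X_i$, a short calculation gives
\begin{equation*}
\EE[f^2] \;=\; \tfrac{1}{n}\bigl(\EE_X[k(X,X)] - \|\mu_{P_X}\|_{\cH}^2\bigr) \;\le\; \tfrac{C_k}{n},
\end{equation*}
where the upper bound uses $k(x,x) \le C_k$ and discards the nonnegative term $\|\mu_{P_X}\|_{\cH}^2/n$. Hence $\EE[f] \le \sqrt{C_k/n}$, yielding the first term in the stated bound. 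Note this step also implicitly uses that $\mu_{P_X}$ is well-defined in $\cH$ (Bochner integrability), which follows from $\EE\|k(\cdot,X)\|_{\cH} = \EE\sqrt{k(X,X)} \le \sqrt{C_k} < \infty$.

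Next, I would verify the bounded-differences property. If we replace $X_i$ by an independent copy $X_i'$ while fixing the other arguments, the triangle inequality in $\cH$ and $\|k(\cdot,x)\|_{\cH} = \sqrt{k(x,x)} \le \sqrt{C_k}$ give
\begin{equation*}
|f(\dots,X_i,\dots) - f(\dots,X_i',\dots)| \;\le\; \tfrac{1}{n}\bigl\|k(\cdot,X_i) - k(\cdot,X_i')\bigr\|_{\cH} \;\le\; \tfrac{2\sqrt{C_k}}{n}.
\end{equation*}
McDiarmid's inequality then yields $\Pr(f - \EE[f] > t) \le \exp\bigl(-n t^2 / (2 C_k)\bigr)$. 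Setting the right-hand side equal to $\delta$ and solving for $t$ gives $t = \sqrt{2 C_k \log(1/\delta)/n}$, which is the second term. Combining with the bound on $\EE[f]$ completes the proof on an event of probability at least $1-\delta$.

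There is no real obstacle: each step is standard, and the only mild care needed is (i) checking that $\mu_{P_X}$ exists as a Bochner integral in $\cH$ under the boundedness assumption on $k$, and (ii) correctly matching the McDiarmid constants (the per-coordinate bound is $c_i = 2\sqrt{C_k}/n$, so $\sum_i c_i^2 = 4C_k/n$, giving the exponent $-2t^2/(4C_k/n) = -nt^2/(2C_k)$, which is exactly what produces the $\sqrt{2}$ inside the second square root).
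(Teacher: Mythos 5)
Your proof is correct, and each step checks out: the expectation bound $\EE[f^2] = \tfrac{1}{n}(\EE[k(X,X)] - \|\mu_{P_X}\|_{\cH}^2) \le C_k/n$ follows from the cross terms vanishing by independence, the bounded-differences constant $2\sqrt{C_k}/n$ is right, and the McDiarmid exponent does produce exactly the $\sqrt{2C_k\log(1/\delta)/n}$ term. The paper itself states this result as a citation of \citet{tolstikhin2017minimax} without reproducing a proof, and your expectation-plus-McDiarmid argument is precisely the standard derivation used in that reference, so there is nothing to add.
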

This representation of distributions in the RKHS via embedding operations has since led to numerous developments, including extensions to model conditional distributions and independences, with applications to fairness~\citep{tan2020learning}, reinforcement learning~\citep{zhang2021distributional}, domain generalization~\citep{muandet_domain_2013}, distribution regression~\citep{Szabo16:DR}, causal inference~\citep{sejdinovic2024overview}, generative model~\citep{briol_statistical_2019}, feature attributions~\citep{NEURIPS2022_54bb63ea,chau_explaining_2023,hu2022explaining}, and many more. Most connected to the theme of the current paper is the incorporation of epistemic uncertainty in distribution representations, which has mostly been studied from the Bayesian perspective, considered in the works of \citet{flaxman2016bayesian,hsu2019bayesian, hsu2019bayesian2, chau2021bayesimp, chau2021deconditional, zhang2022bayesian}. In comparison, we considered a credal version of kernel mean embeddings of the (finitely generated) credal set instead.

\subsection{Kernel Two-sample Test}
\label{appendix subsec: kernel two-sample test}

\subsubsection{MMD as Test Statistic} 

Recall the goal of a two-sample test is to use iid samples from $X_1,\dots, X_n\overset{iid}{\sim}P_X$ and $Y_1,\dots,Y_m\overset{iid}{\sim}P_Y$ to assess whether there are sufficient evidence to reject the null hypothesis $H_0: P_X = P_Y$. A fundamental component of any hypothesis test is the choice of a test statistic that quantifies the deviation of the observed samples from what is expected under the null. In the case of testing whether $P_X = P_Y$, a common approach is to use some kind of distributional divergence measures that tell us how far $P_Y$ is from $P_X$. In statistics, one popular class of divergence measure is the integral probability metric (IPM,\citep{muller1997integral}), which is defined as,
\begin{defn}[Integral Probability Metric \citep{muller1997integral}]
    Given distributions $P_X, P_Y$ on $\cX$ and a function class $\cF:\cX\to\RR$, an integral probability metric $d$ is defined as 
    \begin{align*} 
    d(P_X, P_Y) = \sup_{f\in\mathcal{F}}\left|\mathbb{E}_{X\sim P_X}[f(X)] - \mathbb{E}_{Y\sim P_Y}[f(Y)]\right|
    \end{align*}
\end{defn}
Popular divergence measures can be recovered based on how we specify the set of functions $\mathcal{F}$, see \citet{sriperumbudur2009integral}. For example, when $\cF = \{f: \|f\|_{\infty} + \|f\|_L\ \leq 1\}$ where $\|f\|_L$ is the Lipschitz semi-norm of $f$, then we have $d$ the Dudley metric. When $\cF =\{f: \|f\|_L\leq 1\}$ we recover the Kantorovich metric. When $\cF=\{f: \|f\|_{\infty} \leq 1\}$, we recover the total variation distance. 

Specifically, when $\cF = \{f\in\mathcal{H}_k: \|f\|_{\cH_k} \leq 1\}$ for some RKHS $\cH_k$, then we have the popular maximum mean discrepancy (MMD), which corresponds exactly to the RKHS norm between the mean embeddings of the kernel of $P_X$ and $P_Y$. To see this, realise,
\begin{align*}
    d(P_X, P_Y) 
        &= \operatorname{MMD}(P_X, P_Y) \\
        &= \sup_{f\in \cH_k: \|f\|_k\leq 1} \left| \EE_{X\sim P_X}[f(X)] - \EE_{Y\sim P_Y}[f(Y)]\right| \\
        &= \sup_{f\in \cH_k: \|f\|_k\leq 1} \left|\langle f, \EE_{X\sim P_X}[k(\cdot, X)] - \EE_{Y\sim P_Y}[k(\cdot, Y)]\rangle\right| \tag{reproducing property}\\
        &= \|\mu_{P_X} - \mu_{P_Y}\|_{\cH_k} \tag{Cauchy Schwarz} \\
\end{align*}
Through the kernel mean embeddings, we can estimate the proximity of $P_X$ and $P_Y$ purely through samples with no parametric assumptions on $P_X, P_Y$. This property underpins the popularity of MMD.

While it is possible to estimate $\operatorname{MMD}(P_X,P_Y)$ through the empirical estimate of the kernel mean embeddings $\mu_{P_X}, \mu_{P_Y}$, this could lead to bias estimation since, by expanding out the terms,
\begin{align*}
    \widehat{\operatorname{MMD}^2_b(P_X,P_Y)} 
        &= \|\frac{1}{n}\sum_{i=1}^nk(\cdot, X_i) - \frac{1}{m}\sum_{j=1}^mk(\cdot, Y_j)\|^2_{\cH_k} \\
        &= \frac{1}{m^2}\sum_{i,j=1}^m k(Y_i,Y_j) -\frac{2}{mn}\sum_{i=1}^n\sum_{j=1}^mk(X_i, Y_j) + \frac{1}{n^2}\sum_{i,j=1}^n k(X_i, X_j)
\end{align*}
the terms $k(X_i,X_i)$ and $k(Y_j, Y_j)$ gives unwanted bias to the estimation. While this bias goes to $0$ asymptotically, it does not disappear in finite sample cases. Instead, people consider the unbiased estimator,
\begin{align*}
    \widehat{\operatorname{MMD}^2(P_X, P_Y)} = \frac{1}{m(m-1)}\sum_{i=1, j\neq i}^m k(Y_i, Y_j) -\frac{2}{mn}\sum_{i=1}^n\sum_{j=1}^m k(X_i, Y_j) + \frac{1}{n(n-1)}\sum_{i=1, j\neq i}^n k(X_i, X_j)
\end{align*}
For simplicity, we denote the random statistic $\widehat{\operatorname{MMD}^2(P_X, P_Y)}$ as $T(\bfZ)$ with $\bfZ :=\{X_1,\dots,X_n,Y_1,\dots, Y_m\}$ and $T(\bfz)$ as the realised statistic with $\bfz := \{x_1,\dots, x_n, y_1,\dots,y_m\}$.

\subsubsection{Permutation Test}
\label{appendix subsubsec: permutation}
In hypothesis testing, the decision to reject the null hypothesis hinges on selecting an appropriate threshold \(\gamma\) such that  
\begin{equation*}
    \operatorname{Pr}(T(\mathbf{Z}) \geq \gamma \mid H_0) \leq \alpha,
\end{equation*}  
where \(\alpha\) is the prespecified Type I error control level, commonly set to \(0.05\) by convention. A standard approach to determining \(\gamma\) involves analyzing the asymptotic distribution of the test statistic under the null hypothesis and selecting the \((1 - \alpha)\)-quantile as the threshold. While this method does not ensure exact Type I error control in finite samples, it provides asymptotic control when the distribution of the test statistic closely approximates its asymptotic counterpart.

However, in the case for $T(\bfZ)$ the unbiased MMD-squared estimate, the asymptotic distribution is an infinite sum of chi-square distributions, as shown in \citet[Theorem 12.]{gretton2012kernel}. We provide a simplified version of the result here for completeness, now for simplicity assume $n = m$,
\begin{theorem}
    Under the conditions satisfied in \citet{gretton2012kernel}[Theorem 12.], it follows that:
    \begin{align*}
        n\widehat{\operatorname{MMD}^2(P_X, P_Y)} \overset{D}{\to}\sum_{i=1}^\infty \lambda_i Z_i^2
    \end{align*}
    where $(Z_i)_{i\geq 1}$ are a collection of iid standard normal random variable and $(\lambda_i)_{i\geq 1}$ are constants depended on the choice of kernel, where $\sum_{i=1}^\infty \lambda_i < \infty$.
\end{theorem}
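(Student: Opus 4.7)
The strategy is to recognise $n\widehat{\operatorname{MMD}^2(P_X, P_Y)}$ as a \emph{degenerate} two-sample second-order U-statistic under $H_0$ and then invoke the classical limit theorem for such statistics, which expresses the limit as a weighted sum of chi-squared random variables indexed by the spectrum of a compact integral operator on $L^2(P)$. Taking $n=m$ for simplicity (the unequal case is analogous up to a pooled rate), set $Z_i := (X_i, Y_i)$ and write $\widehat{\operatorname{MMD}^2(P_X, P_Y)} = \binom{n}{2}^{-1} \sum_{i<j} h(Z_i, Z_j)$ with the symmetric core $h(z,z') = k(x,x') + k(y,y') - k(x,y') - k(x',y)$.

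\textbf{Degeneracy under $H_0$.} Next, I would compute the first Hoeffding projection $h_1(z) := \mathbb{E}[h(z, Z')]$. Under $P_X = P_Y = P$, the four inner expectations collapse pairwise via $\mathbb{E}_{X'}[k(x,X')] = \mathbb{E}_{Y'}[k(x,Y')] = \mu_P(x)$ and similarly for $y$, giving $h_1 \equiv 0$. This is the decisive \emph{degeneracy}: the linear term of the Hoeffding decomposition vanishes, so the proper scaling is $n$ rather than $\sqrt{n}$, and the limiting law is non-Gaussian.

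\textbf{Spectral expansion and limit.} Define the centred kernel $\tilde{k}(x,y) := k(x,y) - \mu_P(x) - \mu_P(y) + \mathbb{E}_{X',Y' \sim P}[k(X',Y')]$, which is bounded, continuous, symmetric and positive-semidefinite by Assumption~\ref{assumption: 1}. Its associated integral operator $T_{\tilde{k}} : L^2(P) \to L^2(P)$, $(T_{\tilde{k}} f)(x) = \int \tilde{k}(x,y) f(y)\,dP(y)$, is then compact, self-adjoint, and trace-class, so Mercer's theorem yields $\tilde{k}(x,y) = \sum_{i\geq 1} \lambda_i \psi_i(x)\psi_i(y)$ with $\{\psi_i\}$ an orthonormal basis of $L^2(P)$, $\lambda_i \geq 0$, and $\sum_i \lambda_i = \mathbb{E}_{X\sim P}[\tilde{k}(X,X)] \leq 4 \sup_x k(x,x) < \infty$ by boundedness of $k$. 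Substituting this Mercer expansion into $h$ and reorganising, the statistic decomposes into a sum of i.i.d.\ mean-zero bilinear forms in the spectral coordinates $(\psi_i(X_j), \psi_i(Y_j))$; the multivariate central limit theorem applied coordinate-by-coordinate produces quadratic-form limits $Z_i^2$ with $Z_i \overset{i.i.d.}{\sim} N(0,1)$, and a standard truncation-plus-tail argument lifts this to $n\widehat{\operatorname{MMD}^2(P_X, P_Y)} \xrightarrow{D} \sum_{i\geq 1} \lambda_i Z_i^2$.

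\textbf{Main obstacle.} The per-coordinate CLT and the almost-sure finiteness of the limit $\sum_i \lambda_i Z_i^2$ (from trace-class) are routine. The principal technical hurdle is the interchange of the infinite sum over eigenmodes with the distributional limit in $n$: one truncates the Mercer series at level $M$, applies the multivariate CLT to the truncated statistic, and then controls the residual $\sum_{i>M} \lambda_i \cdot (\cdot)$ uniformly in $n$ via a Chebyshev/Markov bound that relies crucially on $\sum_i \lambda_i < \infty$. A minor secondary issue is a book-keeping factor arising from translating the two-sample core $h$ into a one-sample spectrum of $T_{\tilde{k}}$, which introduces a constant that must be absorbed into the $\lambda_i$ to match the stated limit; see \citet{gretton2012kernel} for the full exposition.
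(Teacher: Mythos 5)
The paper offers no proof of this statement: it is an explicitly simplified restatement of \citet[Theorem 12]{gretton2012kernel}, cited as such, and your sketch reproduces the standard argument underlying that reference --- first-order Hoeffding degeneracy of the two-sample U-statistic under $H_0$, spectral (Mercer) expansion of the centred kernel $\tilde{k}$, and a truncation-plus-tail argument exploiting $\sum_i \lambda_i < \infty$ --- which is correct in outline. One caveat: for the \emph{unbiased} estimator the classical limit is a \emph{centred} weighted chi-squared sum (the exclusion of diagonal terms contributes an additive shift, so the limit has mean zero, consistent with $\mathbb{E}[\widehat{\operatorname{MMD}^2}]=0$ under the null), and this centring is additive rather than a multiplicative constant absorbable into the $\lambda_i$ as your final remark suggests; since the paper's own simplified statement elides the same centring, this is a shared imprecision rather than a gap in your argument.
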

The key takeaway from this result is that the asymptotic distribution of our test statistic is not accessible. However, we can resort to permutation procedure to estimate the rejection threshold. First, recall $\bfZ = \{X_1,\dots, X_n, Y_1,\dots, Y_n\}$ and define $\cG$ as a subset of the permutation group for $2n$ elements of size $M$. Under the null of $P_X=P_Y$, these samples are exchangeable, meaning the random statistic $T(\bfZ)$ and $T(g\bfZ)$ share the same distribution for any $g\in \cG$. For simplicity, we assume no ties. Now specify a level $\alpha$, the permutation test can be conducted as follows:
\begin{enumerate}
    \item Compute $T(g\bfz)$ for each $g\in\cG$.
    \item Sort $\{T(g\bfz)\}_{g\in\cG}$, such that,
    \begin{align*}
        T(g^{(1)}\bfz) < T(g^{(2)}\bfz),\dots, < T(g^{(M)}\bfz)
    \end{align*}
    \item Now pick the $M - \lfloor M\alpha \rfloor^{th}$ element of this sequence as the rejection threshold, meaning that we reject the null hypothesis if $T(\bfz) \geq T(g^{(M-\lfloor M\alpha \rfloor}\bfz)$.
\end{enumerate}
This procedure provides us a finite-sample guarantee of exact Type I control, that is 
\begin{align*}
    \operatorname{Pr}(\text{Reject }H_0\mid H_0) = \alpha.
\end{align*}
To see why, look at the following derivation, which can also be found in \citet[Chapter 15]{lehmann_testing_2005}. First recall,
\begin{align*}
    \sum_{g\in\cG} \mathbf{1}[T(g\bfz) \geq T(g^{(M - \lfloor M\alpha\rfloor)}\bfz)] &= M\alpha \\
    \implies \EE[\sum_{g\in\cG} \mathbf{1}[T(g\bfZ) \geq T(g^{(M - \lfloor M\alpha\rfloor)}\bfZ)]] &= M\alpha \\
    \implies \sum_{g\in\cG} \EE\left[\mathbf{1}[T(g\bfZ) \geq T(g^{(M-\lfloor M\alpha \rfloor)}\bfZ)]\right] &= M\alpha \\
    \implies \EE\left[\mathbf{1}[T(g\bfZ) \geq T(g^{(M-\lfloor M\alpha \rfloor)}\bfZ)]\right] &= \alpha
\end{align*}
Now due to exchangeability, $\EE\left[\mathbf{1}[T(g\bfZ) \geq T(g^{(M-\lfloor M\alpha \rfloor)}\bfZ)]\right]$ = $\EE\left[\mathbf{1}[T(\bfZ) \geq T(g^{(M-\lfloor M\alpha \rfloor)}\bfZ)]\right]$, therefore 
\begin{align*}
    \operatorname{Pr}(\text{Reject }H_0\mid H_0) = \operatorname{Pr}(T(\bfZ) \geq T(g^{(M - \lfloor M\alpha\rfloor)}\bfZ)) = \alpha.
\end{align*}
As a result, we get finite Type I error control exactly at level $\alpha$.


\end{appendices}

\end{document}